\date{}
\title{\bfseries Combinatorial Bandits with Relative Feedback}%\\in the Plackett-Luce model} 
\author{
Aadirupa Saha\thanks{Indian Institute of Science, Bangalore, India. {\tt aadirupa@iisc.ac.in}}, \and Aditya Gopalan \thanks{Indian Institute of Science, Bangalore, India. {\tt aditya@iisc.ac.in} }
}
\newtheorem{thm}{Theorem}%[section]
\newtheorem{lem}[thm]{Lemma}
\newtheorem{defn}[thm]{Definition}
\newtheorem{rem}{Remark}
\newcommand{\R}{{\mathbb R}}
\newcommand{\N}{{\mathbb N}}
\renewcommand{\P}{{\mathbf P}}
\newcommand{\bhP}{\mathbf {\hat P}}
\newcommand{\W}{{\mathbf W}}
\newcommand{\E}{{\mathbf E}}
\newcommand{\1}{{\mathbf 1}}
\newcommand{\cA}{{\mathcal A}}
\newcommand{\cB}{{\mathcal B}}
\newcommand{\cC}{{\mathcal C}}
\newcommand{\cF}{{\mathcal F}}
\newcommand{\cG}{{\mathcal G}}
\newcommand{\bN}{{\mathbf N}}
\newcommand{\X}{{\mathcal X}}
\newcommand{\sS}{{S_{(k)}}}
\newcommand{\hp}{{\hat p}}
\newcommand{\p}{{\mathbf p}}
\newcommand{\q}{{\mathbf q}}
\newcommand{\sm}{\setminus}
\newcommand{\objbest}{{Winner-regret}}
\newcommand{\objk}{{Top-$k$-regret}}
\def \algmm{{\it MaxMin-UCB}}
\def \algbld{{\it build\_S}}
\def \algkmm{{\it Rec-MaxMin-UCB}}
\def \mnl{{MNL($n,\btheta$)}}
\def \wf{{ Winner Feedback}}
\def \tf{{ Top-$m$-ranking Feedback}}
\def \tk{{ Top-$k$-ranking Feedback}}
\def \rnd{{\bf Random-Exploration}}
\def \prog{{\bf Progress}}
\def \sat{{\bf Saturation}}
\def \bi{{\it Best-Item}}
\def \bk{{\it Top-$k$ Best-Items}}
\def \nr{{\it No-regret}}
\def \rb{{\it Rank-Breaking}}
\newcommand{\ceil}[1]{\Big \lceil{#1} \Big\rceil}
\newcommand{\btheta}{\boldsymbol \theta}
\newcommand{\bSigma}{\boldsymbol \Sigma}
\newcommand{\bnu}{{\boldsymbol \nu}}
\newcommand{\bmu}{{\boldsymbol \mu}}
\newcommand{\bsigma}{\boldsymbol \sigma}
\newcommand{\red}[1]{\textcolor{red}{#1}}
\begin{document}

\maketitle

\vspace{-10pt}
\begin{abstract}
\vspace{-10pt}
%\red{Motivate from combinatorial bandits ------ }
We consider combinatorial online learning with subset choices when only relative feedback information from subsets is available, instead of bandit or semi-bandit feedback which is absolute. Specifically, we study two regret minimisation problems over subsets of a finite ground set $[n]$, with subset-wise relative preference information feedback according to the Multinomial logit choice model. In the first setting, the learner can play subsets of size bounded by a maximum size and receives top-$m$ rank-ordered feedback, while in the second setting the learner can play subsets of a fixed size $k$ with a full subset ranking observed as feedback. For both settings, we devise instance-dependent and order-optimal regret algorithms with regret $O(\frac{n}{m} \ln T)$ and $O(\frac{n}{k} \ln T)$, respectively. We derive fundamental limits on the regret performance of online learning with subset-wise preferences, proving the tightness of our regret guarantees. Our results also show the value of eliciting more general top-$m$ rank-ordered feedback over single winner feedback ($m=1$). Our theoretical results are corroborated with empirical evaluations.
\end{abstract}

\vspace*{-10pt}
\section{Introduction}
\label{sec:intro}
\vspace{-10pt}
Online learning over subsets with absolute or {\em cardinal} utility feedback is well-understood in terms of  statistically efficient algorithms for bandits or semi-bandits with large, combinatorial subset action spaces \citep{chen2013combinatorial,kveton2015tight}. In such settings the learner aims to find the subset with highest value, and upon testing a subset observes either noisy rewards from its constituents or an aggregate reward. In many natural settings, however, information obtained about the utilities of alternatives chosen is inherently relative or {\em ordinal}, e.g., recommender systems \citep{hofmann2013fast, radlinski2008does}, crowdsourcing \citep{chen2013pairwise}, multi-player game ranking \citep{graepel2006ranking}, market research and social surveys \citep{ben1994combining, alwin1985measurement, hensher1994stated}, and in other systems where humans are often more inclined to express comparative preferences.

The framework of dueling bandits \citep{Yue+09, Zoghi+13} represents a promising attempt to model online optimisation with {\em pairwise} preference feedback. However, our understanding of the more general and realistic online learning setting of combinatorial subset choices and {\em subset-wise} feedback is relatively less developed than the case of observing absolute, subset-independent reward information.
%Such settings commonly occur in application domains where it is often easier for customers or patients to give a \emph{single} feedback to a set of offered options (products or medical treatments), as opposed to comparing only two options at a time. 

%\red{Motivate from the direction of combinatorial bandits ------- }

In this work, we consider a generalisation of the dueling bandit problem where the learner, instead of choosing only two arms, selects a subset of (up to) $k \geq 2$ many arms in each round. The learner subsequently observes as feedback a rank-ordered list of $m \geq 1$ items from the subset, generated probabilistically according to an underlying subset-wise preference model -- in this work the Plackett-Luce distribution on rankings based on the multinomial logit (MNL) choice model \citep{Az+12} -- in which each arm has an unknown positive value. Simultaneously, the learner earns as reward the average value of the subset played in the round. The goal of the learner is to play subsets to minimise its cumulative regret with respect to the subset with highest value. 

%\red{Clarify the two different regret objectives we address here: \objbest\, and \objk.}
Achieving low regret with subset-wise preference feedback is relevant in settings where deviating from choosing an optimal subset of alternatives comes with a cost (driven by considerations like revenue) even during the learning phase, but where the feedback information provides purely relative feedback. For instance, consider a beverage company that experimentally develops several variants of a drink (arms or alternatives), a best-selling subset of which it wants to learn to put up in the open market by trial and error. Each time a subset of items is put up, in parallel the company  elicits relative preference feedback about the subset from, say, a team of expert tasters or through crowdsourcing. The  value of a subset can be modelled as the average value of items in it, which is however not directly observable, it being function of the open market response to the offered subset. The challenge thus lies in optimizing the subset selection over time by observing only relative preferences (made precise by the notion of \objk, Section \ref{subsec:regretdefns}).
%Nevertheless, relative preference information that is elicited each time forms a basis  for the company to optimise its selection over time. 

A challenging feature of this problem, with subset plays and relative feedback, is the combinatorially large action and feedback space, much like those in combinatorial bandits   \citep{cesa12,combes15}. The key question here is whether (and if so, how) structure in the subset choice model -- defined compactly by only a few parameters (as many as the number of arms) -- can be exploited to give algorithms whose regret does not explode combinatorially. 
The contributions of this paper are:% (Table \ref{tab:sum_con} presents a compact summary of this work). 

%\textbf{Key Ideas and Results} \red{Motivate why $k$ subset?}
%
%\textbf{Contributions.}

%\begin{enumerate}
\textbf{(1).} We consider the problem of regret minimisation when subsets of items $\{1, \ldots, n\}$ of size at most $k$ can be played, top $m \leq k$ rank-ordered feedback is received according to the MNL model, and the value of a subset is the mean MNL-parameter value of the items in the subset. We propose an upper confidence bound (UCB)-based algorithm, with a new max-min subset-building rule and a lightweight space requirement of tracking $O(n^2)$ pairwise item estimates, showing that it enjoys instance-dependent regret in $T$ rounds of $O(\frac{n}{m} \ln T)$. This is shown to be order-optimal by exhibiting a lower bound of $\Omega(\frac{n}{m} \ln T)$ on the regret for any \nr\ algorithm. Our results imply that the optimal regret does not vary with the maximum subset size ($k$) that can be played, but improves multiplicatively with the length of top $m$-rank-ordered feedback received per round (Sec. \ref{sec:res_bi}). 
%	
%	
%	 that overcomes the $O(n^k)$ combinatorial complexity of the problem by maintaining only $O(n^2)$ pairwise preferences among $n$ items, which in fact is shown to be enough to achieve the optimal regret rate that matches our earlier derived lower bound. Our algorithm is also flexible to the general \tf\, with learning rate $O(\frac{n}{m} \ln T)$ (Thm. \ref{thm:whp_reg_mm}, \ref{thm:exp_reg_mm}), which shows that one indeed can learn the \bi\, $m$ times faster with $m$-rank ordered feedback than just with the \wf.% as also intuitive.	
%	 and analyse the fundamental performance limit of the problem (Thm. \ref{thm:lb_wiwf}, \ref{thm:lb_wiwf2}): Our derived regret lower bound turns out to be $\Omega(\frac{n}{\hat \Delta} \ln T)$ ($n$ being the number of items, and $T$ is the duration of play), where $\hat \Delta$ denotes the instance dependent problem complexity that characterizes the difference in \emph{`strengths'} between the best and second best item of the MNL model. Interestingly, the bound is independent of  subsetsize $k$ with just \wf\ (only one item is sampled from the MNL model and revealed from the selected subset), but scales as $\frac{1}{m}$ with $m$-rank ordered feedback where the learner can observe a sequence of $m$ items sampled without replacement from the MNL model on the selected subset (Thm. \ref{thm:lb_witf}).
	
\textbf{(2).} We consider a related regret minimisation setting in which subsets of size exactly $k$ must be played, after which a ranking of the $k$ items is received as feedback, and where the zero-regret subset consists of the $k$ items with the highest MNL-parameter values. In this case, our analysis reveals a fundamental lower bound on regret of $\Omega(\frac{n-k}{k\Delta_{(k)}} \ln T)$, where the problem complexity now depends on the parameter difference between the $k^{th}$ and $(k+1)^{th}$ best item of the MNL model. We follow this up with a subset-playing algorithm (Alg. \ref{alg:kmm}) for this problem -- a recursive variant of the earlier UCB-based algorithm -- with a matching, optimal regret guarantee of $O\Big(\frac{(n-k)\ln T}{k\Delta_{(k)}}\Big)$ (Sec. \ref{sec:res_tk}).
%\end{enumerate}

We also provide extensive numerical evaluations supporting our theoretical findings. Due to space constraints, a discussion on  related work appears in the Appendix.
%Our theoretical findings are also supported with numerical experiments on different datasets.

\vspace*{-10pt}
\section{Preliminaries and Problem Statement}
\label{sec:prelims}
\vspace*{-10pt}
{\bf Notation.} We denote by $[n]$ the set $\{1,2,...,n\}$. For any subset $S \subseteq [n]$, we let $|S|$ denote the cardinality of $S$. % with $S(i)$ being its $i$-{th} element, $\forall i \in [|S|]$.
When there is no confusion about the context, we often represent (an unordered) subset $S$ as a vector (or ordered subset) $S$ of size $|S|$ according to, say, a fixed global ordering of all the items $[n]$. In this case, $S(i)$ denotes the item (member) at the $i$th position in subset $S$.   
For any ordered set $S$, $S(i:j)$ denotes the set of items from position $i$ to $j$, $i<j$, $\forall i,j \in [|S|]$.
$\bSigma_S = \{\sigma \mid \sigma$ is a permutation over items of $ S\}$, where for any permutation $\sigma \in \Sigma_{S}$, $\sigma(i)$ denotes the element at the $i$-{th} position in $\sigma, i \in [|S|]$. We also denote by $\bSigma_S^m$ the set of permutations of any $m$-subset of $S$, for any $m \in [k]$, i.e. $\Sigma_S^m := \{ \Sigma_{S'} \mid S' \subseteq S, \, |S'| = m \}$. 
$\1(\varphi)$ is generically used to denote an indicator variable that takes the value $1$ if the predicate $\varphi$ is true, and $0$ otherwise. %We use $\1_n$ to denote an $n$-dimensional vector of all $1$'s. 
%Given a set $[n]$, for any two $i,j \in [n]$, $i \succ j$ denotes the event $i$ is preferred over $j$.
%$x \vee y$ denotes the maximum of $x$ and $y$, 
$Pr(A)$ is used to denote the probability of event $A$, in a probability space that is clear from the context.

\begin{defn}[Multinomial logit probability model]
\label{def:mnl_thet}
A Multinomial logit (MNL) probability model \mnl, specified by positive parameters $(\theta_1, \ldots, \theta_n)$, is a collection of probability distributions $\{Pr(\cdot | S): S \subset [n], S \neq \emptyset \}$, where for each non-empty subset $S \subseteq [n]$, $Pr(i| S) = \frac{\theta_i \1(i \in S)}{\sum_{j \in S} \theta_j}$ $\forall 1 \leq i \leq n$. The indices $1, \ldots, n$ are referred to as `items'  or `arms' .

%
%Our work particularly focuses on MNL model so for the ease of notation we denote 
%\mnll\, denotes the \mnl\, model over $n$ items with parameters $(\ln \theta_1,\ln \theta_2,\ldots,\ln \theta_n)$, where $\theta_i > 0, \, \forall i\in[n]$. {Since MNL model is positive scale invariant (see Eqn. \eqref{eq:prob_PL}), and $\theta_i > 0, \, \forall i\in[n]$, without loss of generality we will also assume $\max_{i \in [n]}\theta_i = 1$.} Further we assume an underling ordering of the arms $1 = \theta_1 \ge \ldots\ \theta_n$ (which is of course unknown to the learner).
\end{defn}
%Since the \mnl\, probability model is invariant to positive scaling of its parameters $\btheta \equiv (\theta_i)_{i=1}^n$, throughout this work we make the standard assumption that $\max_{i \in [n]} \theta_i = 1$ for any \mnl\, model. 
\vspace*{-7pt}
\noindent \textbf{(i). \bi:} Given an \mnl\, instance, we define the \bi\, $a^* \in [n]$, to be the item with highest MNL parameter if such a unique item exists, i.e. $a^* := \arg\max_{i \in [n]} \theta_i$. %So assuming $\theta_1 > \theta_2 \ldots \ge \theta_n$, $a^* = 1$. \\
%\vspace*{2pt}

\noindent \textbf{(ii). \bk:} Given any instance of \mnl\, we define the \bk\, $\sS \subseteq [n]$, to be the set of $k$ distinct items with highest MNL parameters if such a unique set exists, i.e. for any pair of items $i \in \sS$ and $j \in [n] \sm \sS$, $\theta_i > \theta_j$, such that $|\sS| = k$. For this problem, we assume $\theta_1 \ge \theta_2 \ge \ldots \theta_k > \theta_{k+1} \ge \ldots \ge \theta_n$, implying $\sS = [k]$. We also denote $\Delta_{(k)} = \theta_k - \theta_{k+1}$.\\
\vspace*{-20pt}
%\noindent {\bf Note.} For simplicity of exposition, we assume for all results \bi\, and \bk\, are unique. 
 
%\red{Can omit?} Of course other families of discrete choice models can be obtained by imposing different probability distributions over the utility scores $X_i$, e.g. if $(X_1,\ldots X_n) \sim \cN(\btheta,\boldsymbol \Lambda)$ are jointly normal with mean $\btheta = (\theta_1,\ldots \theta_n)$ and covariance $\boldsymbol \Lambda \in \R^{n \times n}$, then the corresponding RUM-based choice model reduces to the {\it Multinomial Probit (MNP)}. Unlike MNL, though, the choice probabilities $Pr(i|S)$ for the MNP model do not admit a closed-form expression \citep{RUMegs}.

%\section{Problem Setup}
%\label{sec:prb_setup}

%We consider the online (sequential) regret minimization problem of finding the set of best item(s) of \mnl \, model from subsetwise preference feedback. At each round $t$, the learner plays a subset $S_t$ of size $1,2,\ldots$ upto $k$ many distinct items followed by which it receives a noisy feedback of a subset of `good items' generated from the underlying \mnl\, model. %In particular we consider the following feedback models:
\vspace{-0pt}
\subsection{Feedback models}
\label{sec:feed_mod}
\vspace{-5pt}
%By feedback model, we mean the information received (from the `environment') once the learner plays a subset $S \subseteq [n]$ of at most $k$ distinct items. We consider the following feedback models in this work:
An {\em online learning algorithm} interacts with a \mnl\, probability model over $n$ items  as follows. At each round $t = 1, 2, \ldots$, the algorithm plays a subset $S_t \subseteq [n]$ of (distinct) items, with $|S_t| \leq k$, upon which it receives stochastic feedback defined as: %whose distribution is governed by the probability distribution $Pr(\cdot | S_t)$. We specifically consider the following structures for feedback received upon playing a subset $S$:

%\subsection{Feedback models}
%\label{sec:feed_mod}
%We consider the following feedback models to characterize the nature of the information received by the learner
%(from the `environment')
%upon playing a subset $S \subseteq [n]$ of at most $k$ distinct items:

1. \textbf{\wf:} 
In this case, the environment returns a single item $J$ drawn independently from probability distribution $Pr(\cdot | S)$, i.e., $Pr(J = j|S) = \frac{{\theta_j}}{\sum_{\ell \in S} \theta_\ell} \, \forall j \in S$.

\iffalse%%%% ----------------------------
\textbf{Full-ranking Feedback:} The environment returns a full ranking $\bsigma \in \bSigma_{S}$, drawn from the probability distribution
$%\begin{align}
\label{eq:prob_rnk1}
Pr(\bsigma = \sigma|S) = \prod_{i = 1}^{|S|}\frac{{\theta_{\sigma^{-1}(i)}}}{\sum_{j = i}^{|S|}\theta_{\sigma^{-1}(j)}}, \; \sigma \in \bSigma_S.
$ %\end{align} 
This is equivalent to picking item $\bsigma^{-1}(1) \in S$ according to winner feedback from $S$, then picking $\bsigma^{-1}(2)$ from $S \setminus \{\bsigma^{-1}(1)\}$ in the same way, and so on, until all elements from $S$ are exhausted---in other words, successively sampling $|S|$ winners from $S$ according to the PL model, without replacement. 
\fi %%%% -----------------------

2. \textbf{\tf\, ($ 1 \leq m \leq k-1$):} Here, the environment returns an ordered list of $m$ items sampled without replacement from the \mnl\, probability model on $S$. More formally, the environment returns a partial ranking $\bsigma \in \bSigma_{S}^m$, drawn from the probability distribution
$%\begin{align}
\label{eq:prob_rnk1}
Pr(\bsigma = \sigma|S) = \prod_{i = 1}^{m}\frac{{\theta_{\sigma^{-1}(i)}}}{\sum_{j \in S\sm\sigma^{-1}(1:i-1)}\theta_{j}}, \; \sigma \in \bSigma_S^m.
$ %\end{align} 
This can also be seen as picking an item $\bsigma^{-1}(1) \in S$ according to {\it{\wf}} from $S$, then picking $\bsigma^{-1}(2)$ from $S \setminus \{\bsigma^{-1}(1)\}$, and so on, until all elements from $S$ are exhausted. When $m = 1$, \tf\, is the same as \wf.
To incorporate sets with $|S|< m$, we set $m = \min(|S|,m)$.
Clearly this model reduces to {\wf} for $m = 1$, and a full rank ordering of the set $S$ when $m=|S|-1$.

\vspace{-5pt}
\subsection{Decisions (Subsets) and Regret}
\label{subsec:regretdefns}
\vspace{-5pt} 
%Our objective is to identify the set of ``best item(s)" of the \mnl\, model under above feedback models. More formally consider a \mnl \, model with $\theta_1 > \theta_2 \ge \ldots \ge \theta_n$,
%the goal is to identify the sets of items that minimizes learner's cumulative regret over a time horizon $T \in \N$ with respect to a certain `benchmark' (set of best item(s)). Note that we Precisely, we formulate the following two different kind of benchmark sets for two different objectives: 
%1. \objbest\, and 2. \objk. The corresponding setups for these two objectives are described below:
We consider two regret minimisation problems in terms of their decision spaces and notions of regret:

%\begin{enumerate}

\textbf{(1). \objbest:} 
This is motivated by learning to identify the \bi\, $a^*$. At any round $t$, the learner can play sets of size $1,\ldots, k$,  %either $k$ distinct items or a singleton $S_t \subseteq [n]$, $\forall t \in [T]$; (Item $1$ under our problem setup since by assumption $\theta_1 > \theta_i, \, \forall i \in [n]$
but is penalised for playing any item other than $a^*$. % at any instant. %Hence ideally the learner would like to play $S_t = \{1\}, \, \forall t \in [T]$---the tradeoff in choosing the subset sizes ($|S_t|$) lies in exploration-vs-exploration dilemma that leads to identifying $a^*$ as quickly as possible without incurring too much of a penalty. 
Formally, we define the learner's instantaneous regret at round $t$ as $r_t^1 = \sum_{i \in S_t}\frac{(\theta_{a^*} - \theta_i)}{|S_t|}$, and %
% \red{\in [0,1]}. 
%where $\theta_{S_t} := \sum_{i \in S_t}\theta_i$
its cumulative regret from $T$ rounds  as
$ %\begin{align}
%\label{eq:reg_win}
R_T^1 = \sum_{t = 1}^{T} r_t^1 = \sum_{t=1}^T\Big(\sum_{i \in S_t}\frac{(\theta_{a^*} - \theta_i)}{|S_t|}\Big),
$ %\end{align}

%\begin{align*}
%R_T^1 = \sum_{t = 1}^{T} r_t^1 = \sum_{t=1}^T\Big(\sum_{i \in S_t}\frac{(\theta_1 - \theta_i)}{\theta_{S_t}}\Big),
%\end{align*}
The learner aims to play sets $S_t$ to keep the regret as {\em low} as possible, i.e., to play only the singleton set $S_t = \{a^*\}$ over time, as that is the only set with $0$ regret.
The instantaneous \objbest\, can be interpreted as a shortfall in {value} of the played set $S_t$ with respect to $\{a^*\}$, where the value of a set $S$ is simply the mean parameter value $\frac{\sum_{i \in S}\theta_i}{|S|}$ of its items.

\begin{rem}
Assuming $\theta_{a^*} = 1$ (we can do this without loss of generality since the MNL model is positive scale invariant, see Defn. \ref{def:mnl_thet}), it is easy to note that for any item $i \in [n]\sm\{a^*\}$ $p_{a^*i} := Pr(a^*|\{a^*,i\}) = \frac{\theta_{a^*}}{\theta_{a^*} + \theta_i} \ge \frac{1}{2} + \frac{\theta_{a^*} - \theta_i}{4}$ (as $\theta_i < \theta_{a^*}, \, \forall i$). Consequently, the \objbest\, as defined above, can be further bounded above (up to constant factors) as
$ %\begin{align}
%\label{eq:reg_win2}
\tilde R_T^1 = \sum_{t=1}^T\sum_{i \in S_t}\frac{(p_{a^*i} - \frac{1}{2})}{|S_t|},
$ %\end{align}
which, for $k=2$, is standard dueling bandit regret \citep{Yue+12,Zoghi+14RUCB,DTS}.
\end{rem}

%\begin{rem}
%Note that, as we assumed $\theta_n = 1$ (see Defn. \ref{def:mnl_thet}), for any $S \subseteq [n]$, $\theta_{S} = \sum_{i \in S}\theta_i \ge |S|$, and hence at any round $t$, the per trial regret $r_t^1$ can be further upper bounded as $r_t^1 \le \sum_{i \in S_t}\frac{(\theta_1 - \theta_i)}{|S_t|}$, which leads to another notion of regret given by:
%
%\begin{align}
%\label{eq:reg_win}
%R_T^1 = \sum_{t = 1}^{T} r_t^1 = \sum_{t=1}^T\Big(\sum_{i \in S_t}\frac{(\theta_1 - \theta_i)}{|S_t|}\Big),
%\end{align}
%
%We will work with above definition of $R_T^1$ for the rest of our analysis.
%\end{rem}

\begin{rem}
An alternative notion of instantaneous regret is the shortfall in the preference probability of the best item $a^*$ in the selected set $S_t$, i.e., $
\tilde r_t^1 = \sum_{i \in S_t}\Big( Pr(a^*|S_t \cup \{a^*\}) - Pr(i|S_t \cup\{a^*\})\Big)$.
However, if all the MNL parameters are bounded, i.e., $\theta_i \in [a,b], \, \forall i \in [n]$, then 
$
\frac{1}{b}\bigg(\sum_{i \in S_t} \frac{(\theta_{a^*} - \theta_i)}{|S_t|+1}\bigg) \le \tilde r_t^1 \le \frac{1}{a}\bigg(\sum_{i \in S_t} \frac{(\theta_{a^*} - \theta_i)}{|S_t|+1}\bigg)$,
implying that these two notions of regret, $r_t^1$ and $\tilde r_t^1$, are only constant factors apart.
\end{rem}

\textbf{(2). \objk:} 
This setting is motivated by learning to identify the set of \bk\, $\sS$ of the \mnl\, model. Correspondingly, we assume that the learner can play sets of $k$ distinct items at each round $t \in [T]$. The instantaneous regret of the learner, in this case, in the $t$-th round is defined to be $r_t^k = \left(\frac{\theta_{\sS} - \sum_{i \in S_t}\theta_i}{k}\right)$, where $\theta_{\sS} = {\sum_{i \in \sS}\theta_i }$. Consequently, the cumulative regret of the learner at the end of round $T$ becomes
$ %\begin{align}
\label{eq:reg_k}
R_T^k = \sum_{t = 1}^{T} r_t^k = \sum_{t=1}^T\left(\frac{\theta_{\sS} - \sum_{i \in S_t}\theta_i}{k}\right).
$ %\end{align}
%So a practical way of defining regret in this case is to penalize every item picked by the learner with respect to the average of benchmark set $[m]$, which boils down to the following notion of per trial regret $r_t^m = \left(\frac{\sum_{i \in S_t }(\bar \theta_{[m]} - \theta_i)}{|S_t|}\right)$, where $\bar \theta_{[m]} = \frac{\sum_{i \in [m] }\theta_i }{m}$. Thus the cumulative regret of the learner at the end of round $T$ is defined as:
%\begin{align}
%\label{eq:reg_k}
%R_T^m = \sum_{t = 1}^{T} r_t^m = \sum_{t=1}^T\left(\frac{\sum_{i \in S_t }(\bar \theta_{[m]} - \theta_i)}{|S_t|}\right),
%\end{align}
As with the \objbest, the \objk\, also admits a natural interpretation as the shortfall in {\em value} of the set $S_t$ with respect to the set $\sS$, with value of a set being the mean $\theta$ parameter of its arms.
%\end{enumerate}

%\red{**Motivate**: How it is different from $(\epsilon,\delta)$-PAC setup as the later does not demand pin-pointing the true item which could be challenging for instances with items of almost similar strength.}

\iffalse %%%% ------------------------------------------

\begin{rem}[Alternate regret definition]
Instead of the absolute score difference in terms of the $\btheta$ parameter, an alternate way of defining $r_t^1$ could be to penalize the learner in terms of the pairwise probabilities, defined as:
\begin{align*}
r_t^1 & = \sum_{i \in S_t}\frac{\big( Pr(1 | \{i,1\}) - 0.5 \big)}{|S_t|},
%r_t^k & = \left(\frac{\sum_{i \in [k] }\theta_i - \sum_{i \in S_t }\theta_i}{k}\right)
\end{align*}
where going by Defn. \ref{def:mnl_thet}, $Pr(1 | \{i,1\})$ simply denotes the probability of item $1$ being preferred over item $i$, when played in together. However the two notions are equivalent as the per item penalty term in both objectives are within constant factor of each other in the following sense:

%\newcounter{aadirupa}
%\setcounter{aadirupa}{3}
%\usecounter{aadirupa}

\begin{restatable}[]{lem}{eqvreg}
Assuming $\theta_1 = 1$ and for any $\theta_i \in [0,1]$, 
\[
\frac{(\theta_1-\theta_i)}{4} \le \big( Pr(1 | \{i,1\}) - 0.5 \big) \le \frac{(\theta_1-\theta_i)}{2}
\]
\end{restatable}

Note that when $k = 2$, above regret objective exactly boils down to that of {\it Dueling Bandits} as studied in \cite{Zoghi+14RUCB, Komiyama+15}.
\end{rem}

\fi %%%% ------------------------------------------

\vspace{-10pt}
\section{Minimising \objbest}
\label{sec:res_bi}
\vspace{-7pt}
We first consider the problem of minimising \objbest. We start by analysing a regret lower bound for the problem, followed by designing an optimal algorithm with matching upper bound. 
\vspace{-10pt}
%  the problem--the main objective here is to understand where lies the difference in the performance limits between a general $k$-subsetwise feedback preference and a dueling feedback (i.e. for $k=2$). 
%We give the first instance dependent regret lower bound for regret minimization from subsetwise preferences.
%The main finding is a regret lower bound that does not exhibit an improvement with larger playable subset sizes $k$ (Thm. \ref{thm:lb_wiwf}), and, in fact,  one that is order-wise the same, in terms of the total number of arms $n$ and time horizon $T$, as that of the corresponding dueling version $(k=2)$. 
%This implies that there is no reduction in hardness of learning problem with the flexibility of playing $k$-sized subsets compared to the pairwise feedback $(k=2)$ case for \wf. \red{Two lines.}
%We even analyze \objbest\, allowing the learner to play subsets of variable size $1,2,\ldots$ upto $k$, even that does not lead to any improvement in the learning rate in terms of subsetsize $k$ (Thm. \ref{thm:lb_wiwf2}). 
%We next analyse the hardness of the \objbest\, minimisation problem with \tf\, and show a reduced lower bound by a factor $\frac{1}{m}$ over \wf\, (Thm. \ref{thm:lb_witf}). Following this Sec. \ref{sec:alg_wiwf} presents an algorithm with matching regret $O(\frac{n}{m}\ln T)$ guarantee.% for the general \tf\, (Thm. \ref{thm:whp_reg_mm}).

\subsection{Fundamental lower bound on \objbest}
\label{sec:lb_bi}
\vspace{-5pt}
Along the lines of \cite{lai1985asymptotically}, we define the following consistency property of any reasonable online  learning algorithm in order to state a fundamental lower bound on regret performance.

\begin{defn}[\nr\, algorithm]
\label{def:con}
An online learning algorithm $\cA$ is defined to be a \nr\, algorithm for \objbest\, if for each problem instance \mnl\,, the expected number of times $\cA$ plays any suboptimal set $S \subseteq [n]$ is sublinear in $T$, i.e.,  $\forall S \neq \arg\max_{i} \theta_i:$ $\E_{\btheta}[N_S(T)] = o(T^\alpha)$, for some $\alpha \in [0,1]$ (potentially depending on $\btheta$), where $N_S(T) :=  \sum_{t=1}^T \1(S_t = S)$ is the number plays of set $S$ in $T$ rounds. $\E_{\btheta}[\cdot]$ denotes expectation under the algorithm and \mnl\, model.
\end{defn}

%We are now in a position to state our first result on the lowest \objbest\, achievable by any \nr\, algorithm with \wf. 

\begin{restatable}[\objbest\, Lower Bound]{thm}{lbwiwf}
\label{thm:lb_wiwf}
For any \nr\, learning algorithm $\cA$ for \objbest\, that uses \wf, and for any problem instance \mnl\, s.t. $a^* = \underset{i \in [n]}{\arg\max}\theta_i$, the expected regret incurred by $\cA$ satisfies
$%\[
\underset{T \to \infty }{\lim \inf}\, \E_{\btheta}\Big[\frac{R_T^{1}(\cA)}{\ln T}\Big] \ge \frac{\theta_{a^*}}{\Big(\underset{i \in [n]\sm\{a^*\}}{\min}\frac{\theta_{a^*}}{\theta_i} - 1\Big)}(n-1)
$. %\]
\end{restatable}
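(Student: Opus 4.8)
The plan is to prove this lower bound by the classical change-of-measure argument of \cite{lai1985asymptotically}, specialised to subset-wise \wf. I fix the true model \mnl\ with best arm $a^*$ and, using scale invariance, set $\theta_{a^*}=1$. For each suboptimal arm $i\neq a^*$ I would build a \emph{confusing} instance $\btheta^i$ that coincides with $\btheta$ in every coordinate except the $i$-th, which I raise to some $\theta_i'>\theta_{a^*}$ so that $i$ becomes the unique best arm under $\btheta^i$. The key point is that $\btheta$ and $\btheta^i$ induce \emph{identical} \wf\ laws on every subset $S$ with $i\notin S$, so the only way $\cA$ can ever tell the two worlds apart is by offering subsets containing $i$; yet a \nr\ algorithm must behave very differently in the two worlds (playing $\{a^*\}$ almost always under $\btheta$ and almost never under $\btheta^i$), and this tension is what forces a logarithmic number of plays of sets containing $i$.

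To make this quantitative I would fix the event $A=\{N_{\{a^*\}}(T)>T/2\}$ and write $N_i(T)=\sum_{t=1}^T\1(i\in S_t)$. The \nr\ property (Definition~\ref{def:con}) gives $\Pr_{\btheta}(A)\to 1$ while $\Pr_{\btheta^i}(A)\to 0$ polynomially fast (Markov's inequality applied to $\E_{\btheta^i}[N_{\{a^*\}}(T)]=o(T^\alpha)$), so the binary relative entropy obeys $kl(\Pr_{\btheta}(A),\Pr_{\btheta^i}(A))=\Omega(\ln T)$. Because the log-likelihood ratio between $\btheta$ and $\btheta^i$ accumulates only on rounds with $i\in S_t$, the divergence-decomposition (Wald-type) identity yields
\[
\E_{\btheta}\big[N_i(T)\big]\cdot\sup_{S\ni i}KL\big(Pr_{\btheta}(\cdot\mid S)\,\big\|\,Pr_{\btheta^i}(\cdot\mid S)\big)\;\ge\;kl\big(\Pr_{\btheta}(A),\Pr_{\btheta^i}(A)\big)\;=\;\Omega(\ln T),
\]
which lower-bounds $\E_{\btheta}[N_i(T)]$ by $\ln T$ divided by the worst-case single-round divergence.

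Next I would evaluate this divergence for the MNL winner law. Writing $s=\sum_{\ell\in S}\theta_\ell$ and $\delta=\theta_i'-\theta_i$, a direct computation for the two categorical winner distributions gives $KL=\ln(1+\delta/s)+\tfrac{\theta_i}{s}\ln(\theta_i/\theta_i')$; the second term is negative, so $KL\le\ln(1+\delta/s)\le\delta/s\le\delta/\theta_i$ using $s\ge\theta_i$ whenever $i\in S$. Driving $\theta_i'\downarrow\theta_{a^*}$ sends $\delta\to\theta_{a^*}-\theta_i$ and bounds $\sup_{S\ni i}KL$ by $\theta_{a^*}/\theta_i-1$ — the same scale that controls the pairwise winner probability $p_{a^*i}=\theta_{a^*}/(\theta_{a^*}+\theta_i)$ highlighted in the Remark. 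Hence $\E_{\btheta}[N_i(T)]\ge(1-o(1))\ln T/(\theta_{a^*}/\theta_i-1)$ for every $i\neq a^*$.

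Finally I would convert these counts into regret. Writing $R_T^1=\sum_{i\neq a^*}(\theta_{a^*}-\theta_i)\sum_{t}\tfrac{\1(i\in S_t)}{|S_t|}$, combining the $n-1$ per-arm information bounds, and simplifying the arm-dependent rates to the uniform worst case $\min_{i\neq a^*}(\theta_{a^*}/\theta_i-1)$ produces a bound of the stated form $\theta_{a^*}(n-1)/\big(\min_{i}\theta_{a^*}/\theta_i-1\big)$. I expect the real difficulty to lie in exactly this last conversion, for two reasons. First, the instantaneous \objbest\ is \emph{averaged} by $1/|S_t|$, so a learner might hope to acquire the information about arm $i$ while paying little per-round regret by burying $i$ inside a large subset; the saving grace is that the single-round divergence $\delta/s$ shrinks with $s$ at the same rate, so large subsets buy proportionally less information, and these two effects must be traded off carefully to keep the bound from degrading. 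Second, a single subset can simultaneously serve to distinguish several arms, so the $n-1$ information constraints are coupled; making the additive $(n-1)$ scaling rigorous requires arguing that they cannot all be discharged cheaply by one family of large plays.
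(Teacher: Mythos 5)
Your high-level route --- single-coordinate perturbations $\btheta^i$ that make $i$ the best arm, the observation that only sets containing $i$ carry information, and a change-of-measure argument driving a $kl$ term of order $\ln T$ --- is the same as the paper's, which implements it via Lemma~\ref{lem:gar16}. But two of your concrete steps have genuine gaps, and they are exactly the places where the paper's proof does its real work.

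First, you collapse the divergence decomposition into $\E_{\btheta}[N_i(T)]\cdot\sup_{S\ni i}KL_S \ge kl(\cdot,\cdot)$, where $KL_S$ denotes the divergence between the \wf\ laws on $S$. This inequality is valid but too weak to yield the theorem: the \objbest\ charges arm $i$ only $(\theta_{a^*}-\theta_i)/|S_t|$ per round, so a lower bound on the raw count $N_i(T)=\sum_t\1(i\in S_t)$ lower-bounds the regret only after division by $k$ --- the learner can bury $i$ inside size-$k$ sets, satisfy your count bound, and pay a $k$-fold smaller regret. You flag this yourself as ``the real difficulty,'' but it is not a finishing detail: it is unrepairable from your inequality, because taking $\sup_{S\ni i}KL_S$ (attained at small sets) has already destroyed the per-set structure you would need. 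The paper never forms that supremum; it keeps the full sum $\sum_{S\ni i}\E_{\btheta}[N_S(T)]\,KL_S \ge kl(\cdot,\cdot)$ and proves, on its constructed instance (all suboptimal coordinates equal to $\theta$, hence $\theta_S\ge\theta|S|$), that $KL_S\le \Lambda'^2/\big(\theta|S|(\theta_{a^*}+\epsilon)\big)$. The $1/|S|$ inside the KL matches the $1/|S|$ inside the regret, so the decomposition directly lower-bounds $\sum_{S\ni i}\E[N_S(T)]/|S|$, which is precisely the regret contribution of arm $i$ up to the gap factor. That alignment \emph{is} the theorem.

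Second, your KL estimate keeps only the first-order term: you discard the negative term $\frac{\theta_i}{s}\ln(\theta_i/\theta_i')$ and retain $KL_S\le\delta/s$. But the two terms cancel to first order; the true divergence is $KL_S=\frac{\delta^2}{2s}\big(\frac{1}{\theta_i}-\frac{1}{s}\big)+O(\delta^3)=O\big(\delta^2/(s\,\theta_{a^*})\big)$, and the paper preserves this quadratic behaviour via the bound $KL(\p,\q)\le\sum_{x}p^2(x)/q(x)-1$ (see Eq.~\eqref{eq:lb_wiwf_kl}). Since the KL sits in the denominator of the information bound, your overestimate costs a factor $\theta_{a^*}/\delta$: you obtain only $\E[N_i(T)]\gtrsim\frac{\theta_i}{\theta_{a^*}-\theta_i}\ln T$, so even granting the count-to-regret conversion above, the per-arm contribution is $(\theta_{a^*}-\theta_i)\E[N_i(T)]\approx\theta_i\ln T$ and the total is $\big(\sum_{i\neq a^*}\theta_i\big)\ln T$, which stays bounded as the gaps shrink. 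The theorem's constant $\theta_{a^*}\big(\min_{i\neq a^*}\theta_{a^*}/\theta_i-1\big)^{-1}=\theta_{a^*}\theta_{(2)}/(\theta_{a^*}-\theta_{(2)})$, with $\theta_{(2)}:=\max_{i\neq a^*}\theta_i$, diverges as the gap vanishes; your bound cannot produce that dependence. (A minor further discrepancy: you argue on an arbitrary instance $\btheta$, whereas the paper only exhibits one hard instance with all suboptimal parameters equal --- that uniformity is also what makes the $\theta_S\ge\theta|S|$ alignment in the first point clean.)
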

\vspace{-10pt}
\textbf{Note:} This is a problem-dependent lower bound with $\theta_{a^*}{\Big(\underset{i \in [n]\sm\{a^*\}}{\min}\frac{\theta_{a^*}}{\theta_i} - 1\Big)^{-1}}$ denoting a complexity or hardness term (`gap') for regret performance under any \emph{`reasonable learning algorithm'}.

\begin{rem}
%\red{Rephrase for regret setting.}
The result suggests the regret rate with only \wf\, cannot improve with $k$, uniformly across all problem instances. Rather strikingly, there is no reduction in hardness (measured in terms of regret rate) in learning the \bi\, using \wf\, from large ($k$-size) subsets as compared to using pairwise (dueling) feedback ($k = 2$). It could be tempting to expect an improved learning rate with subset-wise feedback as the number of items being tested per iteration is more ($k \ge 2$), so information-theoretically one may expect to `learn more' about the underlying model per subset query. On the contrary, it turns out that it is intuitively `harder' for a good (i.e., near-optimal) item to prove its competitiveness in just a single winner draw against a large population of its $k-1$ other competitors, as compared to winning over just a single competitor for $k=2$ case. 
%Our result establishes this formally: The advantage of investigating larger $k$-sized sets gets nullified by the drawback of requiring to query any particular set for larger number of times.% to actually identify its best item. %A somewhat more formal, but heuristic, explanation for this phenomenon is that the number of bits of information that a single winner draw from a size-$k$ subset provides is $O(\ln k)$, which is not significantly larger than when $k > 2$, thus an algorithm cannot accumulate significantly more information per round compared to the pairwise case. 
\end{rem}

\noindent \textbf{Proof sketch.}
The proof of the result is based on the change of measure technique for bandit regret lower bounds presented by, say,  \citet{Garivier+16}, that uses the information divergence between two nearby instances \mnl\, (the original instance) and MNL$(n, \btheta')$ (an alternative instance) to quantify the hardness of learning the best arm in either environment. %
%The argument is  based on the following change-of-measure result: 
In our case, each bandit instance corresponds to an instance of the \mnl\, problem with the arm set containing all subsets of $[n]$ of size upto $k$: $\cA = \{S \subseteq [n] ~|~ |S| \in [k]\}$. The key of the proof relies on carefully crafting a true instance, with optimal arm $a^* = 1$, and a family of `slightly perturbed' alternative instances $\{\bnu^a: a \neq 1\}$, each with optimal arm $a \neq 1$, chosen as: $%\begin{align*}
\textbf{(1). True Instance: } \text{MNL}(n,\btheta^1): \theta_1^1 > \theta_2^1 = \ldots = \theta_n^1 = \theta  ~~(\text{for some } \theta \in \R_+),
$ %\end{align*}
, and for each suboptimal item $a \in [n]\setminus \{1\}$, the $%\begin{align*}
\textbf{(2). Altered instances: }    \text{MNL}(n,\btheta^a): \theta_a^a = \theta_1^1 + \epsilon = \theta + (\Lambda + \epsilon); ~\theta_i^a = \theta_i^1, ~~\forall i \in [n]\sm \{a\}
%\theta_a > \theta_1 > \theta_2 \ge \ldots  \theta_{a-1} \ge \theta_{a+1} \ge \ldots \ge \theta_n,
$ %\end{align*} %
for some $\epsilon > 0$.
The result of Thm. \ref{thm:lb_wiwf} now follows by applying Lemma \ref{lem:gar16} on pairs of problem instances $(\nu, \nu'^{(a)})$ with suitable upper bounds on the divergences. %
%, with suitably upper bounding the KL-divergence terms in the right hand side of the Lem. \ref{lem:gar16} inequality by
%$%\begin{align*}
%KL(p^1_S, p^a_S) \le  \frac{ (\Lambda+\epsilon)^2}{\theta|S|(\theta_1^1 + \epsilon)},
%$ %\end{align*}
%and further lower bounding the left hand side setting $Z = \frac{N_{a}(T)}{T}$ along with the \nr\, property of $\cA$ as:
%$%\begin{align*}
%\lim_{T \to \infty}\frac{kl(\E_{\btheta^1}[Z], \E_{\btheta^a}[Z])}{\ln T} \ge (1-\alpha).
%$ %\end{align*}
%Finally rewriting
%$%\begin{align*}
%\E_{\btheta^1}[R_T^1(\cA)] = \E_{\btheta^1}\Big[\sum_{t=1}^T\sum_{i \in S_t}\frac{(\theta_1^1 - \theta_i^1)}{|S_t|}\Big] = \sum_{a=2}^{n}\sum_{\{S \in A \mid a \in S\}} \E_{\btheta^1}[N_S(T)]\frac{(\theta_1^1 - \theta)}{|S|}, 
%$ %\end{align*}
%and combining all the results leads to the desired bound. 
(Complete proof given in Appendix \ref{app:lb_wiwf}).
$\hfill \square$

\textbf{Note:} We also show an alternate version of the regret lower bound of $\Omega\Big( \frac{n}{\big(\underset{i \in [n]\sm\{a^*\}}{\min} p_{a^*,i} - 0.5 \big)}\ln T \Big)$ in terms of pairwise preference-based instance complexities (details are moved to Appendix \ref{app:alt_wilb}).

\iffalse %%%%%%%%%%%%%%%%%%%%%%%%%%%%%%%%%%%%%%%%%%%%%%%%
\begin{rem}
\label{rem:lb_wiwf}
\red{Remove!!}
The instance dependent complexity term of Thm. \ref{thm:lb_wiwf} can also be expressed as
\[
\E_{\btheta}\Big[R_T^{1}(\cA)\Big] \ge \frac{\theta_{a^*}\theta_{b}}{\Big(\theta_{a^*} - \theta_b\Big)}(n-1)\ln T, 
\]
where $b : = \underset{i \in [n]\sm\{a^*\}}{\arg\max}\theta_i$ is the `second best' item of the \mnl\, model. 
\end{rem}
\fi %%%%%%%%%%%%%%%%%%%%%%%%%%%%%%%%%%%%%%%%%%%%%%%%%%%

%One can also derive a variant of the above result having a regret lower bound with the `gap-complexity term' depending on pairwise win-loss probabilities of the \mnl \, model.% as shown in Thm. \ref{thm:lb_wiwf2}.
%vspace*{-5pt}

\noindent {\bf Improved regret lower bound with \tf.} In contrast to the situation with only winner feedback, the following (more general) result shows a reduced lower bound when \tf\, is available in each play of a subset, opening up the possibility of improved learning (regret) performance when ranked-order feedback is available.

\begin{restatable}[Regret Lower Bound: \objbest\, with \tf]{thm}{lbwitf}
\label{thm:lb_witf}
For any \nr\, algorithm $\cA$ for the  \objbest\, problem with \tf, there exists a problem instance \mnl\, such that the expected \objbest\,  incurred by $\cA$ satisfies
$ %\[
\underset{T \to \infty }{\lim \inf}\,\E_{\btheta}\Big[\frac{R_T^{1}(\cA)}{\ln T}\Big] \ge \frac{\theta_{a^*}}{\Big(\underset{i \in [n]\sm\{a^*\}}{\min}\frac{\theta_{a^*}}{\theta_i} - 1\Big)}\frac{(n-1)}{m}, 
$ %\]
where as in Thm. \ref{thm:lb_wiwf}, $\E_{\btheta}[\cdot]$ denotes expectation under the algorithm and the MNL model \mnl, and recall $a^* := \arg\max_{i \in [n]}\theta_i$.% denotes unique the \bi\, of \mnl\, model.
\end{restatable}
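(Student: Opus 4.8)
The approach mirrors the change-of-measure argument behind \Thm{thm:lb_wiwf}, reusing verbatim its family of instances and altering only the information-divergence computation to account for \tf. Keep the true instance $\text{MNL}(n,\btheta^1)$ with $\theta^1_1 > \theta^1_2 = \cdots = \theta^1_n = \theta$, and, for each suboptimal $a \neq 1$, the altered instance $\text{MNL}(n,\btheta^a)$ that boosts only coordinate $a$ to $\theta^a_a = \theta^1_1 + \epsilon$ (so $a$ becomes optimal while $\btheta^1, \btheta^a$ agree everywhere except at $a$). Applying \Lem{lem:gar16} to each pair $(\btheta^1, \btheta^a)$ through the \tf\ observation channel gives, for every $a \neq 1$,
\begin{equation*}
\sum_{S \,:\, a \in S} \E_{\btheta^1}[N_S(T)]\; \mathrm{KL}^{(m)}_S(\btheta^1 \,\|\, \btheta^a) \;\ge\; (1 - o(1)) \ln T ,
\end{equation*}
where $\mathrm{KL}^{(m)}_S$ denotes the divergence between the length-$m$ ranking laws on $S$; subsets avoiding $a$ contribute nothing, since their feedback laws coincide under the two instances.

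The entire gain of the factor $1/m$ comes from bounding $\mathrm{KL}^{(m)}_S$. I would use the sequential Plackett--Luce description of \tf: a list $\sigma \in \bSigma_S^m$ is produced by $m$ successive winner draws on the shrinking sets $S = S_1 \supseteq S_2 \supseteq \cdots \supseteq S_m$, with $S_i = S \sm \sigma^{-1}(1:i-1)$. The chain rule for KL divergence then yields
\begin{equation*}
\mathrm{KL}^{(m)}_S(\btheta^1 \,\|\, \btheta^a) \;=\; \sum_{i=1}^{m} \E_{\btheta^1}\!\big[\, \mathrm{KL}^{(1)}_{S_i}(\btheta^1 \,\|\, \btheta^a) \,\big],
\end{equation*}
a sum of (at most $m$) single-winner divergences, each of the exact type already bounded in the proof of \Thm{thm:lb_wiwf}. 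Because $\btheta^1$ and $\btheta^a$ differ only at $a$, a summand is nonzero only while $a$ still survives in $S_i$; carrying over the \wf\ estimate $\mathrm{KL}^{(1)}_{S'} \le C/|S'|$ (with $C$ the winner-feedback hardness constant, obtained as $\epsilon \to 0$) for each such set then gives $\mathrm{KL}^{(m)}_S \le m\,C/|S|$ --- an $m$-fold inflation of the per-query information relative to the single-winner channel.

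Substituting this into the divergence inequality and using the identity $\sum_{S:a\in S} \E[N_S(T)]/|S| = \E\big[\sum_t \1(a\in S_t)/|S_t|\big] =: M_a(T)$ turns it into $M_a(T) \ge (1-o(1))\ln T / (mC)$ for every $a \neq 1$. The remaining steps are exactly those of \Thm{thm:lb_wiwf}: write the true-instance regret as $\E[R_T^1] = \Lambda \sum_{a \neq 1} M_a(T)$ with $\Lambda = \theta^1_1 - \theta$ (each suboptimal arm carries gap $\Lambda$, weighted by $1/|S_t|$), sum the $n-1$ per-arm bounds, and let $\epsilon \to 0$ to identify $\Lambda/C$ with the hardness term $\theta_{a^*}/(\min_{i \neq a^*} \theta_{a^*}/\theta_i - 1)$, yielding $\liminf_{T\to\infty} \E_{\btheta}[R_T^1/\ln T] \ge \frac{\theta_{a^*}}{(\min_{i \neq a^*} \theta_{a^*}/\theta_i - 1)}\frac{n-1}{m}$.

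The main obstacle is the middle step, and two things there need care. First, the chain-rule bound must genuinely collapse to a clean factor $m$ even though the candidate sets $S_i$ shrink across draws and a draw stops contributing once $a$ appears, so one must control the probability that $a$ survives into the later (smaller, more informative) draws rather than naively summing worst-case single-draw divergences. Second --- and this is what keeps the bound free of the maximum size $k$ --- the $1/|S'|$ dilution of the single-winner divergence must be matched exactly against the $1/|S_t|$ normalisation of the regret through the $M_a(T)$ identity above; it is this cancellation, together with the $m$-fold divergence inflation, that produces a bound improving multiplicatively in $m$ while remaining independent of $k$.
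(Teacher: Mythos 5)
Your proposal follows essentially the same route as the paper's own proof: the identical true/altered instance family, the change-of-measure inequality of \Lem{lem:gar16}, the chain rule decomposing the \tf\ divergence into $m$ sequential winner-draw divergences that vanish once $a$ has been drawn, and the final assembly via the identity $\E[R_T^1]=\Lambda\sum_{a\neq 1}M_a(T)$ with $\epsilon\to 0$. The subtlety you flag as the main obstacle is resolved in the paper exactly as you anticipate: under $\btheta^1$ the probability that $a$ survives to the $i$-th draw is $(|S|-i+1)/|S|$, which cancels the $1/(|S|-i+1)$ in the conditional single-draw divergence, so each of the $m$ chain-rule terms is bounded by the same $C/|S|$ and the clean factor $m$ (Eq.~\eqref{eq:lb_witf_kl2}) follows.
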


\noindent \textbf{Proof sketch.} 
The main observation made here is that the KL divergences for \tf\, are $m$ times compared to the case of \wf, which we show using chain rule for KL divergences \citep{cover2012elements}: 
$%\begin{align*}
 KL(p^1_S, p^a_S) = KL(p^1_S(\sigma_1), p^a_S(\sigma_1)) +  \sum_{i=2}^{m}KL(p^1_S(\sigma_i \mid \sigma(1:i-1)), p^a_S(\sigma_i \mid \sigma(1:i-1)))$,
%\end{align*}
where $\sigma_i = \sigma(i)$ and $KL( P(Y \mid X),Q(Y \mid X)): = \sum_{x}Pr\Big( X = x\Big)\big[ KL( P(Y \mid X = x),Q(Y \mid X = x))\big]$ denotes the conditional KL-divergence. 
Using this, along with the upper bound on KL divergences for \wf\, (derived for Thm. \ref{thm:lb_wiwf}), we show that $KL(p^1_S,p^a_S) \le \frac{ m\Delta_a'^2}{\theta_S^1(\theta_1^1 + \epsilon)}, \, \forall a \in [n]\sm\{1\}$ \big(where $\theta_S = \sum_{i \in S}\theta_i$ and $\Delta_a' = O(\theta_{a^*} - \theta_a)$\big), which precisely gives the $\frac{1}{m}$-factor reduction in the lower bound compared to \wf. The bound now can be derived following a similar technique used for Thm. \ref{thm:lb_wiwf} (details in Appendix \ref{app:lb_witf}).
$\hfill \square$.

\begin{rem}
\label{rem:lb_witf}
Thm. \ref{thm:lb_witf} shows a $\Omega\big(\frac{n\ln T}{m}\big)$ lower bound on regret, containing the instance-dependent constant term $\frac{\theta_{a^*}}{\Big(\underset{i \in [n]\sm\{a^*\}}{\min}\frac{\theta_{a^*}}{\theta_i} - 1\Big)}$ which exposes the hardness of the regret minimisation problem in terms of the {`gap'} between the best $a^*$ and the second best item $\min_{i \in [n]\sm\{a^*\}}\theta_i$: $(\theta_{a^*}-\max_{i \in [n]\sm\{a^*\}}\theta_j)$. The $\frac{1}{m}$ factor improvement in learning rate with \tf\, can be intuitively interpreted as follows: revealing an $m$-ranking of a $k$-set is worth about $\ln \left({k \choose m} m!\right) = O(m \ln k)$ bits of information, which is about $m$ times as large compared to revealing a single winner. %, which is in fact what is also indicated by the $m$-factor blow up of the KL-divergence term in the proof sketch above.% in a somewhat more technical manner.%, yielding an $m$ times accelerated learning rate.% as follows from comparing the bounds of Thm. \ref{thm:lb_wiwf} and \ref{thm:lb_witf}.  
\end{rem}

\vspace{-10pt}
\subsection{An order-optimal algorithm for \objbest}% \, with \tf}
\label{sec:alg_wiwf}
\vspace{-10pt}
We here show that above fundamental lower bounds on \objbest\, are, in fact, achievable with carefully designed online learning algorithms.
We design an upper-confidence bound (UCB)-based algorithm for \objbest\, with \tf\ model based on the following ideas:

%\begin{enumerate}
\textbf{(1). Playing sets of only $(m+1)$ sizes:} It is enough for the algorithm to play subsets of size either $(m+1)$ (to fully exploit the \tf) or $1$ (singleton sets), and not play a singleton unless there is a high degree of confidence about the single item being the best item. % (since playing a singleton does not lead to any feedback information).

\textbf{(2). Parameter estimation from pairwise preferences:} It is possible to play the subset-wise game just by maintaining pairwise preference estimates of all $n$ items of the \mnl\, model using the idea of \rb--the idea of extracting pairwise comparisons from (partial) rankings and applying estimators on the obtained pairs treating each comparison independently  over the received subset-wise feedback---this is possible owning to the independence of irrelevant attributes (IIA) property of the MNL model (Defn. \ref{def:iia}). 
%This idea of {playing subsetwise game with only pairwise estimates} helps sidestepping the combinatorial nature of the problem, that otherwise would have required maintaining estimates of up to $O(m!k^mn^k)$ possible ranking probabilities, which is what the learner actually observes as \tf \, from $O(n)$ items.

%\red{a few lines to explain the Rank breaking concept? Also add refs to rank breaking since it is the first time it is described here}

\textbf{(3). A new UCB-based max-min set building rule for playing large sets (\algbld):} Main novelty of \algmm\, lies in its underlying set building subroutine (Alg. \ref{alg:bld}, Appendix \ref{app:alg_wiwf} ), that constructs $S_t$ by applying a recursive max-min strategy on the UCB estimates of empirical pairwise preferences.% discussed above.
%\end{enumerate}

\vspace*{5pt}
\textbf{Algorithm description.} \algmm\, maintains an pairwise preference matrix $\bhP \in [0,1]^{n \times n}$, whose $(i,j)$-th entry $\hp_{ij}$ records the empirical probability of $i$ having beaten $j$ in a pairwise duel, and a corresponding upper confidence bound $u_{ij}$ for each pair $(i,j)$.
%$u_{ij} = \hp_{ij} + \sqrt{\frac{\alpha\ln t}{N(i,j)}}$. 
At any round $t$, it plays a subset $S_t \subseteq [n], \, |S_t| \in [k]$ using the \emph{Max-Min} set building rule \algbld\, (see Alg. \ref{alg:bld}), receives \tf\, $\sigma_t \in \Sigma_{S_t}^m$ from $S_t$, and updates the $\hp_{ij}$ entries of pairs in $S_t$ by applying \rb\, (Line $10$). 
The set building rule \algbld\, is at the heart of \algmm\, which builds the subset $S_t$ from a set of potential Condorcet winners ($\cC_t$) of round $t$: By recursively picking the strongest opponents of the already selected items using a max-min selection strategy on $u_{ij}$. The complete algorithm is presented in Alg. \ref{alg:mm}, Appendix \ref{app:alg_wiwf}.
 
\noindent 
The following result establishes that \algmm\, enjoys $O(\frac{n}{m}\ln T)$ regret with high probability. 

\begin{restatable}[\algmm: High Probability Regret bound]{thm}{whpmm}
\label{thm:whp_reg_mm}
Fix a time horizon $T$ and $\delta \in (0,1)$, $\alpha > \frac{1}{2}$. With probability at least $(1-\delta)$, the regret of \algmm~ for \objbest\, with \tf \, satisfies
$%\begin{align*}
R_{T}^1 \le \bigg(2\Big[ \frac{2\alpha n^2}{(2\alpha-1)\delta} \Big]^{\frac{1}{2\alpha-1}} + 2D\ln 2D\bigg)\hat \Delta_{\max} + \frac{ \ln T}{m+1}\sum_{i = 2}^{n}(D_{\max}  \hat \Delta_i), 
$ %\end{align*} 
where $\forall i \in [n] \sm \{a^*\}$, $\hat \Delta_i = (\theta_{a^*} - \theta_i)$, $\Delta_i = \frac{\theta_{a^*}-\theta_i}{2(\theta_{a^*}+\theta_i)}$, $\hat \Delta_{\max} = \max_{i \in [n] \sm \{a^*\}}\hat \Delta_i$  $D_{1i} = \frac{4\alpha}{\Delta_i^2}$, $D := \sum_{i < j}D_{ij}$, $D_{\max} = \max_{i \in [n] \sm \{a^*\}}D_{1i}$. %assuming $a^* = 1$. %p_{1i}-0.5
\end{restatable}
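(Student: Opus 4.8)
The plan is to carry out a standard UCB-style regret decomposition, but with the two twists specific to \algmm: the pairwise estimates are fed by \rb\ of the \tf, and the subsets are assembled by the recursive max-min rule \algbld. I would organise the argument into a concentration step that defines a ``good event,'' a regret decomposition that isolates the per-item contribution, and a counting step that bounds how often each suboptimal item enters the played subset.

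First I would fix the anytime confidence radius $u_{ij}(t)-\hp_{ij}(t)=\sqrt{\alpha \ln t / N_{ij}(t)}$ used by the algorithm, where $N_{ij}(t)$ is the number of rank-broken comparisons of the pair $(i,j)$ up to round $t$. Because the IIA property of the MNL model (Defn.~\ref{def:iia}) makes every comparison extracted by \rb\ an unbiased Bernoulli$(p_{ij})$ sample, $\hp_{ij}(t)$ is an empirical mean of such samples and Hoeffding's inequality gives $Pr(|\hp_{ij}(t)-p_{ij}| > \sqrt{\alpha \ln t/N_{ij}(t)}) \le 2 t^{-2\alpha}$. A union bound over the at most $n^2$ ordered pairs together with the tail sum $\sum_{t \ge t_0} 2 n^2 t^{-2\alpha} \le 2n^2 t_0^{1-2\alpha}/(2\alpha-1)$, which converges precisely because $\alpha > \tfrac12$, lets me take $t_0 = [2\alpha n^2/((2\alpha-1)\delta)]^{1/(2\alpha-1)}$ (the factor $\alpha$ absorbing the precise tail constant), so that the event $\cE = \{ \forall t\ge t_0,\ \forall i,j:\ p_{ij}\in[\hp_{ij}(t)\pm (u_{ij}(t)-\hp_{ij}(t))]\}$ holds with probability at least $1-\delta$. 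The first $t_0$ rounds, and any round on which $\cE$ fails, are then charged trivially at the maximum per-round penalty $\hat\Delta_{\max}$, producing the leading $2t_0\hat\Delta_{\max}$ contribution.

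Next, conditioning on $\cE$, I would write the cumulative regret as a per-item sum. Since \algmm\ only ever plays subsets of size $m+1$ or the singleton winner-candidate, and a correctly identified singleton contributes zero regret, the regret equals $R_T^1 = \frac{1}{m+1}\sum_{i\neq a^*}\hat\Delta_i\,\tilde N_i(T)$, where $\tilde N_i(T)$ is the number of rounds in which the suboptimal item $i$ is placed in a size-$(m+1)$ subset. The factor $\tfrac{1}{m+1}$ here is exactly the source of the $\tfrac1m$ improvement in the final rate: each round tests $m+1$ items simultaneously and dilutes the instantaneous penalty accordingly. It therefore remains to bound $\tilde N_i(T)$.

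The crux -- and the step I expect to be the main obstacle -- is to show that on $\cE$ each suboptimal item $i$ is inserted into the played set only $O(\tfrac{\ln T}{\Delta_i^2})=O(D_{1i}\ln T)$ times, up to a $T$-independent correction. In ordinary UCB this is immediate (``$i$ played $\Rightarrow$ its index beats that of $a^*$ $\Rightarrow$ $N_{1i}$ is small''), but the recursive max-min construction \algbld\ couples the arms: item $i$ may be inserted not because it is itself under-explored, but because it is selected as the strongest unbeaten opponent of some other under-explored item. I would argue that any such insertion still forces at least one relevant pairwise count, among the comparisons created along the max-min chain, to lie below the threshold $4\alpha\ln t/\Delta^2$, since otherwise the UCBs on $\cE$ would already certify $a^*$ as Condorcet-dominant and exclude $i$; establishing this exclusion monotonicity for the opponent-selection rule is precisely the delicate part. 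Summing these ``under-explored'' events over rounds and pairs gives $\tilde N_i(T)\le D_{\max}\ln T$, where replacing each $D_{1i}$ by $D_{\max}=\max_i D_{1i}$ yields the clean stated form; because the threshold carries a $\ln t$ that is itself controlled by the running number of exploration rounds, resolving the resulting self-referential inequality (total exploration rounds $\lesssim D\ln(\text{total})$ with $D=\sum_{i<j}D_{ij}$) contributes the $T$-independent residual of order $2D\ln 2D$, again charged at $\hat\Delta_{\max}$. Substituting $\tilde N_i(T)\le D_{\max}\ln T$ into the per-item sum produces $\frac{\ln T}{m+1}\sum_{i=2}^n D_{\max}\hat\Delta_i$, and adding the two burn-in terms gives the claimed bound.
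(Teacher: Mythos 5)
Your proposal follows essentially the same route as the paper's proof: the burn-in/good-event construction with the same $f(\delta)=\big[2\alpha n^2/((2\alpha-1)\delta)\big]^{1/(2\alpha-1)}$, the per-item decomposition with the $\tfrac{1}{m+1}$ dilution, the ``under-explored pair'' argument for the max-min rule, the self-referential inequality producing the $2D\ln 2D$ residual, and the $D_{\max}\ln T$ per-item cap correspond one-to-one to the paper's \rnd, \prog\ and \sat\ phases (Lemmas \ref{lem:cdelta}, \ref{lem:topm_comp}, \ref{lem:n_1i}, \ref{lem:n_ij}, \ref{lem:t_hat_delta}, \ref{lem:mm_reg_postT0}). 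In particular, the ``exclusion monotonicity'' you flag as the delicate step is exactly what Lemmas \ref{lem:n_1i} and \ref{lem:n_ij} establish, by the same contradiction you sketch.

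One step, as you wrote it, does not stand on its own: the concentration argument. The comparisons produced by \rb\ are \emph{not} i.i.d. Bernoulli$(p_{ij})$ samples to which Hoeffding applies directly --- comparisons extracted from a single round's ranking are correlated with one another, and the sets $S_t$ (hence which pairs are compared, and how often) are chosen adaptively from past outcomes, so the count $n_{ij}(t)$ and the comparison outcomes are entangled. IIA gives the correct \emph{conditional} law of each extracted comparison, but converting that into a deviation bound for the ratio estimate $\hp_{ij}(t)$ with a random, adaptively determined denominator requires the coupling result the paper imports as Lemma \ref{lem:pl_simulator}, which delivers precisely the $e^{-2v\eta^2}$ tail uniformly over this adaptive structure. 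With that lemma substituted for ``Hoeffding,'' your union bound and tail-sum computation go through verbatim and reproduce Lemma \ref{lem:cdelta}; the remainder of your plan then coincides with the paper's argument.
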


\noindent \textbf{Proof sketch.}
The proof hinges on analysing the entire run of \algmm\, by breaking it up into $3$ phases: (1). \rnd\, (2). \prog, and (3). \sat.

%\red{comment on how similar/different the analysis is compared to RUCB for dueling?}
\vspace{1pt}
%\begin{enumerate}
\textbf{(1).} \rnd: This phase runs from time $1$ to ${f(\delta)} = \Big[ \frac{2\alpha n^2}{(2\alpha-1)\delta} \Big]^{\frac{1}{2\alpha-1}}$, for any $\delta \in (0,1)$, such that for any $t> f(\delta)$, the upper confidence bounds $u_{ij}$ are guaranteed to be correct for the true values $p_{ij}$ for all pairs $(i,j) \in [n]\times[n]$ (i.e. $p_{ij} \le u_{ij}$), with high probability $(1-\delta)$. 
%The formal claim is given in Lem. \ref{lem:cdelta}--the proof of this is adapted from a similar analysis used by \cite{Zoghi+14RUCB} which is possible to apply for our algorithm due to \rb\ update of $\hp_{ij}$s that exploits IIA property of the MNL model and Lem. \ref{lem:pl_simulator}. 

\textbf{(2).} \prog: After $t > f(\delta)$, the algorithm can be viewed as starting to explore the `confusing items', appearing in $\cC_t$, as potential candidates for the \bi\, $a^*$, and trying to capture $a^*$ in a holding set $\cB_t$. At any time, the set $\cB_t$ is either empty or a singleton by construction, and once $a^* \in \cB_t$ it stays their forever (with high probability). The \prog\, phase ensures that the algorithm explores fast enough so that within a constant number of rounds, $\cB_t$ captures $\{a^*\}$.

\textbf{(3).} \sat: This is the last phase from time $T_0(\delta)+1$ to $T$. As the name suggests, \algmm\, shows relatively stable behavior here, mostly playing $S_t = \{a^*\}$ and incurring almost no regret. 
%The only times that it plays an extra set of $m$ suboptimal elements along with $a^*$ are when these elements beat $\cB_t = \{a^*\}$ with  sufficiently high confidence in terms of $u_{i a^*}$. But the number of such suboptimal rounds of plays are limited, to precisely $O(D_{1i}\ln T)$ per item (Lem. \ref{lem:mm_reg_postT0}), which finally yields the desired regret rate of \algmm.

%\end{enumerate}
%The complete proof is given in Appendix  \ref{app:whp__reg_mm}.
%$\hfill \square$

%\begin{rem}
%\ref{Refine: It is important to note that in the above proof, although the}
%\end{rem}

Although Thm. \ref{thm:whp_reg_mm} shows a $(1-\delta)$-high probability regret bound for \algmm\, it is important to note that the algorithm itself does not require to take the probability of failure $(\delta)$ as input. As a consequence, by simply integrating the bound obtained in  Thm. \ref{thm:whp_reg_mm} over the entire range of $\delta \in [0,1]$, we get an expected regret bound of \algmm\, for \objbest\, with \tf:
%a corollary, we also get the first logarithmic bounds on all higher moments of the cumulative regret for all times

\vspace*{2pt}
\begin{restatable}[]{thm}{expmm}
\label{thm:exp_reg_mm}
The expected regret of \algmm~ for \objbest\, with \tf \, is:
$%\begin{align*}
\E[R_{T}^1] \le \Bigg( 2\Big[ \frac{2\alpha n^2}{(2\alpha-1)} \Big]^{\frac{1}{2\alpha-1}}\frac{2\alpha-1}{\alpha-1} + 2D\ln 2D \Bigg)\hat \Delta_{\max} + \frac{ \ln T}{m+1}\sum_{i = 2}^{n}(D_{\max}  \hat \Delta_i) 
$, in $T$ rounds. %\end{align*} 
\end{restatable}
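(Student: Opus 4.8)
The plan is to convert the high-probability guarantee of Theorem~\ref{thm:whp_reg_mm} into an in-expectation bound by integrating its failure probability over the whole range $\delta \in (0,1)$. The crucial enabling fact (noted just before the statement) is that \algmm\ never takes $\delta$ as input, so for a \emph{single} run of the algorithm Theorem~\ref{thm:whp_reg_mm} simultaneously supplies, for every $\delta \in (0,1)$, a deterministic threshold $B(\delta)$ with $Pr(R_T^1 > B(\delta)) \le \delta$. I would write $B(\delta) = C\,\delta^{-1/(2\alpha-1)} + R_0$, isolating the only $\delta$-dependent piece $C := 2\big[\tfrac{2\alpha n^2}{2\alpha-1}\big]^{1/(2\alpha-1)}\hat\Delta_{\max}$ from the $\delta$-free remainder $R_0 := 2D\ln(2D)\,\hat\Delta_{\max} + \tfrac{\ln T}{m+1}\sum_{i=2}^n D_{\max}\hat\Delta_i$, which then passes through the argument untouched.

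Since $R_T^1 \ge 0$ (each $r_t^1$ is a nonnegative value-shortfall), I would start from the layer-cake identity $\E[R_T^1] = \int_0^\infty Pr(R_T^1 > x)\,dx$ and split the integral at $x = B(1) = C + R_0$. On $[0, B(1)]$ the integrand is bounded trivially by $1$, contributing at most $B(1)$. On $[B(1),\infty)$ the map $\delta \mapsto B(\delta)$ is a decreasing bijection onto $[B(1),\infty)$, so I can invert it: for $x > B(1)$, setting $\delta(x) = \big(C/(x-R_0)\big)^{2\alpha-1}$ gives $B(\delta(x)) = x$ and hence $Pr(R_T^1 > x) \le \delta(x)$. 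Substituting $u = x - R_0$ reduces the tail to $\int_{C}^\infty (C/u)^{2\alpha-1}\,du$.

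This last integral is where the analysis becomes delicate: it converges if and only if $2\alpha - 1 > 1$, i.e.\ $\alpha > 1$, which is exactly what is flagged by the factor $\tfrac{2\alpha-1}{\alpha-1}$ blowing up as $\alpha \downarrow 1$ (and is a genuine strengthening of the $\alpha > \tfrac12$ assumption of Theorem~\ref{thm:whp_reg_mm}, needed here for a finite expectation). Evaluating it gives a term proportional to $C/(\alpha-1)$. Equivalently, and more cleanly, one can invoke the integration-by-parts identity $\E[R_T^1] \le \int_0^1 B(\delta)\,d\delta$ (valid because $\delta B(\delta)\to 0$ as $\delta \to 0$ under $\alpha > 1$), which reduces everything to $\int_0^1 \delta^{-1/(2\alpha-1)}\,d\delta = \tfrac{2\alpha-1}{2(\alpha-1)}$. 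Either route yields $\E[R_T^1] \le R_0 + C\cdot\tfrac{2\alpha-1}{2(\alpha-1)}$, which is dominated by the claimed bound $R_0 + C\cdot\tfrac{2\alpha-1}{\alpha-1}$; the constant in the statement is thus a (mildly loose) upper bound on what the integral actually produces.

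The main obstacle is not computational but conceptual: (i) justifying that the $\delta$-indexed family of tail bounds applies to one and the same realisation of the algorithm — which rests entirely on the $\delta$-obliviousness of \algmm\ — and (ii) correctly tracking the convergence condition $\alpha > 1$ together with the resulting $\tfrac{1}{\alpha-1}$ dependence in the leading constant. Once these are in place the remaining steps are routine one-variable calculus, and the $\delta$-independent contributions (the additive $R_0$, and in particular the order-optimal $\tfrac{\ln T}{m+1}\sum_{i=2}^n D_{\max}\hat\Delta_i$ term) are inherited verbatim from Theorem~\ref{thm:whp_reg_mm}.
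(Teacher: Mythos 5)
Your proposal is correct and follows essentially the same route as the paper: the paper's proof simply integrates the $\delta$-dependent term $2f(\delta) = 2\big[\tfrac{2\alpha n^2}{(2\alpha-1)\delta}\big]^{1/(2\alpha-1)}$ of Theorem~\ref{thm:whp_reg_mm} over $\delta \in [0,1]$ (yielding the factor $\tfrac{2\alpha-1}{2\alpha-2}$), which is exactly the computation your layer-cake/integration-by-parts argument makes rigorous. Your write-up is in fact more careful than the paper's on two points it leaves implicit: the formal passage from the $\delta$-indexed family of tail bounds (valid for a single run by $\delta$-obliviousness) to $\E[R_T^1] \le \int_0^1 B(\delta)\,d\delta$, and the convergence requirement $\alpha > 1$ hidden in the factor $\tfrac{2\alpha-1}{\alpha-1}$.
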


\vspace*{-5pt}
\begin{rem}
\label{rem:ub_witf}
This is an upper bound on expected regret of the same order as that in the lower bound of Thm. \ref{thm:lb_wiwf}, which shows that the algorithm is essentially regret-optimal.
From Thm. \ref{thm:exp_reg_mm}, note that the first two terms $\Bigg( 2\Big[ \frac{2\alpha n^2}{(2\alpha-1)} \Big]^{\frac{1}{2\alpha-1}}\frac{2\alpha-1}{\alpha-1} + 2D\ln 2D \Bigg)\hat \Delta_{\max}$ of $\E[R_{T}^1]$ are essentially  instance specific constants, its only the third term which makes expected regret $O\Big( \frac{n\ln T}{m} \Big)$ which is in fact optimal in terms of its dependencies on $n$ and $T$ (since it matches the $\Omega\Big (\frac{n \ln T}{m}\Big)$ lower bound of Thm. \ref{thm:lb_witf}). Moreover the problem dependent complexity terms $(D_{\max}  \hat \Delta_i) = \frac{16\alpha(\theta_{a^*} - \theta_i)(\theta_{a^*}+\max_{j \in [n]\sm\{a^*\}}\theta_j)^2}{(\theta_{a^*}-\max_{j \in [n]\sm\{a^*\}}\theta_j)^2} \le \frac{64\alpha(\theta_{a^*} - \theta_i)(\theta_{a^*})}{(\theta_{a^*}-\max_{j \in [n]\sm\{a^*\}}\theta_j)^2} = O\Big(\frac{\theta_{a^*}}{(\theta_{a^*}-\max_{j \in [n]\sm\{a^*\}}\theta_j)}\Big)$, also brings out the inverse dependency on the `gap-term' $(\theta_{a^*}-\max_{j \in [n]\sm\{a^*\}}\theta_j)$ as discussed in Rem. \ref{rem:lb_witf}.
\end{rem}

%%%%%%%%%%%%%%%%%%%%%%%%%%%%%%%%%%%%%%%%%%%%%%%%%%

%\subsection{A Thompson sampling approach for \objbest}
%\label{sec:algts_witf}
%\red{To add??}

\vspace{-10pt}
\section{Minimising \objk}
\label{sec:res_tk}
\vspace{-10pt}
In this section, we study the problem of minimising \objk\, with \tk. As before, we first derive a regret lower bound, for this learning setting, %the problem complexity turns out to have only logarithmic dependency on the subsetsize $k$ (Thm \ref{thm:lb_tktf}),  
of the form $\Omega\big( \frac{n-k}{k\Delta_{(k)}}\ln T \big)$ (recall $\Delta_{(k)}$ from Sec. \ref{sec:prelims}).%, with $\Delta_{(k)}$ being a problem-dependent complexity term that measures the `gap' between $k^{th}$ and $(k+1)^{th}$ best item.
We next propose an UCB based algorithm (Alg. \ref{alg:kmm}) for the same, along with a matching upper bound regret analysis (Thm. \ref{thm:whp_reg_kmm},\ref{thm:exp_reg_kmm}) showing optimality of our proposed algorithm. %by showing a matching upper bound of $O(\frac{n}{m}\ln (kT))$.

\vspace{-7pt}
\subsection{Regret lower bound for \objk\, with \tk}
\label{sec:lb_tktf}
\vspace{-4pt}
%Before proceeding to the main result of this section (Thm. \ref{thm:lb_tktf}), we introduce some few concepts that would proved to be useful for the analysis.
%For the analysis in this section, we assume that the underlying \mnl\, model is such that $\theta_1 \ge \ldots \ge \theta_k > \theta_{k+1} \ge \ldots \ge \theta_n$, and denote $\Delta_{(k)} = \theta_k - \theta_{k+1}$. %Next consider the definition below:

%\begin{defn}[\symm\, Algorithm \red{(call `Unbiased'?)}]
%\label{def:symm}
%An \nr\, algorithm $\cA$ is said to be {\em symmetric} if its output is insensitive to the specific labelling of items, i.e., if for any \mnl\, model, bijection $\phi: [n] \to [n]$ and ranking $\bsigma: [n] \to [n]$, it holds that $Pr(\cA \text{ outputs } \bsigma \, | \,  (\theta_1, \ldots, \theta_n)) = Pr(\cA \text{ outputs } \bsigma \circ \phi^{-1} \, | \, (\theta_{\phi(1)}, \ldots, \theta_{\phi(n)}))$, where $Pr(\cdot \, | (\alpha_1, \ldots, \alpha_n) )$ denotes the probability distribution on the trajectory of $\cA$ induced by the PL model $(\alpha_1, \ldots, \alpha_n)$.  \red{Restate this for \tf\, case.}
%\end{defn}

%We are now ready to state the main theorem of this section:

\begin{restatable}[Regret Lower Bound: \objk\, with \tf]{thm}{lbtktf}
\label{thm:lb_tktf}
For any \nr\, learning algorithm $\cA$ for \objk\, that uses \tk, and for any problem instance \mnl, the expected regret incurred by $\cA$ when run on it satisfies
$ %\[
\underset{T \to \infty }{\lim \inf}\,\E_{\btheta}\Big[\frac{R_T^{k}(\cA)}{\ln T}\Big] \ge \frac{\theta_1\theta_{k+1}}{\Delta_{(k)}}\frac{(n-k)}{k}, 
$ %\]
where $\E_{\btheta}[\cdot]$ denotes expectation under the algorithm and \mnl\, model. %\red{$\ln KT$?}%, and recall that $\Delta_{(k)} = \theta_k - \theta_{k+1}$ which defines the instance dependent complexity term in this case.
\end{restatable}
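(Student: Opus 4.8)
The plan is to establish the bound by the same change-of-measure machinery used for Thm~\ref{thm:lb_wiwf} and Thm~\ref{thm:lb_witf}, now adapted to the set-valued optimal action $\sS = [k]$ and to full $k$-ranking feedback. Mirroring the construction there, I would fix the true instance \mnl\ with $\theta_1 \ge \cdots \ge \theta_k > \theta_{k+1} = \cdots = \theta_n$ (so that $\Delta_{(k)} = \theta_k - \theta_{k+1}$ is the only relevant gap), and for each suboptimal item $a \in \{k+1, \dots, n\}$ build an altered instance $\btheta^a$ that lifts $a$ just past the top-$k$ boundary: $\theta^a_a = \theta_k + \epsilon$ and $\theta^a_i = \theta_i$ otherwise, for small $\epsilon > 0$. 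Under $\btheta^a$ the zero-regret set becomes $[k-1] \cup \{a\}$, so item $a$ must now be played in almost every round, whereas under the true instance $a$ is suboptimal.

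The core step is a divergence bound for a single suboptimal item. Writing $N_a(T)$ for the number of rounds in which $a \in S_t$, I would apply Lemma~\ref{lem:gar16} to the pair $(\btheta, \btheta^a)$ with the event $\{N_a(T) < T/2\}$. Since $\btheta$ and $\btheta^a$ differ only in the parameter of $a$, the Plackett-Luce law of any played set $S \not\ni a$ is identical under the two models, so only sets containing $a$ contribute divergence; the \nr\ property on both instances (item $a$ is played $o(T^\alpha)$ times under $\btheta$ but $T - o(T^\alpha)$ times under $\btheta^a$) forces the binary relative entropy on the right-hand side to be $(1-o(1))\ln T$. This yields $\E_{\btheta}[N_a(T)] \ge \frac{(1-o(1))\ln T}{\bar K_a}$, where $\bar K_a$ uniformly upper bounds $KL(p^{\btheta}_S, p^{\btheta^a}_S)$ over size-$k$ sets $S \ni a$.

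To evaluate $\bar K_a$ I would reuse the chain-rule decomposition from Thm~\ref{thm:lb_witf}: a full ranking of a $k$-set is $k$ successive winner draws, so its KL is a sum of $k$ conditional winner-feedback divergences, each controlled as in Thm~\ref{thm:lb_wiwf}, giving a bound of order $\frac{k\,(\theta_k + \epsilon - \theta_{k+1})^2}{\theta_S\,\theta_1}$. Since every size-$k$ set satisfies $\theta_S \ge k\,\theta_{k+1}$ in this instance, the factor $k$ cancels and $\bar K_a \le \frac{(\Delta_{(k)} + \epsilon)^2}{\theta_1 \theta_{k+1}}$. Finally I would lower bound the regret by the exploration cost: since $|S_t| = k$, each suboptimal item in $S_t$ displaces a top-$k$ item of value at least $\theta_k$, so $r^k_t \ge \frac{\Delta_{(k)}}{k}\,|S_t \setminus [k]|$ and hence $R_T^k \ge \frac{\Delta_{(k)}}{k}\sum_{a>k} N_a(T)$. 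Combining the three ingredients, summing the $(n-k)$ per-item bounds, and letting $\epsilon \to 0$ gives $\liminf_T \E_{\btheta}[R_T^k]/\ln T \ge \frac{\theta_1 \theta_{k+1}}{\Delta_{(k)}}\frac{n-k}{k}$.

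The main obstacle I anticipate is the divergence estimate $\bar K_a$: the feedback is a distribution over the combinatorially many rankings of a $k$-set, and one must verify that the chain-rule stages telescope cleanly and that the $\theta_S \ge k\theta_{k+1}$ cancellation produces exactly the $\theta_1\theta_{k+1}$ denominator rather than a set-dependent quantity. A secondary subtlety is the reduction of the \nr\ guarantee (stated per suboptimal \emph{set}) to a statement about the single count $N_a(T) = \sum_{S \ni a} N_S(T)$, which requires summing $o(T^\alpha)$ bounds over the $\binom{n-1}{k-1}$ sets through $a$ and checking that this remains $o(T^\alpha)$ for fixed $n,k$.
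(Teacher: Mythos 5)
Your proposal follows the same route as the paper's proof in Appendix~\ref{app:lb_tktf}: change of measure via Lemma~\ref{lem:gar16}, one lifted alternative instance per suboptimal item $a$, the chain rule to reduce the $k$-ranking divergence to $k$ winner-feedback divergences (exactly as in Thm.~\ref{thm:lb_witf}), and the regret decomposition $R_T^k \ge \frac{\Delta_{(k)}}{k}\sum_{a > k} N_a(T)$. The secondary subtleties you flag are benign: your indicator event $\{N_a(T) < T/2\}$ works just as well as the paper's choice $Z = N_{\sS}(T)/T$, and summing the \nr\ property over the $\binom{n-1}{k-1}$ sets containing $a$ is exactly what the paper does implicitly. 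The genuine gap is precisely the point you name as your ``main obstacle'' and leave unresolved: with your construction the KL computation does \emph{not} produce a $\theta_1\theta_{k+1}$ denominator. Mirroring Eq.~(\ref{eq:lb_wiwf_kl}), a winner draw from a set $S \ni a$ gives $KL(p^{\btheta}_S, p^{\btheta^a}_S) \le \Lambda'^2/\big(\theta_S\,\theta_a^{\mathrm{new}}\big)$, where $\Lambda'$ is the size of the lift and $\theta_a^{\mathrm{new}}$ is the \emph{lifted value} of item $a$ --- the value $\theta_1$ of the top item never enters the divergence. In your instance $\theta_a^{\mathrm{new}} = \theta_k + \epsilon$, so after the $\theta_S \ge k\theta_{k+1}$ cancellation you get $\bar K_a \le (\Delta_{(k)}+\epsilon)^2/\big(\theta_{k+1}(\theta_k+\epsilon)\big)$, and your argument delivers $\frac{\theta_k\theta_{k+1}}{\Delta_{(k)}}\frac{(n-k)}{k}\ln T$: the correct order in $n$, $k$, $T$, $\Delta_{(k)}$, but strictly weaker than the stated constant whenever $\theta_1 > \theta_k$, which your general true instance $\theta_1 \ge \cdots \ge \theta_k$ permits.

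The paper closes this by instance design rather than by a sharper divergence estimate: it sets $\theta_1^1 = \cdots = \theta_{k-1}^1 = \theta + 2\epsilon$, places the $k$-th best item (item $n$) at $\theta + \epsilon$, all remaining items at $\theta$, and lifts each suboptimal $a$ from $\theta$ to $\theta + 2\epsilon$. Then the lifted value coincides with $\theta_1$, the gap is $\Delta_{(k)} = \epsilon$, the lift is $2\Delta_{(k)}$, and the bound $KL(p^1_S,p^a_S) \le 4\epsilon^2/\big(\theta(\theta+2\epsilon)\big) = 4\Delta_{(k)}^2/\big(\theta_{k+1}\theta_1\big)$ (after the factor-$k$ chain rule and the $\theta_S \ge k\theta_{k+1}$ cancellation) yields exactly the advertised $\frac{\theta_1\theta_{k+1}}{\Delta_{(k)}}\frac{(n-k)}{k}$. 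Your proof is repaired by the same specialization: since the theorem --- its ``for any problem instance'' phrasing notwithstanding --- is in fact established by exhibiting a hard instance family, you may take $\theta_1,\ldots,\theta_k$ all within $O(\epsilon)$ of each other, in which case $\theta_k\theta_{k+1}$ and $\theta_1\theta_{k+1}$ agree in the $\epsilon \to 0$ limit and your outline goes through verbatim.
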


\vspace{-6pt}
\noindent \textbf{Proof sketch.}
%%%%%%%%%%%%%%%%%%%%%%%%%%%%%%%%
Similar to \ref{thm:lb_witf}, the proof again relies on carefully constructing a true instance, with optimal set of \bk\, $\sS = [k]$, and a family of slightly perturbed alternative instances $\{\bnu^a: a \in [n]\sm \sS\}$, for each suboptimal arm $a \in [n]\sm \sS\}$, which we design as:
$ %\begin{align*}
\textbf{(1). True Instance: } \text{MNL}(n,\btheta^1): \theta_1^1 = \theta_2^1 = \ldots = \theta_{k-1}^1 = \theta + 2\epsilon; \, \theta_n^1 = \theta + \epsilon;\,  \theta_{k+1}^1 = \theta_{k+2}^1 = \ldots \theta_{n-1}^1 = \theta,
$ %\end{align*}
for some $\theta \in \R_+$ and $\epsilon > 0$. Clearly \bk\,  of MNL$(n,\btheta^1)$ is $\sS[1] = [k-1]\cup \{n\}$. \textbf{(2). Altered Instances: } For every $n-k$ suboptimal items $a \notin \sS[1]$, now consider an altered instance
$%\begin{align*}
\textbf{Instance a, } \text{denoted by MNL}(n,\btheta^a), \text{ such that } \theta_a^a = \theta + 2\epsilon; ~\theta_i^a = \theta_i^1, ~~\forall i \in [n]\sm \{a\}.
$ %\end{align*}
The result of Thm. \ref{thm:lb_tktf} now can be obtained by following an exactly same procedure as described for the proof of Thm. \ref{thm:lb_witf}. The complete details is given in Appendix \ref{app:lb_tktf}.
$\hfill \square$

\begin{rem}
\label{rem:lb_tktf}
The regret lower bound of Thm. \ref{thm:lb_tktf} is  $\Omega(\frac{(n-k)\ln T}{k})$, with an instance-dependent term $\frac{\theta_1\theta_{k+1}}{(\theta_k-\theta_{k+1})}$ which shows for recovering the \bk, the problem complexity is  governed by the \emph{`gap'} between the $k^{th}$ and $(k+1)^{th}$ best item $\Delta_{(k)} = (\theta_k-\theta_{k+1})$, as consistent with intuition.
\end{rem}
\vspace{-7pt}
\subsection{An order-optimal algorithm with low \objk\, with \tk}
\label{sec:algo_tktf}
%In this section, we present an online learning algorithm for playing subsets with low  \objk\, with \tk.
\vspace{-5pt}
\noindent 
\textbf{Main idea: A recursive set-building rule:} As with the  \algmm\, algorithm (Alg. \ref{alg:mm}), we maintain pairwise UCB estimates ($u_{ij}$) of empirical pairwise preferences $\hp_{ij}$ via \rb. However the {main difference here lies in the set building rule}, as here it is required to play sets of size exactly $k$.
  The core idea here is to recursively try to capture the set of \bk\, in an ordered set $\cB_t$, and, once the set is assumed to be found with confidence (formally $|\cB_t|=k$), to keep playing $\cB_t$ unless some other potential good item emerges,  which is then played replacing the weakest element $(\cB_t(k))$ of $\cB_t$.
The algorithm is described in Alg. \ref{alg:kmm}, Appendix \ref{app:alg_tktf}.

%\noindent \textbf{Algorithm Description.} At every iteration we examine the subset of items $\cK_t$ which could constitute the potential top-$m$ items of the \mnl\, model, and try to capture the exact set of top-$m$ items in $\cT_t$. Now depending on the cardinality of $\cT_t$ we build up the set of $m+1$ items $S_t$ at each round $t$, such that majority of items in $S_t$ are drawn from $\cT_t$ to ensure the algorithm never stops exploring (except the case when $|\cT_t| = m$ and $\cT_t = \cK_t$, when algorithm plays $S_t= \cT_t$ as there is no confusion left for identifying the top items)---however the exploration rate reduces as the $\cT_t$ grows larger and gradually captures the true set of top-$m$ items. 

\begin{restatable}[\algkmm: High Probability Regret bound]{thm}{whpkmm}
\label{thm:whp_reg_kmm}
Given a fixed time horizon $T$ and $\delta \in (0,1)$, with high probability $(1-\delta)$, the regret incurred by \algkmm~ for \objk\, admits the bound %for \tk 
$ %\[
R_T^{k} \le \bigg( 2\Big[\frac{2\alpha n^2}{(2\alpha-1)\delta} \Big]^{\frac{1}{2\alpha-1}} + 2\bar D^{(k)}\ln \big( 2\bar D^{(k)}\big) \bigg) \Delta'_{\max} + \frac{4\alpha\ln T}{k}\bigg(\sum_{b = k+1}^{n}\frac{(\theta_k-\theta_b)}{{\hat D}^2} \bigg),
%(2f(\delta) + 2D\ln 2D)\Lambda + \frac{\Lambda}{m+1}\sum_{i = 2}^{n}D_{1i} \ln T, 
$ %\]
where $D^{(k)}$ is an instance dependent constant (see Lem. \ref{lem:t_hat_deltak}, Appendix), $\Delta'_{\max} = \frac{\big(\sum_{i =1}^{k}\theta_i - \sum_{i=n-k+1}^{n}\theta_i\big)}{k}$,
and ${\hat D} = \min_{g \in [k-1]}(p_{kg}-p_{bg})$. %assuming $\theta_1\ge \ldots \theta_k > \theta_{k+1} \ge \ldots\ge \theta_n$ and $\sS = [k]$.
\end{restatable}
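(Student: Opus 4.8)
The plan is to mirror the three-phase analysis of \Thm{thm:whp_reg_mm}, adapting it to the recursive set-building rule of \algkmm\ and to the constraint that sets of size exactly $k$ must be played. I would split the horizon into a \rnd\ phase, a \prog\ phase, and a \sat\ phase, bound the regret contributed by each, and sum.

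In the \rnd\ phase I would show the upper confidence bounds become reliable after a burn-in of $f(\delta) := \big[\tfrac{2\alpha n^2}{(2\alpha-1)\delta}\big]^{1/(2\alpha-1)}$ rounds. Each round returns a full $k$-ranking, so \rb\ extracts a pairwise comparison for every pair in $S_t$, and by the IIA property of the MNL model these are unbiased samples of $p_{ij}$. A Hoeffding bound on $\hp_{ij}$, a union bound over the $\binom{n}{2}$ pairs, and the summability of the tail (which needs $\alpha > \tfrac12$) would give that with probability at least $1-\delta$ we have $p_{ij} \le u_{ij}$ for all pairs simultaneously and all $t > f(\delta)$. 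Since the per-round regret never exceeds $\Delta'_{\max}$, the regret over the burn-in is at most a constant multiple of $f(\delta)\,\Delta'_{\max}$, yielding the leading term.

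Conditioning on this good event, the \prog\ phase would argue that the ordered set $\cB_t$ locks onto the true top-$k$ set $\sS = [k]$ within a bounded number of rounds. The structural fact to establish is that, once all UCBs are valid, the recursive max-min rule never permanently evicts an item of $[k]$ in favour of a worse one: a suboptimal candidate $b > k$ can occupy the weakest slot of $\cB_t$ only until its comparisons against the top $(k-1)$ items accumulate. I would quantify this by the gap $\hat D = \min_{g\in[k-1]}(p_{kg}-p_{bg})$ --- once $b$ has been compared $\Theta(\alpha\ln t/\hat D^2)$ times, its UCB against some $g\in[k-1]$ falls below that of item $k$ and $b$ is dropped. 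The total number of rounds before $\cB_t$ stabilises is the instance constant $\bar D^{(k)}$ (Lem.~\ref{lem:t_hat_deltak}), contributing the $2\bar D^{(k)}\ln(2\bar D^{(k)})\,\Delta'_{\max}$ term.

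Finally, in the \sat\ phase, with $\cB_t = [k]$ stable, I would bound the residual regret by counting, for each suboptimal $b > k$, the number of rounds in which $b$ enters $S_t$ over the whole horizon. Each such round replaces the weakest optimal item and costs at most $\tfrac{\theta_k-\theta_b}{k}$ in regret, while the UCB argument caps the number of such rounds at $\tfrac{4\alpha\ln T}{\hat D^2}$; summing over $b = k+1,\ldots,n$ gives the $\tfrac{4\alpha\ln T}{k}\sum_{b=k+1}^{n}\tfrac{\theta_k-\theta_b}{\hat D^2}$ term. Adding the three contributions yields the claim. The main obstacle I anticipate is precisely the \prog-phase analysis of the \emph{recursive} rule: unlike the winner-regret case where a single holding slot must capture one item, here $\cB_t$ is an ordered $k$-set whose weakest slot is repeatedly contested, and one must show the max-min selection resolves all $k$ slots simultaneously and that the quantity driving each suboptimal item out is exactly $\hat D = \min_{g\in[k-1]}(p_{kg}-p_{bg})$ rather than a looser gap. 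Establishing that this min-over-$g$ term is the correct complexity --- and that it emerges naturally from applying the max-min rule to the UCB matrix --- is the delicate step; the remainder is bookkeeping analogous to \Thm{thm:whp_reg_mm}.
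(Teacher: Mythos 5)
Your proposal follows essentially the same route as the paper's proof: the identical three-phase decomposition (\rnd\ up to $f(\delta)$, \prog\ bounded by the instance constant $\bar D^{(k)}$ via a recursive slot-by-slot capture argument, and \sat\ where each suboptimal item $b > k$ is counted at most $\tfrac{4\alpha \ln T}{\hat D^2}$ times at cost $\tfrac{\theta_k - \theta_b}{k}$ each), with the three contributions summing exactly as in the theorem. The only small discrepancy is that you attribute the gap $\hat D = \min_{g \in [k-1]}(p_{kg} - p_{bg})$ to the \prog\ phase as well, whereas the paper drives that phase with the finer case-wise constants $D^g_{ij}$ (Lemmas \ref{lem:n_1ik}, \ref{lem:n_ijk}, \ref{lem:t_hat_deltak}) and reserves $\hat D$ for the \sat\ count --- but this does not alter the structure or the final bound.
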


\noindent \textbf{Proof sketch.}
Similar to Thm. \ref{thm:whp_reg_mm}, we prove the above bound dividing the entire run of algorithm \algkmm\, into three phases and applying an recursive argument:
%Similar to Thm. \ref{thm:whp_reg_mm}, we prove the above bound diving the entire run of algorithm \algkmm\, into three phases $3$ phases: (1) \rnd\, (2) \prog, and (3)  \sat\, and applying a recursive analysis. The details are moved to Appendix  \ref{app:whp_reg_kmm}.

%\iffalse % =======================
%\begin{enumerate}
\textbf{(1).} \rnd: Same as Thm. \ref{thm:whp_reg_mm}, in this case also this phase runs from time $1$ to ${f(\delta)} = \Big[ \frac{2\alpha n^2}{(2\alpha-1)\delta} \Big]^{\frac{1}{2\alpha-1}}$, for any $\delta \in (0,1)$, after which, for any $t> f(\delta)$, one can guarantee $p_{ij} \le u_{ij}$ for all pairs $(i,j) \in [n]\times [n]$, with high probability at least $(1-\delta)$. (Lem. \ref{lem:cdelta})

\textbf{(2).} \prog: %\red{most of the language needs rewriting here}
The analysis of this phase is quite different from that of Thm. \ref{thm:whp_reg_mm}:
After $t > f(\delta)$, the algorithm starts exploring the items in the set of \bk\, in a \emph{recursive} manner--It first tries to capture (one of) the \bi s in $\cB_t(1)$. Once that slot is secured, it goes on for searching the second \bi\, from remaining pool of items and try capturing it in $\cB_t(2)$ and so on upto $\cB_t(k)$.
By definition, the phase ends at, say $t = T_0(\delta)$, when $\cB_{t} = \sS$. Moreover the update rule of \algkmm\ (along with Lem. \ref{lem:cdelta}) ensures that $\cB_{t} = \sS\, \forall t > T_0(\delta)$. The novelty of our analysis lies in showing that $T_0(\delta)$ is bounded by just a instance dependent complexity term which does not scale with $t$ (Lem. \ref{lem:t_hat_deltak}), and hence the regret incurred in this phase is also constant. 

\textbf{(3).} \sat: In the last phase from time $T_0(\delta)+1$ to $T$ \algkmm\, has already captured $\sS$ in $\cB_t$, and $\cB_t = \sS$ henceforth. 
Hence the algorithm mostly plays $S_t = \sS$ without incurring any regret. Only if any item outside $\cB_t$ enters into the list of potential \bk\,, it takes a very conservative approach of replacing the `weakest of $\cB_t$ by that element to make sure whether it indeed lies in or outside $\sS$. However we are able to show that any such suboptimal item $i \notin \sS$ can not occur for more than $O(\frac{\ln T}{\hat D^2})$ times (Lem. \ref{lem:kmm_reg_postT0}), combining which over all $[n]\sm[k]$ suboptimal items finally leads to the desired regret. \emph{The complete details are moved to Appendix  \ref{app:whp_reg_kmm}.}
%\end{enumerate}
%\fi % ====================================
$\hfill \square$

From Theorem \ref{thm:whp_reg_kmm}, we can also derive an expected regret bound for \algkmm\, in $T$ rounds is: % for \objk\, with \tf:

\begin{restatable}[]{thm}{expkmm}
\label{thm:exp_reg_kmm}
The expected regret incurred by \algmm~ for \objk\,is:
\vspace*{-3pt}
\begin{align*}
\E[R_{T}^1] \le \bigg( 2\Big[ \frac{2\alpha n^2}{(2\alpha-1)} \Big]^{\frac{1}{2\alpha-1}}\frac{2\alpha-1}{\alpha-1} + 2\bar D^{(k)}\ln \big( 2\bar D^{(k)} \big) \bigg) \Delta'_{\max} + \frac{4\alpha\ln T}{k}\bigg(\sum_{b = k+1}^{n}\frac{(\theta_k-\theta_b)}{{\hat D}^2} \bigg).
\end{align*} 
\end{restatable}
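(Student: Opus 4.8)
The plan is to obtain this expected-regret bound directly from the high-probability bound of \Thm{thm:whp_reg_kmm} by integrating it over the confidence parameter $\delta$, exactly as \Thm{thm:exp_reg_mm} was derived from \Thm{thm:whp_reg_mm}. The crucial structural fact is that \algkmm\ never takes $\delta$ as an input, so the guarantee of \Thm{thm:whp_reg_kmm} holds \emph{simultaneously} for every $\delta \in (0,1)$. I would first rewrite that bound as $R_T^k \le A\,\delta^{-\frac{1}{2\alpha-1}} + B$ (with probability at least $1-\delta$), isolating the only $\delta$-dependent piece into $A = 2\big[\frac{2\alpha n^2}{2\alpha-1}\big]^{\frac{1}{2\alpha-1}}\Delta'_{\max}$ and collecting the rest into $B = 2\bar D^{(k)}\ln(2\bar D^{(k)})\,\Delta'_{\max} + \frac{4\alpha\ln T}{k}\sum_{b=k+1}^{n}\frac{\theta_k-\theta_b}{\hat D^2}$. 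Equivalently, this reads as a tail bound $Pr\big(R_T^k > A\,\delta^{-\frac{1}{2\alpha-1}} + B\big) \le \delta$ valid for all $\delta \in (0,1)$.

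The second step is to convert this family of tail bounds into a bound on the expectation. Since $R_T^k$ is a nonnegative bounded random variable, $\E[R_T^k] = \int_0^\infty Pr(R_T^k > v)\,dv$. Inverting the map $v = A\,\delta^{-\frac{1}{2\alpha-1}} + B$ gives $\delta = \big(\frac{A}{v-B}\big)^{2\alpha-1}$ for $v \ge A+B$, so that $Pr(R_T^k > v) \le \big(\frac{A}{v-B}\big)^{2\alpha-1}$ on that range and $Pr(R_T^k > v) \le 1$ for $v < A+B$. Splitting the integral at $v = A+B$ and substituting $u = v-B$ yields
\[
\E[R_T^k] \;\le\; (A+B) \;+\; A^{2\alpha-1}\int_{A}^{\infty} u^{-(2\alpha-1)}\,du \;=\; A\,\frac{2\alpha-1}{2(\alpha-1)} + B \;\le\; A\,\frac{2\alpha-1}{\alpha-1} + B,
\]
where the last inequality is the (harmless) loosening $\frac{1}{2(\alpha-1)} \le \frac{1}{\alpha-1}$ used to match the stated constant. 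Substituting back the expressions for $A$ and $B$ recovers the claimed bound verbatim.

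The only genuine subtlety—and the step I expect to be the main obstacle to state cleanly—is the convergence of the tail integral $\int_A^\infty u^{-(2\alpha-1)}\,du$, which requires $2\alpha-1 > 1$, i.e. $\alpha > 1$. This is strictly stronger than the $\alpha > \tfrac12$ assumed for the high-probability bound, and it is implicitly what makes the factor $\frac{2\alpha-1}{\alpha-1}$ finite in the statement; I would flag this condition explicitly. Everything else is inherited from \Thm{thm:whp_reg_kmm}: the instance-dependent quantities $\bar D^{(k)}$ (via \Lem{lem:t_hat_deltak}), $\hat D$, and $\Delta'_{\max}$ are carried over unchanged and need no re-derivation, so the argument is purely the $\delta$-integration together with checking the polynomial tail decays fast enough for the expectation to remain $O\!\big(\frac{(n-k)\ln T}{k\Delta_{(k)}}\big)$ in its leading term.
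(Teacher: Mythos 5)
Your proposal is correct and follows essentially the paper's own route: the paper likewise obtains \Thm{thm:exp_reg_kmm} from \Thm{thm:whp_reg_kmm} by exploiting that \algkmm\ never takes $\delta$ as input and then integrating the $\delta$-dependent term $2\big[\tfrac{2\alpha n^2}{(2\alpha-1)\delta}\big]^{\frac{1}{2\alpha-1}}\Delta'_{\max}$ over $\delta \in (0,1)$ (exactly as in the proof of \Thm{thm:exp_reg_mm}), which coincides with your tail-integral computation after the change of variables $v = A\,\delta^{-\frac{1}{2\alpha-1}} + B$. Your formulation via $\E[R_T^k] = \int_0^\infty Pr(R_T^k > v)\,dv$ is the rigorous justification of that integration step, and your flag that convergence requires $\alpha > 1$ (not merely $\alpha > \tfrac12$) is a genuine point that applies equally to the paper's proof, whose integral $\int_0^1 \delta^{-\frac{1}{2\alpha-1}}\,d\delta$ diverges for $\alpha \le 1$ and whose constant $\tfrac{2\alpha-1}{\alpha-1}$ is only meaningful in that regime.
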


\vspace*{-14pt}
\begin{rem}
In Thm. \ref{thm:exp_reg_kmm}, the first two terms $\bigg( 2\Big[\frac{2\alpha n^2}{(2\alpha-1)\delta} \Big]^{\frac{1}{2\alpha-1}} + 2\bar D^{(k)}\ln \big( 2\bar D^{(k)} \big) \bigg) \Delta'_{\max}$ of $\E[R_{T}^k]$ are just some \mnl\ model dependent constants which do not contribute to the learning rate of \algkmm, and the third term is $O\Big( \frac{(n-k)\ln T}{k} \Big)$ which varies optimally in terms of on $n$, $k$, $T$ matching the $\Omega\Big (\frac{(n-k) \ln T}{k}\Big)$ lower bound of Thm. \ref{thm:lb_tktf}). Also Rem. \ref{rem:lb_tktf} indicates an inverse dependency on the \emph{`gap-complexity'} $(\theta_k - \theta_{k+1})$, which also shows up in above bound through the component $\frac{(\theta_k-\theta_b)}{{\hat D}^2}$: Let $g^* \in [k-1]$ is the minimizer of $\hat D$, then $\frac{(\theta_k-\theta_b)}{{\hat D}^2} = \frac{(\theta_{g^*} + \theta_k)(\theta_b + \theta_{g^*})}{\theta_{g^*}^2(\theta_{k}-\theta_{b})} \le \frac{4}{\theta_{g^*}(\theta_k - \theta_{k+1})} $, where the upper bounding follows as $\theta_{g^*} \ge \theta_k > \theta_b$ for any $b \in [n]\sm[k]$, and $\theta_b \le \theta_{k+1}$ for any $b$.
\end{rem}

%\subsection{A Thompson sampling approach for \objk}
%\label{sec:algts_tktf}
%
%(Direct $\btheta$ prior-posterior, not pairwise one. Without guarantee)

%\input{lb.tex}

\vspace{-10pt}
\section{Experiments}
\label{sec:expts}
%\red{Compare with naive algorithm and Multi Dueling Bandits}
\vspace{-10pt}
In this section we present the empirical evaluations of our proposed algorithm \algmm\ (abbreviated as \textbf{MM}) on different synthetic datasets, and also compare them with different algorithms.
All results are reported as average across $50$ runs along with the standard deviations. 
For this we use $7$ different \mnl\, environments as described below: 

\textbf{\mnl\, Environments.}
1. {\it g1}, 2. {\it g4}, 3. {\it arith}, 4. {\it geo}, 5. {\it har} all with $n = 16$, and two larger models 6. {\it arithb}, and 7. {\it geob} with $n=50$ items in both. Details are moved to Appendix \ref{app:expts}.
%\red{No existing work have considered \tf\, before so we omit that setup.}  

We compare our proposed methods with the following two baselines which closely applies to our problem setup. {Note, as discussed in Sec. \ref{sec:intro}, none of the existing work exactly addresses our problem}.
\textbf{Algorithms.} 
%\textbf{1. RUCB} Relative-Upper Confidence bound algorithm by \cite{Zoghi+14RUCB}, 
\textbf{1. BD:} The {\it Battling-Duel} algorithm of \cite{SG18} with {\it RUCB} aalgorithm \cite{Zoghi+14RUCB} as the dueling bandit blackbox, and 
\textbf{2. Sp-TS:} The {\it Self-Sparring} algorithm of \cite{Sui+17} with Thompson Sampling \cite{TS12}, and
\textbf{3. MM:} Our proposed method {\it \algmm} for \objbest\  (Alg. \ref{alg:mm}).
%---closely resembles to the Sparring algorithm of \cite{Ailon+14} that maintains a single copy of a MAB algorithm, and at each round $t$, it queries it $k$ times to produce the set $S_t$.
%\textbf{3. Rel-CB}: {\it Relative-Combinatorial Bandit:} We propose a naive baseline combinatorial bandits algorithm which maintain UCB on arm rewards and feedback the preference information as binary reward feedback, and

\textbf{Comparing \objbest\, with \tf\, (Fig. \ref{fig:reg_wi}):} 
We first compare the regret performances for $k=10$ and $m=5$. 
%Since RUCB does not accept $k$ as an input parameter it always plays a set of $2$ items only and receive the pairwise preference information only (as follows from the definition of \tf\, Sec. \ref{sec:feed_mod}). 
From Fig. \ref{fig:reg_wi}, it clearly follows that in all cases \algmm\ uniformly outperforms the other two algorithms taking the advantage of \tf\, which the other two fail to make use of as they both allow repetitions in the played subsets which can not exploit the rank-ordered feedback to the full extent. Furthermore, the thompson sampling based \emph{Sp-TS} in general exhibits a much higher variance compared to the rest due to its bayesian nature. Also as expected, \emph{g1} and \emph{g4} being comparatively easier instances, i.e. with larger `gap' $\hat \Delta_{\max}$ (see Thm. \ref{thm:lb_wiwf}, \ref{thm:lb_witf},\ref{thm:whp_reg_mm}, \ref{thm:exp_reg_mm} etc. for a formal justification), our algorithm converges much faster on these models.
% and hence a regret guarantee of $O\big( \frac{n\ln T}{m} \big)$ as opposed to the $O\big( {n \ln T}\big)$ regret bound for both RUCB and BD. Fig. \ref{fig:reg} shows the results.
\vspace{-2pt}
\begin{figure*}[h]
\vspace{-5pt}
\hspace{-10pt}
\includegraphics[trim={3.2cm 0cm 0cm 0},clip,scale=0.1,width=0.245\textwidth]{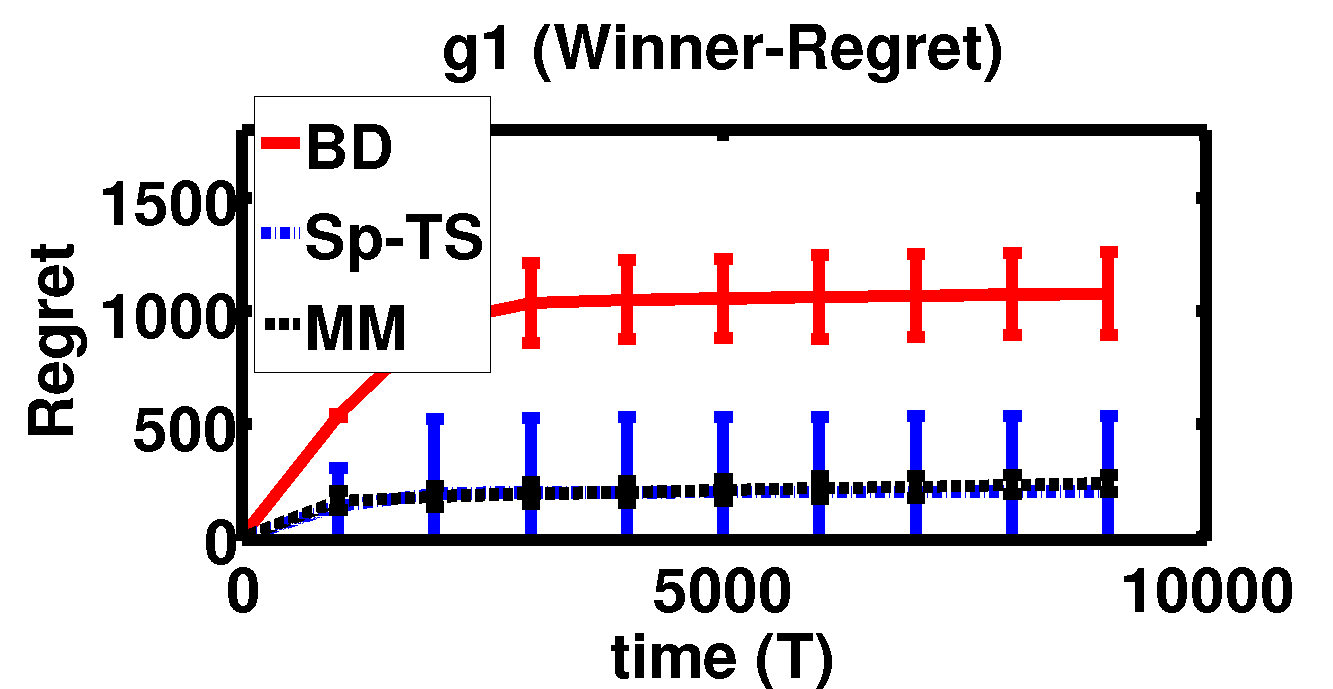}
\includegraphics[trim={3.2cm 0cm 0cm 0},clip,scale=0.1,width=0.245\textwidth]{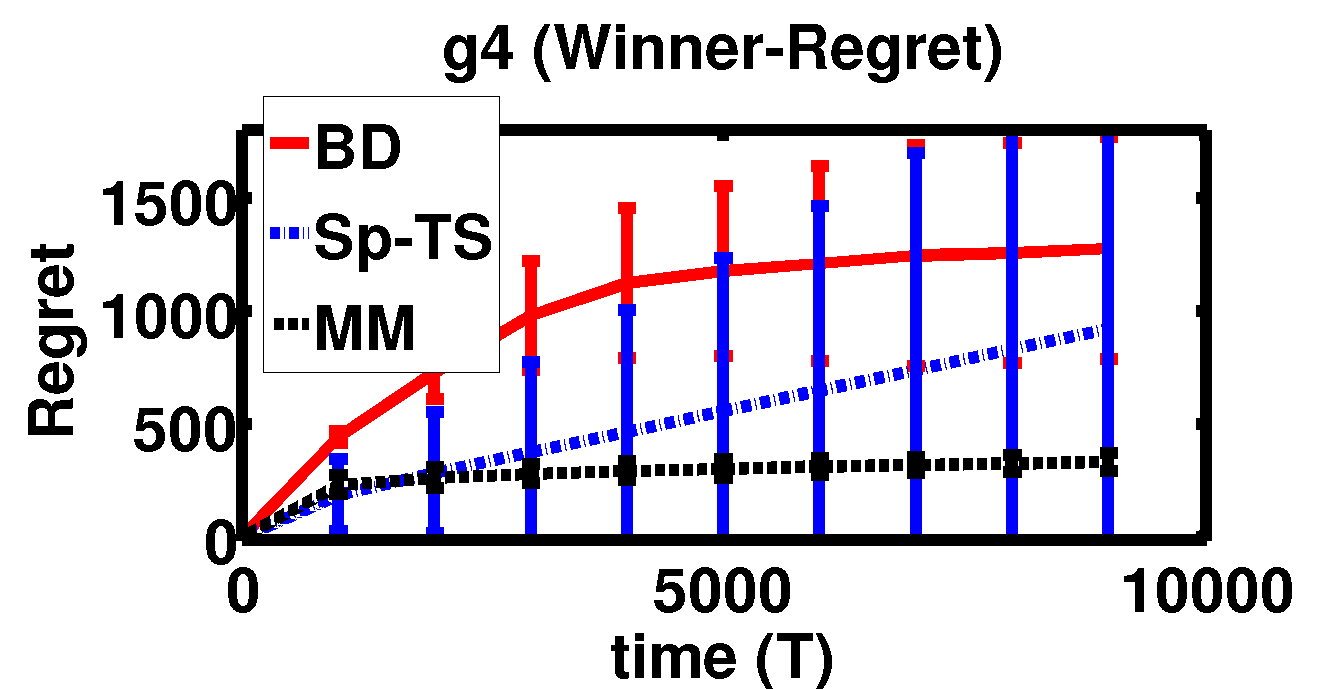}
\vspace{0pt}
\includegraphics[trim={3.2cm 0cm 0cm 0},clip,scale=0.1,width=0.245\textwidth]{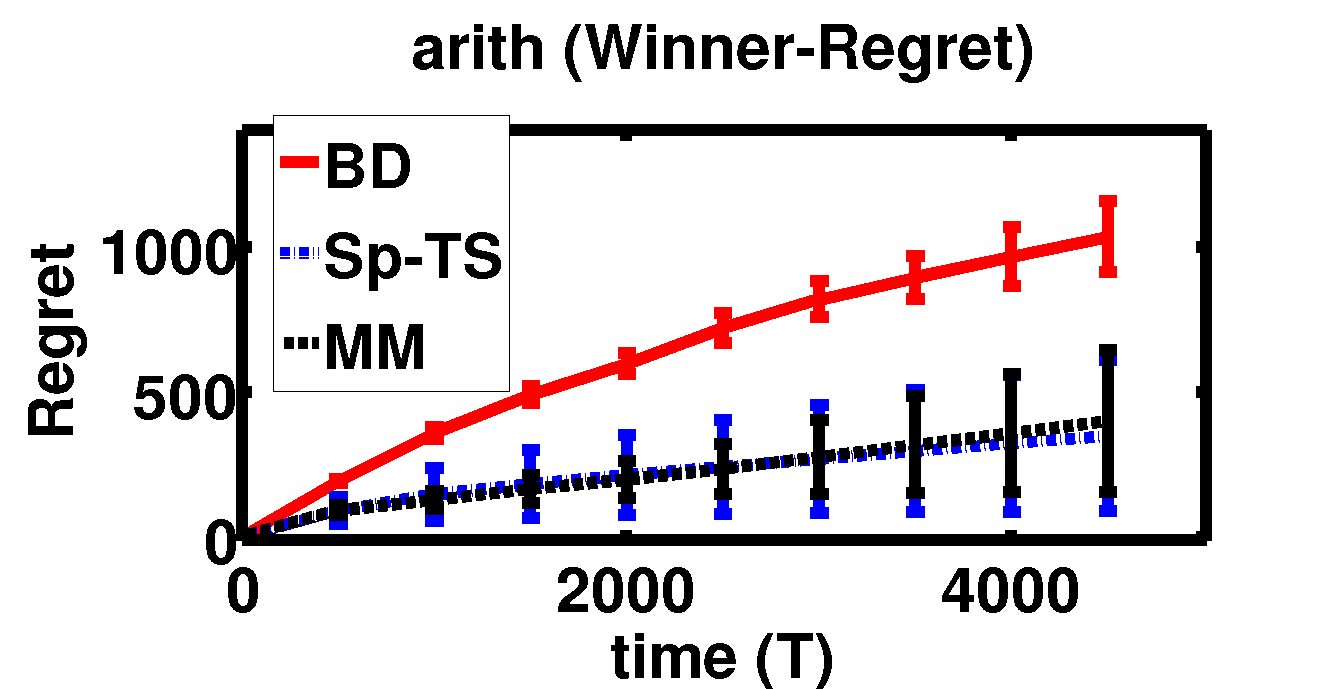}
\includegraphics[trim={3.2cm 0cm 0cm 0},clip,scale=0.1,width=0.245\textwidth]{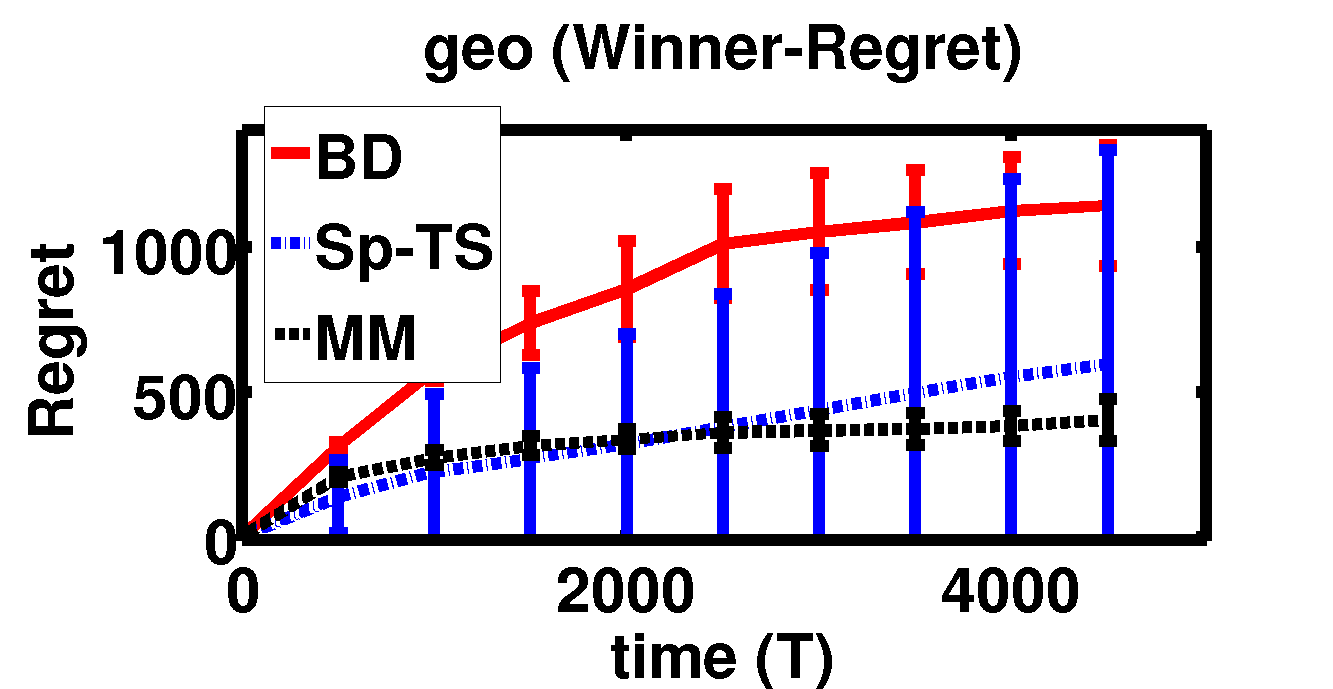}
\caption{Comparative performances on \objbest\, for $k = 10$, $m = 5$}
\label{fig:reg_wi}
\hspace{0pt}
\vspace{-15pt}
\end{figure*}
\vspace{-10pt}

\iffalse % ==============================
%\red{Add more plots.}
\begin{figure*}[h]
\hspace{-5pt}
\includegraphics[trim={3.4cm 0 0cm 0},clip,scale=0.1,width=0.245\textwidth]{./Plots/arithb.jpg}
\hspace{0pt}
\includegraphics[trim={3.4cm 0 0cm 0},clip,scale=0.1,width=0.245\textwidth]{./Plots/geob.jpg}
\caption{Averaged regret ($R^1_T$) vs time ($T$) for $k=10$, and $m=5$}
\label{fig:reg_big}
\hspace{0pt}
\end{figure*}
\vspace{-10pt}

We also run the algorithms on the two bigger datasets with $n=50$ to check their scalability. We set $k = 40$ for this case and $m=20$. As before, we get to see the same trend here as well where \algmm \ (MM) uniformly outperforms the others in terms of the average regret.
\fi %%  ==========================================

\textbf{Comparing \objk\, performances for \tk\, (Fig. \ref{fig:reg_tk}):} 
We are not aware of any existing algorithm for \objk\, objective with \tk. We thus use a modified version of \emph{Sp-TS} algorithm \cite{Sui+17} described above for the purpose--it simply draws $k$-items without repetition and uses \rb\, updates to maintain the Beta posteriors. Here again, we see that our method \algkmm \ (\emph{Rec-MM}) uniformly outperforms \emph{Sp-TS} in all cases, and as before \emph{Sp-TS} shows a higher variability as well. Interestingly, our algorithm converges the fastest on \emph{g4}, it being the easiest model with largest `gap' $\Delta_{(k)}$ between the $k^{th}$ and $(k+1)^{th}$ best item (see Thm. \ref{thm:lb_tktf},\ref{thm:whp_reg_kmm},\ref{thm:exp_reg_kmm} etc.), and takes longest time for \emph{har} since it has the smallest $\Delta_{(k)}$.
%\vspace{-10pt}
%Due to which we compare with a two baseline algorithms: \kcomb \,($k$-CB) and \kmulti\, ($k$-MDB). The details of the algorithms are presented in Appendix. \ref{app:expts}.
%\textbf{Algorithms.} 
%\textbf{1. Rel-CB}: {\it Relative-Combinatorial Bandit:} We propose a naive baseline combinatorial bandits algorithm which maintain UCB on arm rewards and feedback the preference information as binary reward feedback,
%\textbf{2. Sp-TS}: {\it Self-Sparring} algorithm \cite{Sui+17} with Thompson Sampling \cite{TS}, and
%\textbf{3. Rec-MM}: {\it \algmm:} Our proposed algorithm for \objk\ for \tf\ (Alg. \ref{alg:kmm}).
%\vspace{-2pt}
\begin{figure*}[h]
\vspace{-0pt}
\hspace{-10pt}
\includegraphics[trim={3.2cm 0cm 0cm 0},clip,scale=0.1,width=0.245\textwidth]{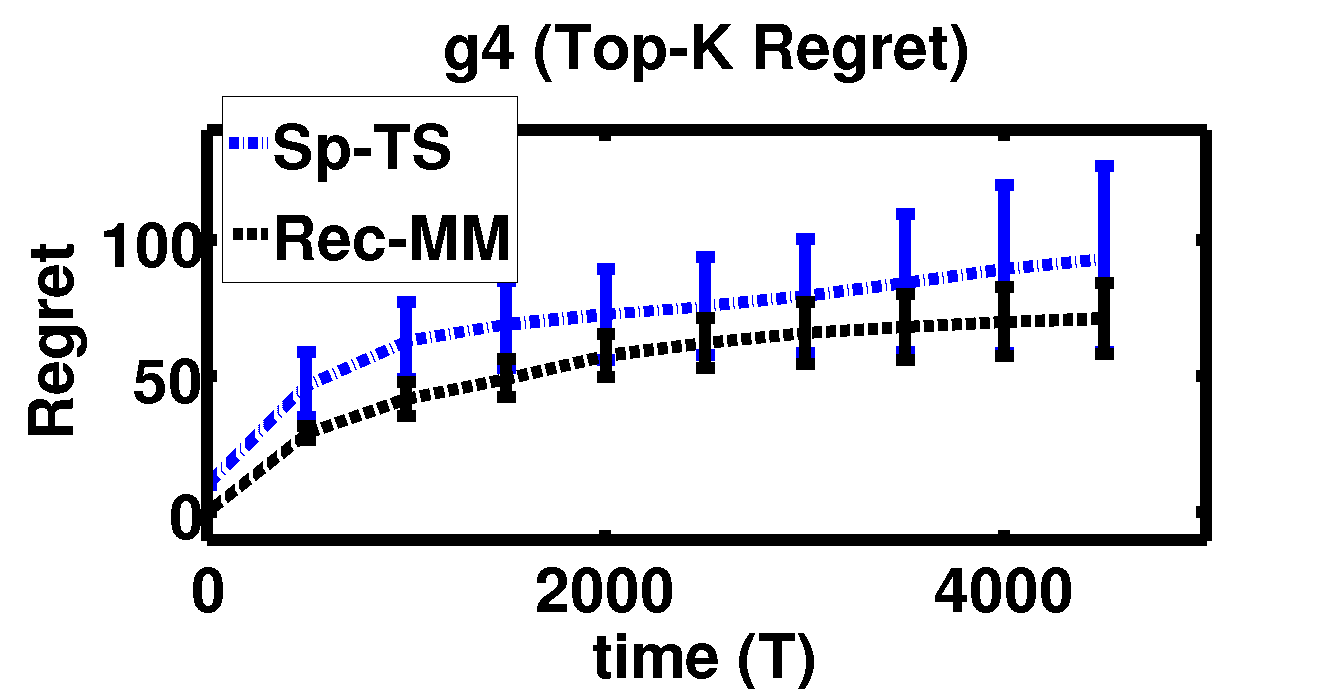}
\includegraphics[trim={3.2cm 0cm 0cm 0},clip,scale=0.1,width=0.245\textwidth]{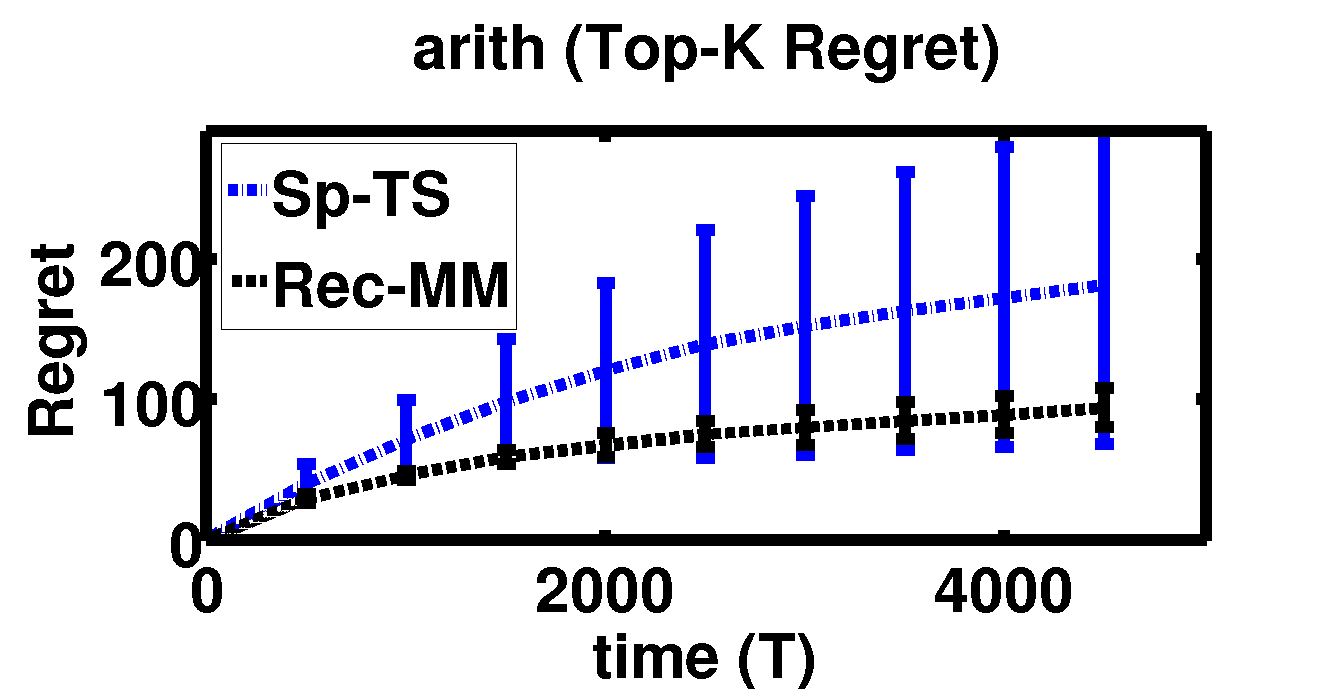}
\vspace{0pt}
\includegraphics[trim={3.2cm 0cm 0cm 0},clip,scale=0.1,width=0.245\textwidth]{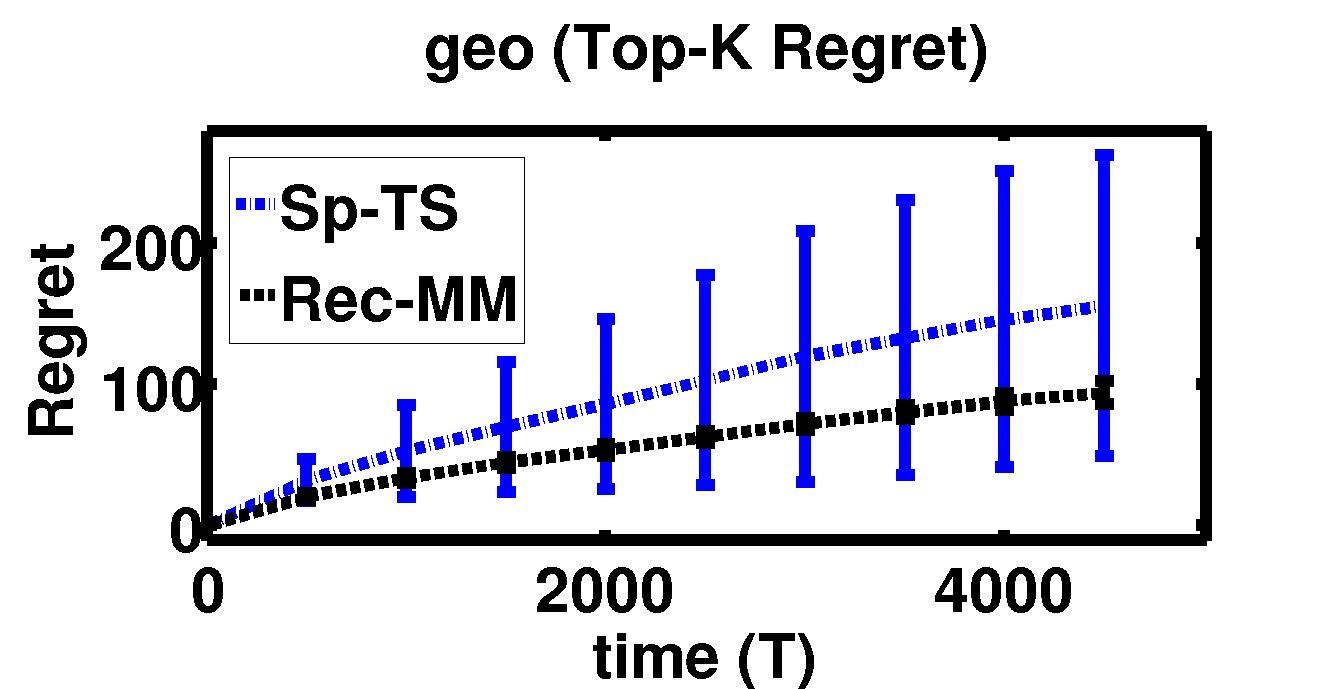}
\includegraphics[trim={3.2cm 0cm 0cm 0},clip,scale=0.1,width=0.245\textwidth]{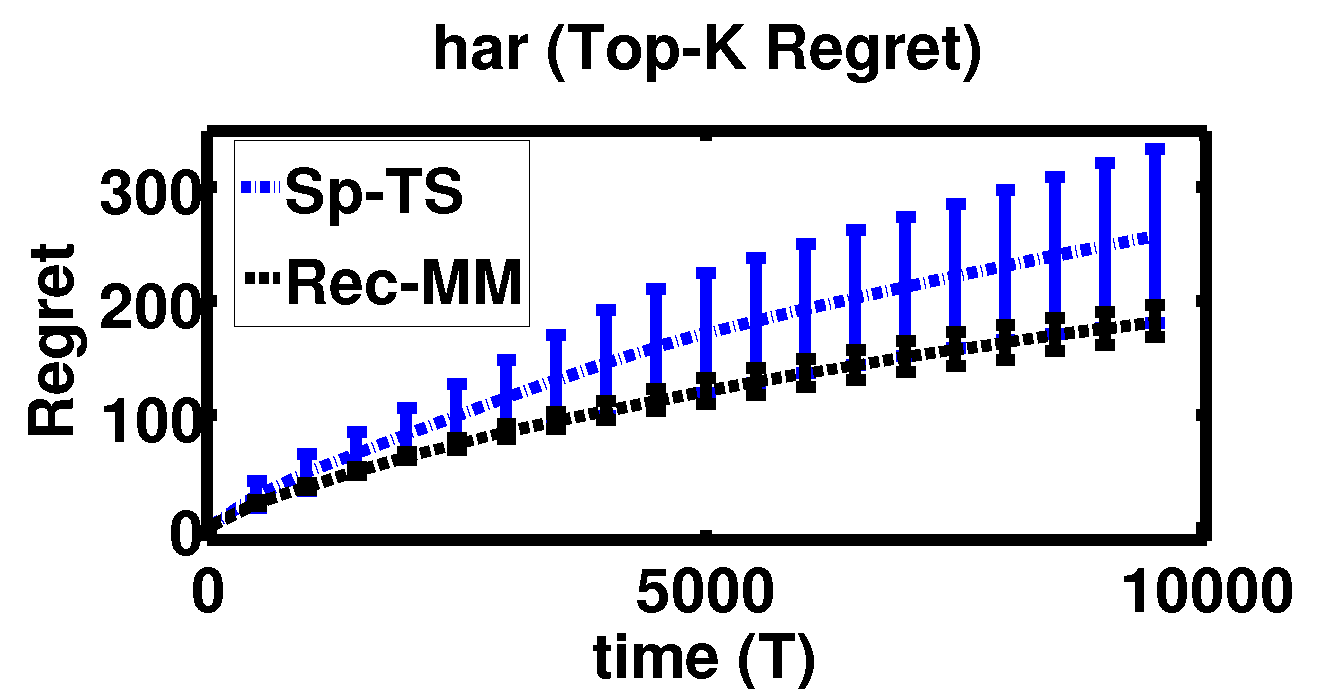}
\caption{Comparative performances on \objk\, for $k = 10$}
\label{fig:reg_tk}
\hspace{0pt}
%\vspace{-15pt}
\end{figure*}
\vspace{4pt}

\textbf{Effect of varying $m$ with fixed $k$ (Fig. \ref{fig:reg_m}):} We also studied our algorithm \algmm\,, with varying size rank-ordered feedback $(m)$, keeping the subsetsize $(k)$ fixed, both for \objbest\, and \objk\, objective, on the larger models {\it arithb} and {\it geob} which has $n=50$ items. As expected, in both cases, regret scales down with increasing $m$ (justifying the bounds in Thm. \ref{thm:whp_reg_mm},\ref{thm:exp_reg_mm}),\ref{thm:whp_reg_kmm},\ref{thm:exp_reg_kmm}).
\vspace{-2pt}
\begin{figure*}[h]
\vspace{-5pt}
\hspace{-10pt}
\includegraphics[trim={3.2cm 0cm 0cm 0},clip,scale=0.1,width=0.245\textwidth]{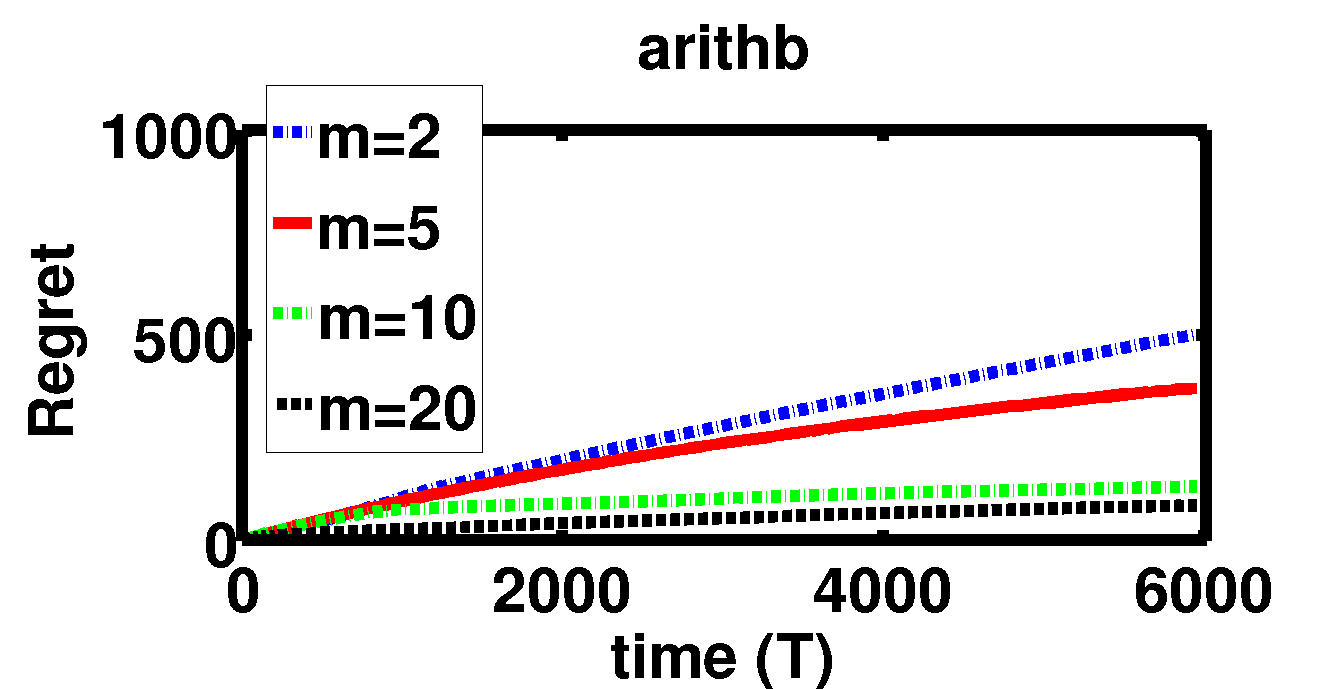}
\includegraphics[trim={3.2cm 0cm 0cm 0},clip,scale=0.1,width=0.245\textwidth]{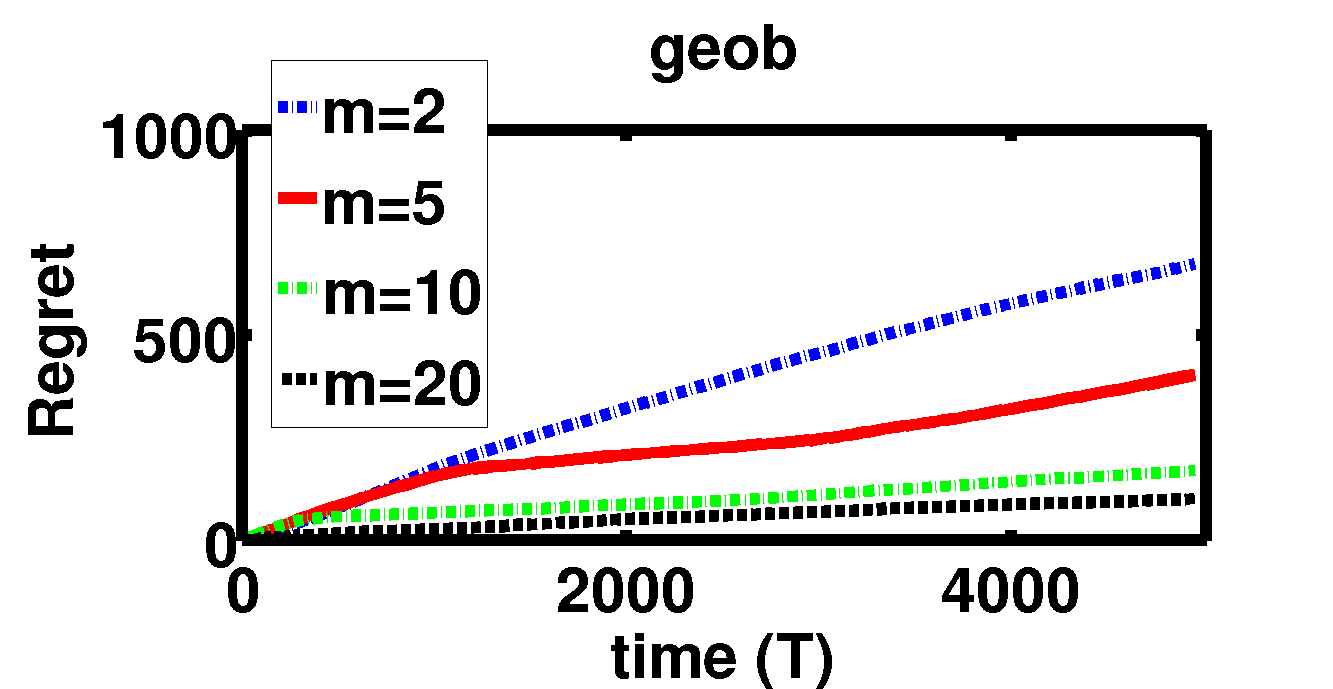}
\vspace{0pt}
\includegraphics[trim={3.2cm 0cm 0cm 0},clip,scale=0.1,width=0.245\textwidth]{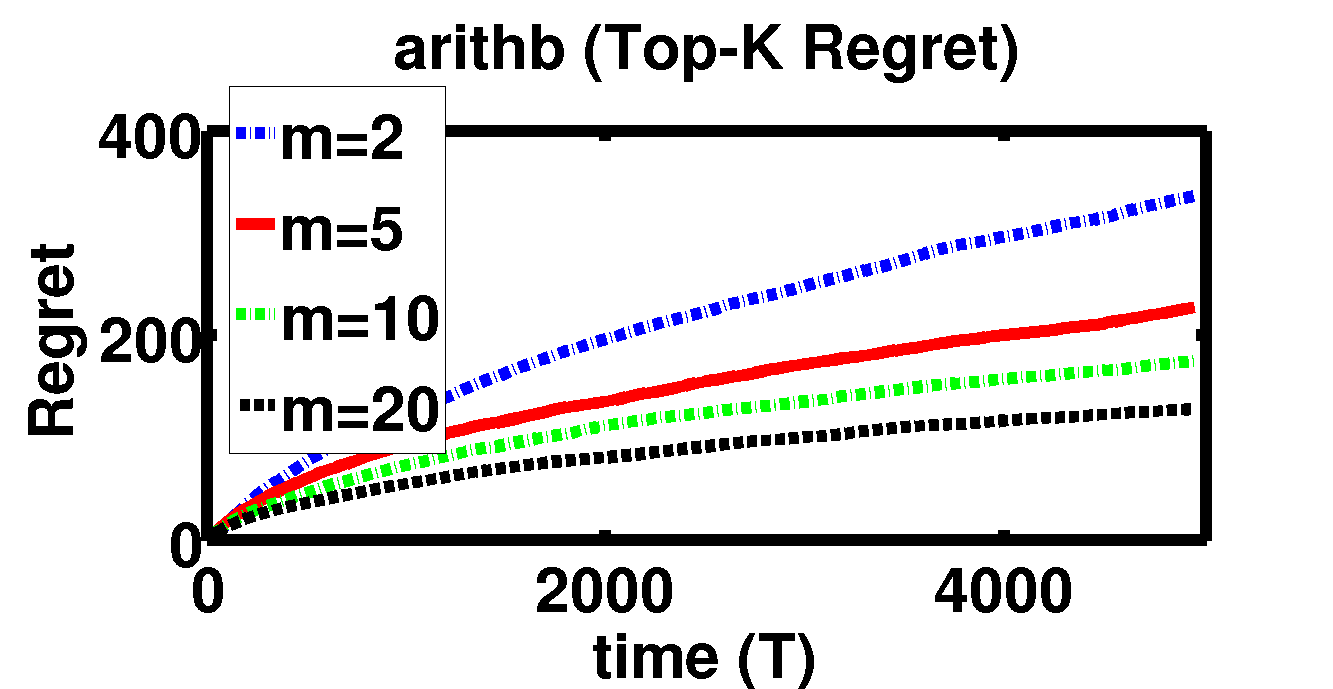}
\includegraphics[trim={3.2cm 0cm 0cm 0},clip,scale=0.1,width=0.245\textwidth]{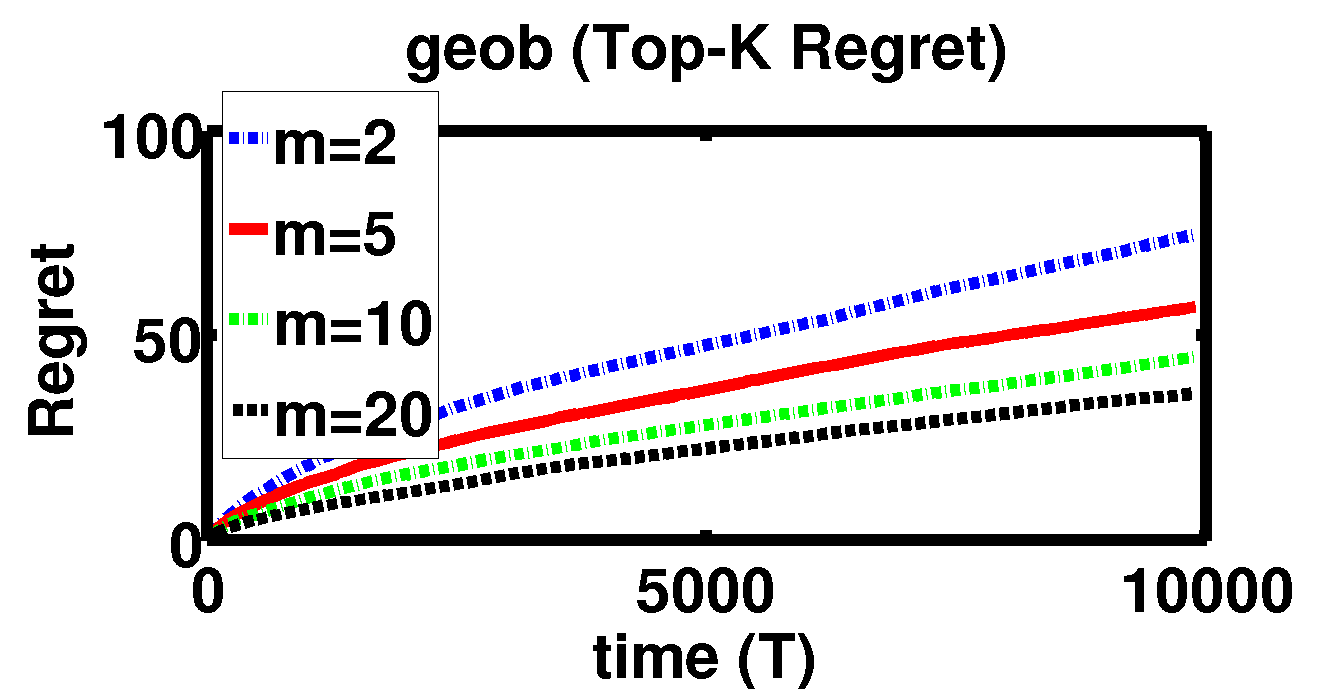}
\caption{Regret with varying $m$ with fixed $k = 40$ (on our proposed algorithm \algmm)}
\label{fig:reg_m}
\hspace{0pt}
\vspace{-10pt}
\end{figure*}

\vspace*{-20pt}
\section{Conclusion and Future Work}
\label{sec:conclusion}
\vspace*{-10pt}
Although we have analysed low-regret algorithms for learning with subset-wise preferences, there are several avenues for investigation that open up with these results. The case of learning with contextual subset-wise models is an important and practically relevant problem, as is the problem of considering mixed cardinal and ordinal feedback structures in online learning. Other directions of interest could be studying the budgeted version where there are costs associated with the amount of preference information that may be elicited in each round, or analysing the current problem on a variety of subset choice models, e.g. multinomial probit, Mallows, or even adversarial preference models etc.
%\red{Add: Extending to any general submodular value function.}
%Nevertheless, there are still plenty of threads remain open to explore along BB: with different choice-feedback model combinations, performance objectives, or even extending BB to newer settings like assortment-selection, revenue-maximization, contextual settings etc --- My current research mainly focuses on designing algorithms and developing theoretical guarantees for the above class of problems; however some of my other works also include \emph{Online learning with Structured losses}, \emph{Bandits on graphs}, \emph{Feature-based rank-aggregation from pairwise preferences}, \emph{Consistency of multiclass classification algorithms with complex loss measures} etc. (Please find more details in the enclosed resume).

\subsubsection*{Acknowledgements}
The authors are grateful to the anonymous reviewers for valuable feedback. This work is supported by a Qualcomm Innovation Fellowship 2019, and the Indigenous 5G Test Bed project grant from the Dept. of Telecommunications, Government of India. Aadirupa Saha thanks Arun Rajkumar for the valuable discussions, and the Tata Trusts and ACM-India/IARCS Travel Grants for travel support.

\newpage

\bibliographystyle{plainnat}
\bibliography{pref-comb-bandits}

\newpage

\appendix
%\onecolumn{

\section*{\large{Supplementary for Combinatorial Bandits with Relative Feedback}}
\vspace*{0.5cm}
%%%%%%%%%%%%%%%%%%%%%%%%%%%%%%%%%%%%%%%%%%%%%%%%%%%

%\vspace{-5pt}
\section{Related Works}
\label{app:rel}
%\vspace{-15pt}
Over the last decade, online learning from pairwise preferences has seen a widespread resurgence in the form of the Dueling Bandit problem, from the points of view of both pure-exploration (PAC) settings  \citep{BTM,Busa_pl,Busa_mallows,Busa14survey}, and regret minimisation \citep{Yue+12,SAVAGE,Zoghi+14RUCB,Ailon+14,Komiyama+15,DTS}.
% , together with 
%many useful extensions of the framework \citep{Zoghi+15,Ramamohan+16,sui2018advancements,CDB,Adv_DB,
%WeakDB,falahatgar_nips}.
In contrast, bandit learning with combinatorial, subset-wise preferences, though a natural and practical generalisation, has not received a commensurate treatment. 

There have been a few attempts in the batch (i.e., non-adaptive) setting for parameter estimation in utility-based subset choice models, e.g. Plackett-Luce or Thurstonian models \citep{Hajek+14,SueIcml+15,KhetanOh16,SueIcml+17}. 
In the online setup, a recent work by \citet{Brost+16} considers an extension of the dueling bandits framework where multiple arms are chosen in each round, but they receive comparisons for each pair, and there are no regret guarantees stated for their algorithm.  
Another similar work is DCM-bandits \citep{DCM}, where a list of $k$ distinct items are offered at each round and the users choose one or more from it scanning the list from top to bottom. However due to this cascading nature of their feedback model, this is also not strictly a relative subset-wise preference model unlike ours, since the utility or attraction weight of an item is assumed to be independently drawn, and so their learning objective differs substantially. %Also their regret objective demands to find the best set of $k$ distinct items as opposed to finding a unique best item.

A related body of literature lies in dynamic assortment selection, where the goal is to offer a subset of items to customers in order to maximise expected revenue. %The demand of any item depends on the substitution behavior of the customers, captured by a choice model.
%The problem has been studied under different choice models, e.g. Multinomial-Logit \citep{assort-mnl}, Mallows and mixture of Mallows \citep{assort-mallows}, Markov chain-based choice models \citep{assort-markov}, single transition model \citep{assort-stm}, general discrete choice models \citep{assort-discrete}, etc. 
A specific, bandit (online) counterpart of this problem has been studied in the recent work of Agrawal et al. \cite{Agrawal+16,Agrawal+17TS}, although it takes items' prices into account due to which their notion of the `best subset' is rather different from our `benchmark subset', and the two settings are incomparable in general. More specifically, in this setting,
\begin{enumerate}
\item Their assumption of a {\em no-purchase} option, say item-$0$, necessarily present in every set and having the {\em known} and {\em highest} MNL parameter value $\theta_0 = 1$, is crucial for their algorithm design as well as the regret analysis --- more specifically this helps them to estimate the MNL model parameters easily. We however do not make this assumption, due to which it is more challenging to estimate the MNL model parameters in our case. This is also precisely the reason why the algorithm of \citet{Agrawal+16} cannot be directly applied for solving our problem.
\item The regret objective boils down to the top-$k$ best arm identification problem when all item prices are same, say $r_i = 1, \forall i \in [n]$. So in a sense we actually solve a special case of the assortment selection objective -- the top $k$ item(s) -- but without assumptions on the no-purchase item with known highest parameter value.
\item \citet{Agrawal+16} show gap independent $\tilde O(\sqrt{n T})$ regret for their algorithm and this is later improved to  gap-dependent $O(n^2 \ln T)$ regret \citet{Agrawal+17Dyn}; however, the latter guarantee is suboptimal by a factor of $n$, whereas we show tightness of the regret performance of our proposed algorithms by proving matching lower bound guarantees.
\end{enumerate}

Some recent work addresses the probably approximately correct (PAC) version of the best arm(s) identification problem from subsetwise preferences \cite{ChenSoda+18,Ren+18}, which is qualitatively different than the optimisation objective considered here. 
%
%There are a few very recent works which address the probably approximately correct (PAC) version of the best arm(s) identification problem from subsetwise preferences \cite{ChenSoda+18,Ren+18,SGwin18}, however the setup differs significantly from ours since they consider the `pure-exploration' version of the problem where the objective is to find a benchmark (set of) item(s) with minimum possible sample complexity, unlike our setup which needs to balance out the `exploration-exploitation tradeoff' for the regret minimization objective.
The work which is perhaps closest in spirit to ours is that of \citet{SG18}, but they consider a much more elementary subset choice model based on pairwise preferences, unlike the standard MNL model rooted in choice theory. \citet{Sui+17} also address a similar  problem; however, a key difference lies in the feedback which consists of outcomes of one or more pairs from the played subset, as opposed to our winner or \tf\, which is often practical. 

Lastly, like the dueling bandit, our more general MNL regret problem can be viewed as a stochastic partial monitoring problem \citep{bartok2011minimax}, in which the reward or loss of a subset play is not directly observed; instead, only stochastic feedback depending on the subset's parameters is observed. Moreover, under one of the regret structures we consider (\objbest, Sec. \ref{sec:alg_wiwf}), playing the optimal subset (the single item with the highest value) yields no useful information.

\section{Properties of MNL model}
\label{app:pl_prp}

\begin{defn}[Independence of Irrelevant Alternatives (IIA) property]
\label{def:iia}
A choice model is said to possess the {\it Independence of Irrelevant Attributes (IIA)} property if the ratio of probabilities of choosing any two items, say $i_1$ and $i_2$ from within any choice set $S \ni {i_1,i_2}$ is independent of a third alternative $j$ present in $S$ \citep{IIA-relevance16}. More specifically,
$%\begin{align*}
\frac{Pr(i_1|S_1)}{Pr(i_2|S_1)} = \frac{Pr(i_1|S_2)}{Pr(i_2|S_2)} \text{ for any two distinct subsets } S_1,S_2 \subseteq [n]
$ %\end{align*} 
that contain $i_1$ and $i_2$. One such example is the MNL choice model as follows from Defn. \ref{def:mnl_thet}. %the {\it PL} model: From \eqref{eq:prob_PL}, it is easy to note that if $Pr$ is a PL model with parameters $(\theta_1,\ldots,\theta_n)$, then for all $S \subseteq [n]$, $\frac{Pr(i_1|S)}{Pr(i_2|S)}= \frac{{\theta_{i_1}}}{{\theta_{i_2}}}$.
%\red{Add the Simulator result in the main result/ cite earlier work?}
\end{defn}

%We describe the idea of \rb\, update before proceeding to Lem. \ref{lem:pl_simulator}.

IIA turns out to be very valuable in estimating the parameters of a PL model, with high confidence, via \emph{Rank-Breaking} -- the idea of extracting pairwise comparisons from (partial) rankings and applying estimators on the obtained pairs, treating each comparison independently, as described below. %Although this technique has previously been used in batch (offline) PL estimation \citep{KhetanOh16}, we show that it can be used in online problems for the first time. We crucially exploit this property of the PL model in the algorithms we design (Algorithms \ref{alg:trace_bb}-\ref{alg:half_bb}), and in establishing their correctness and sample complexity guarantees.

\begin{defn}[\textbf{Rank-Breaking} \cite{AzariRB+14,KhetanOh16}]
\label{def:rb}
This is a procedure of deriving pairwise comparisons from multiwise (subsetwise) preference information. Formally, given any set $S \subseteq [n]$, $m \le |S| < n$, if $\bsigma \in \bSigma_{S}^m$ denotes a possible \tf\, of $S$, \rb\, considers each item in $S$ to be beaten by its preceding items in $\bsigma$ in a pairwise sense and extracts out total $\sum_{i = 1}^{m}(k-i) = \frac{m(2k-m-1)}{2}$ such pairwise comparisons. For instance, given a full ranking of a set of $4$ elements $S = \{a,b,c,d\}$, say $b \succ a \succ c \succ d$, Rank-Breaking generates the set of $6$ pairwise comparisons: $\{(b\succ a), (b\succ c), (b\succ d), (a\succ c), (a\succ d), (c\succ d)\}$. Similarly, given the ranking of only $2$ most preferred items say $b \succ a$, it yields the $5$ pairwise comparisons $(b, a\succ c),(b,a\succ d)$ and $(b\succ a)$ etc. See Line $10$ of Algorithm \ref{alg:mm} for example.
\end{defn}

Owning to the IIA property of \mnl\ model, one can show the following guarantee on the empirical pairwise estimates $\hp_{ij}(T) = \frac{n_i(T)}{n_{ij}(T)}$ obtained via \rb\ on MNL based subsetwise preferences:

\begin{restatable}[\cite{SGwin18}]%[Deviations of pairwise win-probability estimates for the PL model]
{lem}{plsimulator}
\label{lem:pl_simulator}
Consider a \mnl \, model, and fix two distinct items $i,j \in [n]$. Let $S_1, \ldots, S_T$ be a sequence of (possibly random) subsets of $[n]$ of size at least $2$, where $T$ is a positive integer, and $i_1, \ldots, i_T$ a sequence of random items with each $i_t \in S_t$, $1 \leq t \leq T$, such that for each $1 \leq t \leq T$, (a) $S_t$ depends only on $S_1, \ldots, S_{t-1}$, and (b) $i_t$ is distributed as the Plackett-Luce winner of the subset $S_t$, given $S_1, i_1, \ldots, S_{t-1}, i_{t-1}$ and $S_t$, and (c) $\forall t: \{i,j\} \subseteq S_t$ with probability $1$. Let $n_i(T) = \sum_{t=1}^T \1(i_t = i)$ and $n_{ij}(T) = \sum_{t=1}^T \1(\{i_t \in \{i,j\}\})$. Then, for any positive integer $v$, and $\eta \in (0,1)$,
\[Pr\left( \frac{n_i(T)}{n_{ij}(T)} - \frac{\theta_i}{\theta_i + \theta_j} \ge \eta, \; n_{ij}(T) \geq v \right) \vee \, Pr\left( \frac{n_i(T)}{n_{ij}(T)} - \frac{\theta_i}{\theta_i + \theta_j} \le -\eta, \; n_{ij}(T) \geq v \right) \leq e^{-2v\eta^2}. \]
\end{restatable}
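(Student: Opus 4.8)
The statement is a concentration inequality for the rank-breaking estimator $\frac{n_i(T)}{n_{ij}(T)}$ around the true pairwise preference $p_{ij} = \frac{\theta_i}{\theta_i + \theta_j}$, under a sequence of adaptively chosen subsets. The fundamental obstacle is that the individual indicator outcomes $\1(i_t = i)$ are \emph{not} i.i.d. Bernoulli variables: the subsets $S_t$ are chosen adaptively based on history, and for a given round $t$ the probability that $i_t = i$ (conditioned on $i_t \in \{i,j\}$) depends on which other items populate $S_t$. The saving grace is the IIA property of the MNL model (Defn.~\ref{def:iia}): conditioned on the event $\{i_t \in \{i,j\}\}$, the probability that the winner is $i$ rather than $j$ equals exactly $\frac{\theta_i}{\theta_i+\theta_j}$, \emph{independent} of the rest of $S_t$. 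This is the crux that collapses the combinatorial complexity down to a clean pairwise Bernoulli structure.

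The plan is to build a martingale and apply a Hoeffding/Azuma-type argument together with a maximal inequality to handle the random (data-dependent) stopping level $n_{ij}(T) \geq v$. First I would define, for each round $t$, the ``relevant'' rounds as those in which both $i$ and $j$ appear in $S_t$ and the winner $i_t$ lands in $\{i,j\}$; by hypothesis~(c) every round has $\{i,j\}\subseteq S_t$, so the relevant rounds are exactly those with $i_t \in \{i,j\}$. Let $\tau_1 < \tau_2 < \cdots$ enumerate these relevant rounds, and define $Z_r = \1(i_{\tau_r} = i)$. The key claim, justified by IIA, is that conditioned on the history up to just before $\tau_r$, each $Z_r$ is Bernoulli with parameter exactly $p_{ij} = \frac{\theta_i}{\theta_i + \theta_j}$, regardless of the adaptively chosen subset. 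Consequently $M_s := \sum_{r=1}^{s}(Z_r - p_{ij})$ is a martingale with bounded increments in $[-1,1]$ with respect to the natural filtration.

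With this martingale in hand, I would apply the Hoeffding–Azuma inequality: for any fixed number $s$ of relevant rounds, $Pr\big(\frac{1}{s}\sum_{r=1}^s Z_r - p_{ij} \geq \eta\big) \leq e^{-2s\eta^2}$, and symmetrically for the lower tail. To convert this into the stated bound on the event $\{n_{ij}(T) \geq v\}$, note that $n_{ij}(T)$ is precisely the number of relevant rounds and $\frac{n_i(T)}{n_{ij}(T)} = \frac{1}{n_{ij}(T)}\sum_{r=1}^{n_{ij}(T)} Z_r$. The subtlety is that $n_{ij}(T)$ is itself random, so I cannot directly plug it into the fixed-$s$ bound. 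Here I would invoke a standard maximal/stopping argument on the exponential supermartingale $\exp(\lambda M_s - s\lambda^2/8)$: the deviation probability on $\{n_{ij}(T) \geq v\}$ is controlled by the worst case over $s \geq v$, and since the Hoeffding bound $e^{-2s\eta^2}$ is decreasing in $s$, the supremum over $s \geq v$ is attained at $s = v$, yielding the claimed $e^{-2v\eta^2}$.

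The main obstacle, and the step requiring the most care, is the justification that $Z_r$ has conditional mean \emph{exactly} $p_{ij}$ uniformly over all adaptively chosen subsets — this is where the IIA property does all the work, and it must be argued that conditioning on $\{i_t \in \{i,j\}\}$ does not leak information about the index through the subset composition. The second delicate point is the transition from the fixed-horizon Hoeffding bound to the random-level event $\{n_{ij}(T) \geq v\}$; I expect to handle this cleanly via the exponential supermartingale and optional stopping rather than a crude union bound over all possible values of $n_{ij}(T)$, since a union bound would introduce spurious extra factors and break the clean $e^{-2v\eta^2}$ form. (This result is quoted from \cite{SGwin18}, so I would expect the full argument to follow these lines in that reference.)
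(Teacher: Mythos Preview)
Your proposal is correct and hits the essential points: the IIA property yields the conditional Bernoulli structure, and a maximal/optional-stopping argument on the exponential supermartingale handles the random level $n_{ij}(T)\ge v$. The paper's own (sketched) argument takes a slightly different packaging: rather than building a martingale from the process, it uses an explicit \emph{coupling} construction. An i.i.d.\ sequence $Z_1,Z_2,\ldots$ of Bernoulli$\big(\theta_i/(\theta_i+\theta_j)\big)$ variables is laid down in advance; at each round $t$, an independent coin with bias $(\theta_i+\theta_j)/\sum_{\ell\in S_t}\theta_\ell$ decides whether $i_t\in\{i,j\}$, and if so, $i_t$ is read off from the next unused $Z_r$ (otherwise $i_t$ is drawn from the MNL law on $S_t\setminus\{i,j\}$). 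IIA guarantees this reproduces the correct joint law of $(S_1,i_1,\ldots,S_T,i_T)$. The payoff of the coupling is that $n_i(T)/n_{ij}(T)$ becomes \emph{literally} the empirical mean of the first $n_{ij}(T)$ terms of an i.i.d.\ Bernoulli sequence, so one can invoke classical Hoeffding on prefixes together with the same Ville-type maximal step you describe. Your martingale route and the paper's coupling route are two sides of the same idea; the coupling makes the i.i.d.\ structure manifest and sidesteps any reasoning about stopping-time filtrations, while your martingale formulation works directly with the process as given. Both arrive at the $e^{-2v\eta^2}$ bound by the same exponential-supermartingale reasoning.
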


\begin{rem}
Above lemma is crucially used in proving the regret bounds of our proposed algorithms (Alg. \ref{alg:mm} and \ref{alg:kmm}), in particular see the derivation of Lem. \ref{lem:cdelta}.
\end{rem}

\iffalse % ---------------- Proof Sketch -------------------
\begin{proof}\textbf{(sketch)}.
The proof uses a novel coupling argument to work in an equivalent probability space for the PL model with respect to the item pair $i,j$, as follows. Let $Z_1, Z_2, \ldots$ be a sequence of iid Bernoulli random variables with success parameter $\theta_i/(\theta_i + \theta_j)$. A counter $C$ is first initialized to $0$. At each time $t$, given $S_1, i_1, \ldots, S_{t-1}, i_{t-1}$ and $S_t$,  an independent coin is tossed with probability of heads $(\theta_i + \theta_j)/\sum_{k \in S_t} \theta_k$. If the coin lands tails, then $i_t$ is drawn as an independent sample from the Plackett-Luce distribution over $S_t \setminus \{i,j\}$, else, the counter is incremented by $1$, and $i_t$ is returned as $i$ if $Z(C) = 1$ or $j$ if $Z(C) = 0$. % where $C$ is the present value of the counter.
This construction yields the correct joint distribution for the sequence $i_1, S_1, \ldots, i_T, S_T$, because of the IIA property of the PL model: 
	\[ Pr(i_t = i | i_t \in \{i,j\}, S_t) = \frac{Pr(i_t = i | S_t)}{Pr(i_t \in \{i,j\} | S_t)} = \frac{\theta_i/\sum_{k \in S_t} \theta_k}{(\theta_i + \theta_j)/\sum_{k \in S_t} \theta_k} = \frac{\theta_i}{\theta_i + \theta_j}. \]
%	Furthermore, $i_t \in \{i,j\}$ if and only if $C$ is incremented at round $t$, and $i_t = i$ if and only if $C$ is incremented at round $t$ and $Z(C) = 1$.
The proof now follows by applying Hoeffding's inequality on prefixes of the sequence $Z_1,Z_2, \ldots$.% along with the crucial observation that $S_1, \dots, S_T, X_1, \ldots, X_T$ are independent of $Z_1, Z_2, \ldots,$, and so $n_{ij}(T) \in \sigma(S_1, \dots, S_T, X_1, \ldots, X_T)$ is independent of $Z_1, \ldots, Z_m$ for any fixed $m$.
%to derive the desired concentration guarantees.
\end{proof}

\fi % ----------------------------------------------
%%%%%%%%%%%%%%%%%%%%%%%%%%%%%%%%%%%%%%%%%%%%%%%%%%%

\section{Supplementary for Sec. \ref{sec:res_bi}}

%\iffalse %%%%%%%%%%%%%%%%%%%%%%%%%%%%%%%%%%%%%%%%%%%%%

\subsection{Algorithm Pseudocode for \objbest}
\label{app:alg_wiwf}

%\vspace*{-28pt}
\begin{center}
\begin{algorithm}[H]
   \caption{\textbf{\algmm}}
   \label{alg:mm}
\begin{algorithmic}[1]
   \STATE {\bfseries init:} $\alpha > 0.5$, $\W \leftarrow [0]_{n \times n}$, $\cB_0 \leftarrow \emptyset$ %~~\red{($\alpha$ to be tuned)} %//Best candidate
   \FOR{$t = 1,2,3, \ldots, T$} %//Build $S_t$
	\STATE Set $\bN = \W + \W^\top$, and $\hat{\P} = \frac{\W}{\bN}$. Denote $N = [n_{ij}]_{n \times n}$ and $\hat P = [\hp_{ij}]_{n \times n}$.
	\STATE Define $u_{ij} = \hp_{ij} + \sqrt{\frac{\alpha\ln t}{n_{ij}}}, \, \forall i,j \in [n], i\neq j$, $u_{ii} = \frac{1}{2}, \, \forall i \in [n]$. $\textbf{U} = [u_{ij}]_{n \times n}$ %(Set $\frac{x}{0} = 1$, $\forall x \in R$)
	\STATE $\cC_t \leftarrow \{i \in [n] ~|~ u_{ij} > \frac{1}{2}, \, \forall j \in [n]\setminus\{i\}\}$;  $\cB_t \leftarrow \cC_t \cap \cB_{t-1}$ %//Possible Condorcet winners
	\STATE \textbf{if} $|\cC_t| = 1$, \textbf{then} set $\cB_t \leftarrow \cC_t$, $S_t \leftarrow \cC_t$, and go to Line $9$
	%\STATE \textbf{if} $|\cC_t|<m+1$ \textbf{then} $\cC_t \leftarrow \cC_t \cup$ any $(m+1-|\cC_t|)$ items from $[n]\sm\cC_t$
	\STATE \textbf{if $\cB_t \neq \emptyset$} \textbf{then} set $S_t \leftarrow \cB_t$, \textbf{else} select any item $a \in \cC_t$, and set $S_t \leftarrow \{a\}$ %$\cC'_t = \cC_t$
	\STATE $S_t \leftarrow S_t ~\cup \,$ \algbld$(\textbf{U},S_t,[n]\sm S_t, m)$
	\STATE Play $S_t$, and receive: $\bsigma_t \in \bSigma_{S_t}^m$ 
%   \STATE Update: 
%   \FOR {$k' = 1,2, \ldots, \min(|S_t|-1,m)$}
   {\STATE \label{line:mm_rb} $W(\sigma_t(k'),i) \leftarrow W(\sigma_t(k'),i) + 1 ~~ \forall i \in S_t \sm \sigma_t(1:k')$} for all $k' = 1,2, \ldots, \min(|S_t|-1,m)$
%   \ENDFOR   
   \ENDFOR
\end{algorithmic}
\end{algorithm}
\end{center}
%\vspace*{-15pt}

%\vspace*{-15pt}
\begin{center}
\begin{algorithm}[H]
   \caption{\textbf{\algbld}\,$(\textbf{U},S,I,\ell)$}
   \label{alg:bld}
\begin{algorithmic}[1]
   \STATE {\bfseries input:} $\textbf{U}$: UCB matrix of $\hat \P$, S: Set to build, I: pool of items $I$, $\ell > 0$: Number of items to draw
    \STATE $\cC \leftarrow \{i \in I ~|~ u_{ij} > \frac{1}{2}, \, \forall j \in I\sm\{i\}\}$ 
    \WHILE{$|\cC| < \ell$} 
    \STATE $S \leftarrow S \cup \cC$; $I \leftarrow I \sm \cC$; $\cC \leftarrow \{i \in I ~|~ u_{ij} > \frac{1}{2}, \, \forall j \in I\sm\{i\}\}$; $\ell \leftarrow \ell - |\cC|$ 
    \ENDWHILE
	\FOR{$k' = 2, 3,\ldots,\ell$}
	\STATE $a \leftarrow \underset{c \in I \sm S}{\arg \max}\Big[ \min_{i \in S}u_{ci} \Big]$; $S \leftarrow S \cup\{a\}$ 
	\ENDFOR
	\STATE {\bfseries return:} $S$
\end{algorithmic}
\end{algorithm}
\end{center}
\vspace*{-10pt}

\subsection{Restating the change of  measure Lemma 1 of \citet{Kaufmann+16_OnComplexity}}
\label{app:gar}
\begin{restatable}[\cite{Garivier+16}]{lem}{garivier}
\label{lem:gar16}
Given any bandit instance $(A,\bmu)$, with $A$ being the arm set of MAB, and $\bmu = \{\mu_i, ~\forall i \in A \}$ being the set of reward distributions associated to $A$ with arm $1$ having the highest expected reward, for any suboptimal arm $a \in A\sm\{1\}$, consider an altered bandit instance $\bmu^a$ with $a$ being the (unique) optimal arm (the one with highest expected reward) for $\bmu^a$, and let $\bmu$ and $\bmu^a$ be mutually absolutely continuous for all $a \in A\sm\{1\}$. At any round $t$, let $A_t$ and $Z_t$ denote the arm played and the observation (reward) received, respectively. Let $\cF_t = \sigma(A_1,Z_1,\ldots,A_t,Z_t)$ be the sigma algebra generated by the trajectory of a sequential bandit algorithm upto round $t$. Then, for any $\cF_T$-measurable random variable $Z$ with values in $[0, 1]$ it satisfies:

{
$ %\begin{align}
%\label{eqn:gar}
\sum_{i \in A}\E_{\bmu}[N_i(T)]KL(\mu_i, \mu^a_i) \ge kl(\E_{\bmu}[Z], \E_{\bmu^a}[Z]),
$ %\end{align}
}
where $N_i(T)$ denotes the number of pulls of arm $i \in [n]$ in $T$ rounds, KL is the Kullback-Leibler divergence between distributions, and $kl(p,q)$ is the Kullback-Leibler divergence between Bernoulli distributions with parameters $p$ and $q$.
\end{restatable}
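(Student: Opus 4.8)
The plan is to prove this via the classical change-of-measure argument underlying information-theoretic bandit lower bounds, which rests on two ingredients: a likelihood-ratio (Wald-type) identity and the data-processing inequality. First I would fix the algorithm $\cA$ and work on the canonical trajectory space carrying $(A_1,Z_1,\ldots,A_T,Z_T)$, with $\cF_t = \sigma(A_1,Z_1,\ldots,A_t,Z_t)$. Both bandit models $\bmu$ and $\bmu^a$, paired with the \emph{same} algorithm $\cA$, induce laws $\P^{\bmu}_{\cF_T}$ and $\P^{\bmu^a}_{\cF_T}$ on this space; the assumed mutual absolute continuity of $\bmu$ and $\bmu^a$ guarantees that these trajectory laws are mutually absolutely continuous, so the log-likelihood ratio is well defined.

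The key observation is that the algorithm's action-selection kernel at round $t$ depends only on the past $\cF_{t-1}$ and is identical under both models, so it cancels in the Radon--Nikodym derivative, leaving
\[
\frac{d\P^{\bmu}_{\cF_T}}{d\P^{\bmu^a}_{\cF_T}} = \prod_{t=1}^T \frac{d\mu_{A_t}}{d\mu^a_{A_t}}(Z_t).
\]
Taking logarithms, then $\E_{\bmu}[\cdot]$, and conditioning on $\cF_{t-1}$ (so that, given $A_t=i$, one has $Z_t\sim\mu_i$), each round contributes $\E_{\bmu}[KL(\mu_{A_t},\mu^a_{A_t})]$; summing over $t$ and grouping by arm yields the identity $KL(\P^{\bmu}_{\cF_T},\P^{\bmu^a}_{\cF_T}) = \sum_{i\in A}\E_{\bmu}[N_i(T)]\,KL(\mu_i,\mu^a_i)$, which is exactly the left-hand side of the claim.

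For the right-hand side I would invoke the data-processing inequality twice. Given the $\cF_T$-measurable $Z\in[0,1]$, pushing the two trajectory laws forward under $Z$ gives distributions on $[0,1]$ with means $\E_{\bmu}[Z]$ and $\E_{\bmu^a}[Z]$, and DPI bounds their $KL$ by $KL(\P^{\bmu}_{\cF_T},\P^{\bmu^a}_{\cF_T})$. Composing with the Bernoulli-randomization kernel that maps $z\in[0,1]$ to a $\{0,1\}$ outcome with success probability $z$ sends these pushforwards to $\mathrm{Bernoulli}(\E_{\bmu}[Z])$ and $\mathrm{Bernoulli}(\E_{\bmu^a}[Z])$; a second application of DPI then yields $kl(\E_{\bmu}[Z],\E_{\bmu^a}[Z]) \le KL(\P^{\bmu}_{\cF_T},\P^{\bmu^a}_{\cF_T})$. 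Chaining this inequality with the Wald identity of the previous step proves the lemma.

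The main obstacle is the careful measure-theoretic justification of the likelihood decomposition: for a general history-dependent, internally randomized policy one must verify that the action-selection probabilities genuinely cancel in the density and that the interchange of expectation, summation, and conditioning is valid. A clean way to discharge this is to prove the identity by induction on $t$ using the chain rule for KL divergence along the filtration $(\cF_t)_t$. The two DPI steps are standard once the Bernoulli-randomization trick is in place, so essentially all of the rigor lies in handling the trajectory likelihood.
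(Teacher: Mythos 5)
Your proposal is correct and is exactly the standard change-of-measure argument: the Wald-type identity $KL(\P^{\bmu}_{\cF_T},\P^{\bmu^a}_{\cF_T}) = \sum_{i}\E_{\bmu}[N_i(T)]KL(\mu_i,\mu_i^a)$ obtained from the cancellation of the policy kernels in the likelihood ratio (or, equivalently, the chain rule along the filtration), followed by the data-processing inequality with the Bernoulli-randomization kernel to get $kl(\E_{\bmu}[Z],\E_{\bmu^a}[Z])$ on the right-hand side. Note that the paper itself gives no proof of this lemma --- it only restates it as a result imported from the cited reference of Garivier et al./Kaufmann et al. --- and your argument coincides with the proof given in that original source.
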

%\garivier*

%\fi %%%%%%%%%%%%%%%%%%%%%%%%%%%%%%%%%%%%%%%%%%%%%

\subsection{Proof of Thm. \ref{thm:lb_wiwf}}
\label{app:lb_wiwf}

\lbwiwf*

\begin{proof}
The foundation of the current lower bound analysis stands on the ground on constructing \mnl\, instances, and slightly modified versions of it such that no algorithm can achieve \nr\, property on these instances without incurring $\Omega(n\ln T)$ regret. We describe the our constructed problem instances below:

Consider an \mnl\, instance with the arm (item) set $A$ containing all subsets of sizes $1,2,\ldots$ upto $k$ of $[n]$: $A = \{S = (S(1), \ldots S(k')) \subseteq [n] \mid k' \in [k]\}$. Let MNL$(n,\btheta^1)$ be the true distribution associated to the bandit arms $[n]$, given by the MNL parameters $\btheta^1 = (\theta_1^1,\ldots,\theta_n^1)$, such that $\theta_1^1 > \theta_i^1, \, \forall i \in [n]\setminus\{1\}$ such that,
\begin{align*}
\textbf{True Instance: } \text{MNL}(n,\btheta^1): \theta_1^1 > \theta_2^1 = \ldots = \theta_n^1 = \theta ~~\text{(say)}.
\end{align*}

for some $\theta \in \R_+$. We moreover denote $\Lambda = (\theta_1^1 - \theta)$. Clearly, the \bi\, of MNL$(n,\btheta^1)$ is $a^* = 1$. Now for every suboptimal item $a \in [n]\setminus \{1\}$, consider the altered problem instance MNL$(n,\btheta^a)$ such that:
\begin{align*}
\textbf{Instance a: } \text{MNL}(n,\btheta^a): \theta_a^a = \theta_1^1 + \epsilon = \theta + (\Lambda + \epsilon); ~\theta_i^a = \theta_i^1, ~~\forall i \in [n]\sm \{a\}
%\theta_a > \theta_1 > \theta_2 \ge \ldots  \theta_{a-1} \ge \theta_{a+1} \ge \ldots \ge \theta_n,
\end{align*}

for some $\epsilon > 0$. Clearly, the \bi\, of MNL$(n,\btheta^a)$ is $a^* = a$. Note that, for problem instance MNL$(n,\btheta^a) \, a \in [n]$, the probability distribution associated to arm $S \in A$ is given by
\[
p^a_S \sim Categorical(p_1, p_2, \ldots, p_k), \text{ where } p_i = Pr(i|S) = \frac{\theta_i^a}{\sum_{j \in S}\theta_j^a}, ~~\forall i \in [k], \, \forall S \in A, \, \forall a \in [n],
\]
since recall that $Pr(i|S)$ is as defined in Defn. \ref{def:mnl_thet}. 
Now applying Lem. \ref{lem:gar16} we get, %with $Z = \frac{N_{S^a_*}(T)}{T}$ we get,

\begin{align}
\label{eq:FI_a}
\sum_{\{S \in A \sm \{a\} \mid a \in S\}}\E_{\btheta^1}[N_S(T)]KL(p^1_S, p^a_S) \ge {kl(\E_{\btheta^1}[Z], \E_{\btheta^a}[Z])}.
\end{align}

The above result holds from the straightforward observation that for any arm $S \in \cA$ with $a \notin S$, $p^1_S$ is same as $p^a_S$, hence $KL(p^1_S, p^a_S)=0$, $\forall S \in A, \,a \notin S$ or if $S = \{a\}$. 

For the notational convenience we will henceforth denote $S^a = \{S \in \cA \sm \{a\} \mid a \in S\}$. 
Now let us analyse the right hand side of \eqref{eq:FI_a}, for any set $S \in S^a$. We further denote $\Lambda' = \Lambda + \epsilon = (\theta_1^1 - \theta) + \epsilon$, $k' = |S| \in [k]$, $r = \1(1 \in S)$, $q = (k'-r)$, and $\theta_S^a = \sum_{i \in S}\theta_i^a$ for any $a \in [n]$.

Note that by construction of above problem instances we can further derive that for any $i \in S$:
\begin{align*}
p^1_S(i) = 
\begin{cases} 
\frac{r\theta^1_1}{\theta^1_S} = \frac{\theta+\Lambda}{\theta|S| + r\Lambda}, \text{ such that } i = 1,\\
\frac{\theta}{\theta^1_S} = \frac{\theta}{\theta|S| + r\Lambda}, \text{ otherwise. }
\end{cases}
\end{align*}

On the other hand, for problem \textbf{Instance-a}, we have that: 

\begin{align*}
p^a_S(i) = 
\begin{cases} 
\frac{r\theta^1_1}{\theta^1_S + \Lambda'} = \frac{\theta+\Lambda}{\theta|S| + \Lambda(1+r) + \epsilon}, \text{ such that } i = 1,\\
\frac{\theta^1_1 + \epsilon}{\theta^1_S + \Lambda'} = \frac{\theta + \Lambda + \epsilon}{\theta|S| + \Lambda(1+r) + \epsilon}, \text{ such that } i = a,\\
\frac{\theta}{\theta^1_S + \Lambda'} = \frac{\theta}{\theta|S| + \Lambda(1+r) + \epsilon}, \text{ otherwise. }
\end{cases}
\end{align*}

Now using the following upper bound on $KL(\p,\q) \le \sum_{x \in \X}\frac{p^2(x)}{q(x)} -1$, $\p$ and $\q$ be two probability mass functions on the discrete random variable $\X$ \cite{klub16}, we get:

\begin{align}
\label{eq:lb_wiwf_kl}
\nonumber KL(p^1_S, p^a_S) & \le \sum_{i \in S\sm\{a\}}\bigg(\frac{\theta_i^1}{\theta_S^1}\bigg)^2\bigg( \frac{\theta_S^a}{\theta_i^a}\bigg) + \bigg(\frac{\theta_a^1}{\theta_S^1}\bigg)^2\bigg( \frac{\theta_S^a}{\theta_a^a}\bigg) - 1\\
\nonumber & = \sum_{i \in S\sm\{a\}}\bigg(\frac{\theta_i^1}{\theta_S^1}\bigg)^2\bigg( \frac{\theta_S^1 + \Lambda'}{\theta_i^1}\bigg) + \bigg(\frac{\theta_a^1}{\theta_S^1}\bigg)^2\bigg( \frac{\theta_S^1 + \Lambda'}{\theta_a^1 + \Lambda'}\bigg) - 1\\
\nonumber & = \bigg(\frac{\theta_S^1 + \Lambda'}{(\theta_S^1)^2} \bigg) \bigg(\sum_{i \in [n]\sm\{a\}}\theta^1_i + \frac{(\theta_a^1)^2}{\theta_a^1 + \Lambda'}\bigg) - 1\\
\nonumber & = \bigg(\frac{\theta_S^1 + \Lambda'}{(\theta_S^1)^2} \bigg) \bigg( \frac{\theta_a^1\theta_S^1 + \Lambda'(\theta_S^1 - \theta_a^1)}{\theta_a^1 + \Lambda'}\bigg) - 1 ~~~\bigg[\text{replacing } \sum_{i \in [n]\sm\{a\}}\theta^1_i = (\theta_S^1 - \theta_a^1)\bigg]\\
\nonumber & = \frac{ \Lambda'^2(\theta_S^1 - \theta_a^1)}{(\theta_S^1)^2(\theta_a^1 + \Lambda')} \le \frac{ \Lambda'^2}{\theta_S^1(\theta_a^1 + \Lambda')} = \frac{ \Lambda'^2}{\theta_S^1(\theta_1^1 + \epsilon)}\\
& = \frac{ (\Lambda+\epsilon)^2}{(\theta|S|+r\Lambda)(\theta_1^1 + \epsilon)} \le \frac{ (\Lambda+\epsilon)^2}{\theta|S|(\theta_1^1 + \epsilon)}
\end{align}

Let us now analyze the left hand side of \eqref{eq:FI_a}, with $Z = \frac{N_{a}(T)}{T}$, where $N_a(T)$ simply denotes the number of times the singleton set containing item $\{a\}$ is played by $\cA$, for any suboptimal item $a \in [n]\sm\{1\}$. Thus we get,
%Applying an analysis similar to Theorem $1$ of \cite{Garivier+16}, we get:
\begin{align}
\label{eq:win_lb1}
{kl(\E_{\btheta^1}[Z], \E_{\btheta^a}[Z])} \ge \Bigg( 1 - \frac{\E_{\btheta^1}[N_1(T)]}{T} \Bigg)\ln \frac{T}{T - \E_{\btheta^a}[N_1(T)]} - \ln 2,
\end{align}

where the inequality follows from the fact that for all $(p, q) \in [0, 1]^2$, $kl(p,q) = p\ln \frac{1}{q} + (1-p)\ln\frac{1}{1-q} + (p \ln p + (1-p)\ln (1-p))$, and $p\ln \frac{1}{q} \ge 0$, $(p \ln p + (1-p)\ln (1-p)) \ge -\ln 2$. 

But now owing to the \nr\, property (see Defn. \ref{def:con}) of Algorithm $\cA$, we have $\E_{\btheta^1}[N_{a}(T)] = o(T^\alpha)$ and $T - \E_{\btheta^a}[N_{a}(T)] = \E_{\btheta^a}[\sum_{S \in A, S \neq \{a\}}N_{S}(T)] = o(T^\alpha)$, $0 < \alpha \le 1$. Thus from \eqref{eq:win_lb1}, we get

\begin{align*}
\lim_{T \to \infty}\frac{kl(\E_{\btheta^1}[Z], \E_{\btheta^a}[Z])}{\ln T} & \ge \lim_{T \to \infty}\frac{1}{\ln T}\bigg[\Bigg( 1 - \frac{\E_{\btheta^1}[N_a(T)]}{T} \Bigg)\ln \frac{T}{T - \E_{\btheta^a}[N_a(T)]} - \ln 2\bigg]\\
& = \lim_{T \to \infty}\frac{1}{\ln T}\bigg[\Bigg( 1 - \frac{o(T^\alpha)}{T} \Bigg)\ln \frac{T}{T^\alpha} - \ln 2\bigg] = (1-\alpha).
\end{align*}

Combining above with \eqref{eq:lb_wiwf_kl} we get:

\begin{align}
\label{eq:win_lb2}
\nonumber & \lim_{T \to \infty}\frac{1}{\ln T}\sum_{\{S \in S^a\}}\E_{\btheta^1}[N_S(T)]KL(p^1_S, p^a_S) \ge (1-\alpha)\\
\nonumber & \implies \lim_{T \to \infty}\frac{1}{\ln T}\sum_{\{S \in S^a\}}\E_{\btheta^1}[N_S(T)]\frac{ \Lambda'^2}{\theta|S|(\theta_1^1 + \epsilon)} \ge (1-\alpha)\\
& \implies \lim_{T \to \infty}\frac{1}{\ln T}\sum_{\{S \in S^a\}}\E_{\btheta^1}[N_S(T)]\frac{\Lambda'}{|S|} \ge (1-\alpha)\frac{\theta(\theta_1^1 + \epsilon)}{\Lambda'}
\end{align}

Now applying \eqref{eq:win_lb2} for each modified bandit \textbf{Instance-$\btheta^a$}, and summing over $(n-1)$ suboptimal items $a \in [n]\setminus \{1\}$ we get,

\begin{align}
\label{eq:win_lb2.5}
\lim_{T \to \infty}\frac{1}{\ln T} \sum_{a = 2}^{n}\sum_{\{S \in S^a\}}\E_{\btheta^1}[N_S(T)]\frac{\Lambda'}{|S|} \ge (1-\alpha)\theta(\theta_1^1 + \epsilon)\frac{(n-1)}{\Lambda'}
\end{align}

Now recall that regret of $\cA$ on the true instance MNL($n,\btheta^1$), is given by: $R_T^1(\cA) = \sum_{t=1}^T\Big(\sum_{i \in S_t}\frac{(\theta_1^1 - \theta_i^1)}{|S_t|}\Big)$.
But this can be equivalently written as:

\begin{align}
\label{eq:lb_wiwf_reg}
\nonumber \E_{\btheta^1}[R_T^1(\cA)] & = \E_{\btheta^1}\Big[\sum_{t=1}^T\sum_{i \in S_t}\frac{(\theta_1^1 - \theta_i^1)}{|S_t|}\Big]\\
\nonumber & = \E_{\btheta^1}\Big[\sum_{t=1}^T\sum_{S \in A}\1(S_t=S)\sum_{a = 2}^n\1(a \in S)\frac{(\theta_1^1 - \theta_a^1)}{|S_t|}\Big]\\
\nonumber & = \E_{\btheta^1}\bigg[ \sum_{a=2}^{n}\sum_{t = 1}^{T}\sum_{S \in A}\1(S_t = S)\1(a \in S)\frac{(\theta_1^1 - \theta_a^1)}{|S|} \bigg]\\
\nonumber & = \sum_{a=2}^{n}\sum_{t = 1}^{T}\E_{\btheta^1}\bigg[ \sum_{S \in A}\1(S_t = S)\1(a \in S)\frac{(\theta_1^1 - \theta_a^1)}{|S|} \bigg]\\
\nonumber & = \sum_{a=2}^{n}\sum_{S \in A}\E_{\btheta^1}\bigg[ \sum_{t = 1}^{T}\1(S_t = S)\1(a \in S)\frac{(\theta_1^1 - \theta_a^1)}{|S|} \bigg]\\
\nonumber & = \sum_{a=2}^{n}\sum_{S \in A}\bigg[ \E_{\btheta^1}[N_S(T)]\1(a \in S)\frac{(\theta_1^1 - \theta)}{|S|} \bigg]\\
& = \sum_{a=2}^{n}\sum_{\{S \in A \mid a \in S\}} \E_{\btheta^1}[N_S(T)]\frac{\Lambda}{|S|} 
\end{align}

Then combining \eqref{eq:lb_wiwf_reg} with \eqref{eq:win_lb2.5} we get and taking $\epsilon \to 0$:
\begin{align*}
\lim_{T \to \infty}\frac{1}{\ln T}\E_{\btheta^1}[R_T^1(\cA)] & \ge \lim_{T \to \infty}\frac{1}{\ln T} \sum_{a = 2}^{n}\sum_{\{S \in S^a\}}\E_{\btheta^1}[N_S(T)]\frac{\Lambda}{|S|} \\ 
& \ge \, (1-\alpha)\theta(\theta_1^1)\frac{(n-1)}{\Lambda} = (1-\alpha)\theta_1^1\frac{(n-1)}{(\frac{\theta_1^1}{\theta}-1)}.
\end{align*}
Finally, since $\alpha$ is a fixed constant in $(0,1]$, above construction shows the existence of a \mnl\, problem instance, precisely MNL($n,\btheta^1$), such that for large $T$, $\E_{\btheta^1}[R_T^{1}] = \Omega\Bigg( \dfrac{\theta_1^1}{\Big(\frac{\theta_1^1}{\theta}-1\Big)}{(n-1)}\ln T  \Bigg)$, which concludes the proof.
\iffalse %%%%%%%%%%%%%%%%%%%%%%%%%%%%%%%
\begin{align*}
%\label{eq:reg2}
\E_{\btheta^1}[R_T^{BB}] & = \E_{\btheta^1}\left[ \sum_{t = 1}^{T} \left(\frac{\sum_{i = 1}^{k}(\P_{\btheta^1}(1 \succ S_t(i)) - \frac{1}{2})}{k}\right) \right],\\
& = \E_{\btheta^1}\left[\sum_{t = 1}^{T}\Bigg(\sum_{S \in \cA} \left(\frac{\sum_{i = 1}^{k}(\P_{\btheta^1}(1 \succ S(i)) - \frac{1}{2})}{k}\right)\1(S == S_t)\Bigg)\right],\\
& = \sum_{S \in \cA}\Bigg( \E_{\btheta^1} \left[ \sum_{t = 1}^{T} \left(\frac{\sum_{i = 1}^{k}(\P_{\btheta^1}(1 \succ S(i)) - \frac{1}{2})}{k}\right)\1(S == S_t)\right] \Bigg),\\
& = \sum_{S \in \cA} \left( \E_{\btheta^1}[N_S(T)]\Delta_S  \right).
\end{align*}
\fi %%%%%%%%%%%%%%%%%%%%%%%%%%%%%%%%%%%%
\end{proof}  

\subsection{An alternate version of the regret lower bound (Thm. \ref{thm:lb_witf}) with pairwise preference-based instance complexities}
\label{app:alt_wilb}

\begin{restatable}[Alternate version of Thm. \ref{thm:lb_witf} with pairwise preference based instance complexities]{thm}{lbwiwff}
\label{thm:lb_wiwf2}
For any \nr\, algorithm $\cA$ for \objbest\, with \wf, there exists a problem instance of \mnl\, model, such that the expected regret incurred by $\cA$ on it satisfies 
$%\[
\underset{T \to \infty }{\lim \inf}\,\E_{\btheta}\Big[\frac{R_T^{1}(\cA)}{\ln T}\Big] \ge \frac{\theta_{a^*}}{4\Big(\underset{i \in [n]\sm\{a^*\}}{\min} p_{a^*,i} - 0.5 \Big)}(n-1), 
$ %\]
where $p_{ij} := Pr(i|\{i,j\}) = \frac{\theta_i}{\theta_i+\theta_j}\, \forall i,j \in [n]$, and $\E_{\btheta}[\cdot]$, $a^*$ are same as that of Thm. \ref{thm:lb_wiwf}. Thus the only difference lies in terms of the instance dependent complexity term (`gap') which is now expressed in terms of pairwise preference of the best item $a^*$ over the second best item: $\Big(\underset{i \in [n]\sm\{a^*\}}{\min} p_{a^*,i} - 0.5 \Big)$.
\end{restatable}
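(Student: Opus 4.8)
The plan is to reuse, essentially verbatim, the change-of-measure machinery behind \Thm{thm:lb_wiwf}; the statement differs only in how the instance-dependent ``gap'' is written, so the probabilistic core is unchanged and only the final accounting must be redone in terms of the pairwise preferences $p_{a^*,i}$. First I would fix the same pair of instance families used in \App{app:lb_wiwf}: the true model $\text{MNL}(n,\btheta^1)$ with $\theta_1^1 > \theta_2^1 = \cdots = \theta_n^1 = \theta$ (so $a^*=1$ and every suboptimal item shares the value $\theta$), and, for each $a \in [n]\sm\{1\}$, the altered model $\text{MNL}(n,\btheta^a)$ in which $\theta_a$ is raised to $\theta_1^1 + \epsilon$ and all other coordinates are left fixed, making $a$ the unique best item. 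A convenient feature of this symmetric construction is that the pairwise gap $p_{a^*,i} - \tfrac12$ takes the \emph{same} value for every $i \neq a^*$, and that common value is exactly the quantity $\big(\min_{i \in [n]\sm\{a^*\}} p_{a^*,i} - 0.5\big)$ appearing in the bound. I would then apply \Lem{lem:gar16} to each pair $(\btheta^1,\btheta^a)$ with the statistic $Z = N_a(T)/T$ (where $N_a(T)$ is the number of plays of the singleton $\{a\}$), obtaining $\sum_{S \ni a}\E_{\btheta^1}[N_S(T)]\,KL(p^1_S,p^a_S) \ge kl(\E_{\btheta^1}[Z],\E_{\btheta^a}[Z])$, whose right-hand side is at least $(1-\alpha)\ln T$ in the limit by the \nr\ property of $\cA$ (\Def{def:con}), precisely as in the appendix proof.

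The only genuine departure from \Thm{thm:lb_wiwf} is the bookkeeping of the winner-feedback KL divergence. Rather than loosening $\theta^1_S = \sum_{i \in S}\theta^1_i$ to $\theta|S|$ as in the last line of \eqref{eq:lb_wiwf_kl}, I would retain the sharper chi-square estimate $KL(p^1_S,p^a_S) \le \tfrac{\Lambda'^2}{\theta^1_S(\theta^1_1+\epsilon)}$ and convert the parameter gap into a preference gap through the exact identity $p_{a^*,i} - \tfrac12 = \tfrac{\theta_{a^*}-\theta_i}{2(\theta_{a^*}+\theta_i)}$. Equivalently, the elementary inequality $p_{a^*,i} - \tfrac12 \ge \tfrac{\theta_{a^*}-\theta_i}{4}$ (valid after the scale normalisation $\theta_{a^*}=1$, as recorded just below the definition of \objbest) lets me substitute $4\big(p_{a^*,i}-\tfrac12\big)$ for the raw gap $\theta_{a^*}-\theta_i$; this substitution is precisely what introduces the factor $4$ in the denominator of the claimed rate and replaces the ratio $\theta_{a^*}/\theta_i$ of \Thm{thm:lb_wiwf} by the pairwise complexity.

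Finally I would reassemble the pieces exactly as in \App{app:lb_wiwf}: combining the KL estimate with the change-of-measure inequality gives, for each altered instance $a$, a lower bound on $\sum_{S \ni a}\E_{\btheta^1}[N_S(T)]\,\tfrac{\theta_{a^*}-\theta_a}{|S|}$, which is exactly item $a$'s contribution to the regret decomposition \eqref{eq:lb_wiwf_reg}; summing over the $n-1$ suboptimal items, dividing by $\ln T$, and sending $\epsilon \downarrow 0$ then yields the stated $\tfrac{\theta_{a^*}}{4(\min_{i \neq a^*}p_{a^*,i}-0.5)}(n-1)$ rate. I expect the main obstacle to be entirely in this last conversion: one must carry the set-size normalisation $\tfrac{1}{|S|}$ of the regret together with the $\theta^1_S$ factor of the KL bound through the passage to pairwise form so that the constant $4$ survives cleanly and no residual $|S|$-dependence is left behind. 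No new probabilistic input beyond \Lem{lem:gar16} and the symmetric construction is needed; the argument is a re-expression, not a new lower bound proof.
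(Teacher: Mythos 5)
Your plan is, in substance, the paper's own argument, but executed the long way around: the paper proves this statement in \App{app:alt_wilb} as a two-line corollary of \Thm{thm:lb_wiwf}, never re-running the change-of-measure machinery. It simply notes that $b := \arg\max_{i\in[n]\sm\{a^*\}}\theta_i$ is also the minimiser of $p_{a^*,i}-0.5$, so the complexity term of \Thm{thm:lb_wiwf} equals $\frac{\theta_{a^*}\theta_b}{\theta_{a^*}-\theta_b}$, and then rewrites that quantity via $p_{a^*b}-\tfrac12=\frac{\theta_{a^*}-\theta_b}{2(\theta_{a^*}+\theta_b)}\le\frac{\theta_{a^*}-\theta_b}{4\theta_b}$. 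Re-deriving the instances, \Lem{lem:gar16}, and the KL bounds, as you propose, adds length but no content beyond citing \Thm{thm:lb_wiwf} as a black box.

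The genuine problem is the step you explicitly defer (``so that the constant $4$ survives cleanly''): it cannot be carried out, and your proposal does not close it. Writing $\Lambda=\theta_{a^*}-\theta_b$ and $\Lambda'=\Lambda+\epsilon$ as in \App{app:lb_wiwf}, the only valid direction of your conversion is $\theta_{a^*}-\theta_b\le 4\theta_{a^*}\big(p_{a^*b}-\tfrac12\big)$ (equivalently $\le 4\big(p_{a^*b}-\tfrac12\big)$ under your normalisation $\theta_{a^*}=1$); substituting this into the denominator of the \Thm{thm:lb_wiwf} bound gives
\begin{align*}
\liminf_{T\to\infty}\E_{\btheta}\Big[\tfrac{R_T^{1}(\cA)}{\ln T}\Big]\;\ge\;\frac{\theta_{a^*}\theta_b}{\theta_{a^*}-\theta_b}\,(n-1)\;\ge\;\frac{\theta_b}{4\big(p_{a^*b}-\tfrac12\big)}\,(n-1),
\end{align*}
i.e.\ the stated bound with the \emph{second-best} parameter $\theta_b$ in the numerator instead of $\theta_{a^*}$; the shortfall factor $\theta_{a^*}/\theta_b$ is instance-dependent and unbounded. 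Your fallback of retaining $\theta_S^1$ in the KL estimate does not rescue the constant either: the ratio of the per-play regret rate $\Lambda/|S|$ to the KL rate $\Lambda'^2/\big(\theta_S^1(\theta_{a^*}+\epsilon)\big)$ tends to $\frac{\theta_{a^*}\theta_S^1}{|S|\,\Lambda}$ as $\epsilon\to 0$, which equals the claimed complexity $\frac{\theta_{a^*}(\theta_{a^*}+\theta_b)}{2\Lambda}$ only for the pair $S=\{a^*,a\}$ and degrades to $\frac{\theta_{a^*}\theta_b}{\Lambda}$ for sets omitting $a^*$ (or of size three or more); since the change-of-measure bound is governed by the least informative sets the algorithm may play, the pair-optimal constant is unreachable by this accounting. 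For what it is worth, the paper's own proof hides exactly this defect: its final display applies $p_{a^*b}-\tfrac12\le\frac{\theta_{a^*}-\theta_b}{4\theta_b}$ in the reverse direction --- its last ``$\ge$'' is equivalent to $\theta_b\ge\theta_{a^*}$, which is false --- so both your argument and the paper's honestly establish only the $\theta_b$-numerator version of the theorem, i.e.\ the claimed bound up to the factor $\theta_{a^*}/\theta_b$.
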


%\lbwiwff*

\begin{proof}
Firstly, is easy to note that $\underset{i \in [n]\sm\{a^*\}}{\arg\min} \Big( p_{a^*,i} - 0.5 \Big) = \underset{i \in [n]\sm\{a^*\}}{\arg\max}\theta_i =: b$ (say). 
The proof now follows from the fact that
\begin{align*}
p_{a^*b} - 0.5 = \frac{\theta_{a^*} - \theta_{b}}{2(\theta_{a^*} + \theta_{b})} \le \frac{\theta_{a^*} - \theta_{b}}{4\theta_{b}} ~~~(\text{since} \theta_b \le \theta^{a^*})
\end{align*}

Thus using the lower bound from Thm. \ref{thm:lb_witf}, one can further derive
\begin{align*}
\underset{T \to \infty }{\lim \inf}\,\frac{1}{\ln T}\E_{\btheta}\Big[R_T^{1}(\cA)\Big] \ge \frac{4\theta_{a^*}\theta_{b}}{4\Big(\theta_{a^*} - \theta_b\Big)}(n-1) \ge \frac{\theta_{a^*}}{4\Big(\underset{i \in [n]\sm\{a^*\}}{\min} p_{a^*,i} - 0.5 \Big)}(n-1),
\end{align*}
which proves the claim.
\end{proof}

\subsection{Proof of Thm. \ref{thm:lb_witf}}
\label{app:lb_witf}

\lbwitf*

\begin{proof}
The proof proceeds almost same as the proof of Thm. \ref{thm:lb_wiwf}, the only difference lies in the analysis of the KL-divergence terms with \tf. 

Consider the exact same \mnl\, instances, MNL$(n,\btheta^a)$ we constructed for Thm. \ref{thm:lb_wiwf}. It is now interesting to note that how \tf\, affects the KL-divergence analysis, precisely the KL-divergence shoots up by a factor of $m$ which in fact triggers an $\frac{1}{m}$ reduction in regret learning rate.
Note that for \tf\, for any problem instance MNL$(n,\btheta^a), \, a \in [n]$, each $k$-set $S \subseteq [n]$ (such that $|S| = k$) is associated to ${k \choose m} (m!)$ number of possible outcomes, each representing one possible ranking of set of $m$ items of $S$, say $S_m$. Also the probability of any permutation $\bsigma \in \bSigma_{S_m}$ is given by
$
p^a_S(\bsigma) = Pr_{\btheta^a}(\bsigma|S),
$
where $Pr_{\btheta^a}(\bsigma|S)$ is as defined for \tf\, (in Sec. \ref{sec:feed_mod}).
%We denote $k = |S| \in [k]$. 
For ease of analysis let us first assume $1 \notin S$ and let $k' = |S|$ be the cardinality of $S$ and $m' = min(m,k)$. (Note if $m' \le m+1$ the corresponding \tf\, becomes a full ranking feedback on the entire $m'$ items). In this case we get

\begin{align*}
p^1_S(\bsigma) = \prod_{i = 1}^{m'}\frac{{\theta_{\sigma(i)}^1}}{\sum_{j = i}^{m'}\theta_{\sigma(j)}^1 + \sum_{j \in S \setminus \sigma(1:m')}\theta_{\sigma(j)}^1} = \frac{1}{k'(k'-1)(k'-2)\cdots(k'-m+1)}, ~~ \forall \sigma \in \Sigma_{S}^{m'}.
\end{align*}

On the other hand, for problem \textbf{Instance-a}, we have that: 

\begin{align*}
p^a_S(\bsigma) 
& = \prod_{i = 1}^{m'}\frac{{\theta_{\sigma(i)}^a}}{\sum_{j = i}^{m'}\theta_{\sigma(j)}^a + \sum_{j \in S \setminus \sigma(1:m')}\theta_{\sigma(j)}^a}\\
& = 
\begin{cases} 
\frac{x}{(x+k'-1)(x+k'-2)\cdots(x+k'-i)(k'-i)(k'-i-1)\cdots(k'-m'+1)}, \text{ such that } \sigma(i) = a,\\
\frac{1}{(x+k'-1)(x+k'-2)\cdots(x+k'-i)(k'-i)(k'-i-1)\cdots(k'-m'+1)}, \text{ such that } a \notin \sigma(1:m'),
\end{cases}
\end{align*}
where we denote by $x = 1 + \frac{\Lambda'}{\theta}$, where recall that we denote $\Lambda' = \Lambda + \epsilon$. Similarly we can derive the probability distribution associated to sets including item $1$.

The important thing now to note is that $KL(p^1_S, p^a_S) = 0$ for any set $S \not\owns a$. Hence while comparing the KL-divergence of instances $\btheta^1$ vs $\btheta^a$, we need to focus only on sets containing $a$. Applying \emph{Chain-Rule} of KL-divergence, we now get

\begin{align}
\label{eq:lb_witf_kl}
\nonumber KL(p^1_S, p^a_S) = KL(p^1_S(\sigma_1), p^a_S(\sigma_1)) & + KL(p^1_S(\sigma_2 \mid \sigma_1), p^a_S(\sigma_2 \mid \sigma_1)) + \cdots \\ 
& + KL(p^1_S(\sigma_m \mid \sigma(1:m-1)), p^a_S(\sigma_m \mid \sigma(1:m-1))),
\end{align}
where we abbreviate $\sigma(i)$ as $\sigma_i$ and following the usual convention the notation $KL( P(Y \mid X),Q(Y \mid X)): = \sum_{x}Pr\Big( X = x\Big)\big[ KL( P(Y \mid X = x),Q(Y \mid X = x))\big]$ denotes the conditional KL-divergence. 
Moreover it is easy to note that for any $\sigma \in \Sigma_{S}^m$ such that $\sigma(i) = a$, we have $KL(p^1_S(\sigma_{i+1} \mid \sigma(1:i)), p^a_S(\sigma_{i+1} \mid \sigma(1:i))) := 0$, for all $i \in [m]$.

Now as derived in \eqref{eq:lb_wiwf_kl} in the proof of Thm. \ref{thm:lb_wiwf}, we have 
\[
KL(p^1_S(\sigma_1), p^a_S(\sigma_1)) \le \frac{ (\Lambda+\epsilon)^2}{\theta|S|(\theta_1^1 + \epsilon)}.
\]

To bound the remaining terms of \eqref{eq:lb_witf_kl},  note that for all $i \in [m-1]$
\begin{align*}
KL&(p^1_S(\sigma_{i+1} \mid \sigma(1:i)),  p^a_S(\sigma_{i+1} \mid \sigma(1:i))) \\
& = \sum_{\sigma' \in \Sigma_S^i}Pr(\sigma')KL(p^1_S(\sigma_{i+1} \mid \sigma(1:i))=\sigma', p^a_S(\sigma_{i+1} \mid \sigma(1:i))=\sigma')\\
& = 	\sum_{\sigma' \in \Sigma_S^i \mid a \notin \sigma'}\Bigg[\prod_{j = 1}^{i}\Bigg(\dfrac{\theta^1_{\sigma'_j}}{\theta_S^1 - \sum_{j'=1}^{j-1}\theta_{\sigma'_{j'}}}\Bigg)\Bigg]\dfrac{ \Lambda'^2}{(|S|-i)\theta(\theta_1^1 + \epsilon)} \\
& = \prod_{j = 1}^{i}(|S|-j)\frac{\theta^i}{\prod_{j = 1}^{i}(\theta(|S|-i+1) + \Lambda')}\frac{(\Lambda')^2}{(|S|-i)\theta(\theta_1^1 + \epsilon)}  = \frac{\cancel{\theta}}{(\theta|S|+ \Lambda')}\frac{\Lambda'^2}{\cancel{\theta}(\theta_1^1+ \epsilon)} \\
& = \frac{\Lambda'^2}{(\theta|S|+ \Lambda')(\theta_1^1+ \epsilon)},
\end{align*}

where for simplicity we assumed $1 \notin S$. It is easy to note that the similar analysis would lead to the same upper bound for sets $S$ containing $1$ as well. Thus applying above in \eqref{eq:lb_witf_kl} we get:

\begin{align}
\label{eq:lb_witf_kl2}
\nonumber KL(p^1_S, p^a_S) & = KL(p^1_S(\sigma_1) + \cdots + KL(p^1_S(\sigma_m \mid \sigma(1:m-1)), p^a_S(\sigma_m \mid \sigma(1:m-1))) \\
& \le \frac{m\Lambda'^2}{|S|\theta(\theta_1^1+ \epsilon)}.
\end{align}

Eqn. \eqref{eq:lb_witf_kl2} gives the main result to derive Thm. \ref{thm:lb_witf} as it shows an $m$-factor blow up in the KL-divergence terms owning to \tf. The rest of the proof follows exactly the same argument used in \ref{thm:lb_wiwf}. We add the steps below for convenience.
%\iffalse %%%%%%%%%%%%%%%%%%%%%%%%%%%%%%%%%%%%%%%%%%%%
Firstly, considering $Z = \frac{N_{a}(T)}{T}$, in this case as well, one can show that: 

\begin{align*}
\lim_{T \to \infty}\frac{kl(\E_{\btheta^1}[Z], \E_{\btheta^a}[Z])}{\ln T} & \ge \lim_{T \to \infty}\frac{1}{\ln T}\bigg[\Bigg( 1 - \frac{\E_{\btheta^1}[N_a(T)]}{T} \Bigg)\ln \frac{T}{T - \E_{\btheta^a}[N_a(T)]} - \ln 2\bigg]\\
& = \lim_{T \to \infty}\frac{1}{\ln T}\bigg[\Bigg( 1 - \frac{o(T^\alpha)}{T} \Bigg)\ln \frac{T}{T^\alpha} - \ln 2\bigg] = (1-\alpha).
\end{align*}

Now combining above with \eqref{eq:lb_witf_kl2} we get:

\begin{align}
\label{eq:win_lb2k}
\nonumber & \lim_{T \to \infty}\frac{1}{\ln T}\sum_{\{S \in S^a\}}\E_{\btheta^1}[N_S(T)]KL(p^1_S, p^a_S) \ge (1-\alpha)\\
\nonumber & \implies \lim_{T \to \infty}\frac{1}{\ln T}\sum_{\{S \in S^a\}}\E_{\btheta^1}[N_S(T)]\frac{ m\Lambda'^2}{\theta|S|(\theta_1^1 + \epsilon)} \ge (1-\alpha)\\
& \implies \lim_{T \to \infty}\frac{1}{\ln T}\sum_{\{S \in S^a\}}\E_{\btheta^1}[N_S(T)]\frac{\Lambda'}{|S|} \ge (1-\alpha)\frac{\theta(\theta_1^1 + \epsilon)}{m\Lambda'}
\end{align}

Applying \eqref{eq:win_lb2k} for each modified bandit \textbf{Instance-$\btheta^a$}, and summing over $(n-1)$ suboptimal items $a \in [n]\setminus \{1\}$ we get,

\begin{align}
\label{eq:win_lb2.5k}
\lim_{T \to \infty}\frac{1}{\ln T} \sum_{a = 2}^{n}\sum_{\{S \in S^a\}}\E_{\btheta^1}[N_S(T)]\frac{\Lambda'}{|S|} \ge (1-\alpha)\theta(\theta_1^1 + \epsilon)\frac{(n-1)}{m\Lambda'}
\end{align}

Further recall that we derived earlier that
$%\begin{align*}
%\label{eq:lb_wiwf_regk}
\nonumber \E_{\btheta^1}[R_T^1(\cA)] =  \sum_{a=2}^{n}\sum_{\{S \in A \mid a \in S\}} \E_{\btheta^1}[N_S(T)]\frac{\Lambda}{|S|}, 
$ %\end{align*}
using which combined with \eqref{eq:win_lb2.5k}, and taking $\epsilon \to 0$ we get:
\begin{align*}
\lim_{T \to \infty}\frac{1}{\ln T}\E_{\btheta^1}[R_T^1(\cA)] & \ge \lim_{T \to \infty}\frac{1}{\ln T} \sum_{a = 2}^{n}\sum_{\{S \in S^a\}}\E_{\btheta^1}[N_S(T)]\frac{\Lambda}{|S|} \\ 
& \ge \, (1-\alpha)\theta(\theta_1^1)\frac{(n-1)}{m\Lambda} = (1-\alpha)\theta_1^1\frac{(n-1)}{m(\frac{\theta_1^1}{\theta}-1)}.
\end{align*}
Now since $\alpha$ is a fixed constant in $(0,1]$, we thus prove the existence of a \mnl\, problem instance $\big($precisely MNL($n,\btheta^1$)$\big)$, such that for large $T$, $\E_{\btheta^1}[R_T^{1}] = \Omega\Bigg( \dfrac{\theta_1^1}{\Big(\frac{\theta_1^1}{\theta}-1\Big)}\frac{(n-1)}{m}\ln T  \Bigg)$, which concludes the proof.

%\fi %%%%%%%%%%%%%%%%%%%%%%%%%%%%%%%%%%%%%%%%%%%%%%%%%

\end{proof}  

\subsection{Proof of Thm. \ref{thm:whp_reg_mm}}
\label{app:whp_reg_mm}

\whpmm*

\begin{proof}
For the notational convenience we will assume $\theta_1 > \theta_2 \ldots \ge \theta_n$, so $a^* = 1$. We also use $\hat{p}_{ij}(t), n_{ij}(t)$ and $u_{ij}(t)$ to denote the values of the respective quantities at time iteration $t$, for any $t \in [T]$, just to be precise 
\[
u_{ij}(t) = \hat{p}_{ij}(t) + \sqrt{\frac{\alpha\ln t}{n_{ij}(t)}}, \, \forall i,j \in [n], i\neq j,
\]
and $u_{ii}(t) = \frac{1}{2}$ for all $i \in [n]$. We also find it convenient to denote 
\[
c_{ij}(t) = \sqrt{\frac{\alpha\ln t}{n_{ij}(t)}},
\text{ and } l_{ij}(t) = 1 - u_{ij}(t).
\]

We also denote $T_0(\delta) = 2f(\delta) + 2D\ln 2D$, where $D := \sum_{i < j}D_{ij}$.% and $f(\delta) = \Big[ \frac{2\alpha n^2}{(2\alpha-1)\delta} \Big]^{\frac{1}{2\alpha-1}}$.
We start with the following crucial lemma that analyzes the confidence bounds $[l_{ij}(t), u_{ij}(t)]$ on the pairwise probability estimates $\hat{p}_{ij}$ for each pair $(i,j)$, $i\neq j$.

\begin{lem}
\label{lem:cdelta}
Suppose $\P: = [p_{ij}]$ be the pairwise probability matrix associated to the underlying \mnl\, model, i.e. $p_{ij} = Pr(i|\{i,j\}) = \frac{\theta_i}{\theta_i + \theta_j}$. Then for any $\alpha > \frac{1}{2}$, $\delta \in (0,1)$,
\[
Pr\Big( \forall t > f(\delta), \forall i,j, ~~p_{ij} \in [l_{ij}(t), u_{ij}(t)] \Big) > (1-\delta)
\]
where $f(\delta) = \Big[ \frac{2\alpha n^2}{(2\alpha-1)\delta} \Big]^{\frac{1}{2\alpha-1}}$.
\end{lem}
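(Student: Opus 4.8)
# Proof Proposal for Lemma \ref{lem:cdelta}

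\textbf{Overall approach.} The plan is to control, uniformly over all time steps $t > f(\delta)$ and all pairs $(i,j)$, the event that the true pairwise probability $p_{ij}$ falls outside the confidence interval $[l_{ij}(t), u_{ij}(t)] = [\hp_{ij}(t) - c_{ij}(t), \hp_{ij}(t) + c_{ij}(t)]$. The key tool is Lemma \ref{lem:pl_simulator}, which is precisely the concentration inequality tailored to empirical pairwise estimates $\hp_{ij} = n_i/n_{ij}$ obtained by Rank-Breaking on MNL feedback: it guarantees that the one-sided deviation probabilities, jointly with a lower bound on the comparison count $n_{ij} \geq v$, decay like $e^{-2v\eta^2}$. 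I would apply this with the deviation level $\eta$ set to the confidence radius $c_{ij}(t) = \sqrt{\alpha \ln t / n_{ij}(t)}$, so that $2 n_{ij} c_{ij}^2 = 2\alpha \ln t$, making each failure probability of order $t^{-2\alpha}$.

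\textbf{Key steps in order.} First I would fix a pair $(i,j)$ and a time $t$, and bound the probability that $p_{ij} \notin [l_{ij}(t), u_{ij}(t)]$. Since Lemma \ref{lem:pl_simulator} is stated with a fixed threshold $v$ on $n_{ij}$ whereas here the radius depends on the realized random count $n_{ij}(t)$, the standard device is a \emph{peeling / union-over-counts} argument: write the bad event as a union over the possible values $v \in \{1, \ldots, t\}$ of $n_{ij}(t)$, and on each slice apply the lemma with $\eta = \sqrt{\alpha \ln t / v}$. Each term then contributes at most $e^{-2 v (\alpha \ln t / v)} = e^{-2\alpha \ln t} = t^{-2\alpha}$ (times a factor $2$ for the two-sided bound). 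Summing the at most $t$ such slices yields a per-pair, per-time failure probability of order $t \cdot t^{-2\alpha} = t^{1-2\alpha}$. Second, I would union-bound over the at most $\binom{n}{2} \le n^2$ distinct pairs, giving a per-time failure probability at most $n^2 \, t^{1-2\alpha}$ (up to the constant factor from the two-sided bound). Third, I would union-bound over all $t > f(\delta)$ and control the tail sum
$
\sum_{t > f(\delta)} n^2 \, t^{1-2\alpha} \le n^2 \int_{f(\delta)}^{\infty} s^{1-2\alpha}\, ds = \frac{n^2}{2\alpha - 2}\, f(\delta)^{2-2\alpha},
$
which converges precisely because $\alpha > \tfrac12$ (one wants $\alpha$ large enough that $1 - 2\alpha < -1$; the threshold $f(\delta) = \big[\tfrac{2\alpha n^2}{(2\alpha-1)\delta}\big]^{1/(2\alpha-1)}$ is engineered so this whole sum is at most $\delta$). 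Finally, choosing $f(\delta)$ as stated makes the total failure probability $\le \delta$, proving the claim with probability $\ge 1-\delta$.

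\textbf{Main obstacle.} The delicate point is matching the exponent arithmetic so that the summation over $t$ collapses to exactly the stated $f(\delta)$ rather than merely to $\Theta(\delta)$. Concretely, I expect that the cleanest route sums $\sum_{t > f(\delta)} t^{-2\alpha}$ (or a mildly adjusted exponent) and bounds it by $\int_{f(\delta)}^\infty s^{-2\alpha}\,ds = \frac{f(\delta)^{1-2\alpha}}{2\alpha - 1}$; then the definition $f(\delta)^{2\alpha - 1} = \frac{2\alpha n^2}{(2\alpha-1)\delta}$ is exactly what is needed to force $n^2 \cdot \frac{f(\delta)^{1-2\alpha}}{2\alpha-1} \le \delta$. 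The subtlety is therefore bookkeeping whether the peeling step truly costs an extra factor of $t$ (which would require the stronger condition $\alpha > 1$) or whether a sharper argument—e.g. bounding $n_{ij}(t)$ from below by a deterministic growing function, or applying the concentration only on a geometric grid of count-values—lets one avoid the extra factor and recover the claimed form under just $\alpha > \tfrac12$. I would reconcile this by using the "$n_{ij}(t) \ge v$" form of Lemma \ref{lem:pl_simulator} \emph{without} a separate union over $v$: the lemma already incorporates the count threshold inside the probability, so applying it with the single choice $v = 1$ and $\eta = c_{ij}(t)$ (noting $c_{ij}(t)$ is monotone in $n_{ij}$) should directly give the $t^{-2\alpha}$-type bound per pair per time, and then only the union over pairs and the $t$-sum remain. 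Verifying that this direct application is valid—that one does not secretly reintroduce the data-dependence of $\eta$—is the crux I would check most carefully.
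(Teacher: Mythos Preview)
Your overall plan---concentration via Lemma~\ref{lem:pl_simulator} plus union bounds---is correct, and you even put your finger on the real difficulty: the confidence radius $c_{ij}(t)=\sqrt{\alpha\ln t/n_{ij}(t)}$ depends on the random count $n_{ij}(t)$. However, neither of your two proposed resolutions actually closes the gap. Your first route (peel over $v\in\{1,\dots,t\}$, then union over $t>f(\delta)$) yields a per-time term of order $t^{1-2\alpha}$, and the tail sum $\sum_{t>f(\delta)} t^{1-2\alpha}$ diverges unless $\alpha>1$; the lemma is asserted for all $\alpha>\tfrac12$, so this does not prove the statement at the claimed generality. Your fallback---apply Lemma~\ref{lem:pl_simulator} once with $v=1$ and $\eta=c_{ij}(t)$---is not legitimate: the lemma requires a \emph{deterministic} deviation level $\eta$, and plugging in $\eta=\sqrt{\alpha\ln t/n_{ij}(t)}$ reintroduces exactly the data-dependence you are trying to remove (as you yourself suspect in the last sentence).

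The paper's proof escapes by a reparametrization you did not find: it indexes not by calendar time $t$ but by the \emph{update count} of the pair. Since $\hp_{ij}$ and $n_{ij}$ change only at rounds where the pair $(i,j)$ is updated, the event ``$\exists\,t>F$ with $p_{ij}\notin[l_{ij}(t),u_{ij}(t)]$'' is equivalent to ``$\exists\,n\ge 0$ with $\tau_{ij}(n)>F$ and $|p_{ij}-\hp_{ij}(n)|>\sqrt{\alpha\ln\tau_{ij}(n)/n}$'', where $\tau_{ij}(n)$ is the time of the $n$-th update. One then splits into $n\le F$ (here $\tau_{ij}(n)>F$, so the radius exceeds the deterministic $\sqrt{\alpha\ln F/n}$) and $n>F$ (here $\tau_{ij}(n)\ge n$, so the radius exceeds $\sqrt{\alpha\ln n/n}$). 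In both branches the substituted threshold is deterministic in $n$, so Lemma~\ref{lem:pl_simulator} applies cleanly for each fixed $n$, giving $2F^{-2\alpha}$ and $2n^{-2\alpha}$ respectively. Summing over $n$ (not over $t$) yields at most $2F^{1-2\alpha}+\frac{2}{2\alpha-1}F^{1-2\alpha}=\frac{4\alpha}{2\alpha-1}F^{1-2\alpha}$ per pair; a union over the $\binom{n}{2}\le n^2/2$ pairs and the choice $F=f(\delta)$ then makes the total at most $\delta$. The change of variable from $t$ to the update index is precisely what kills the extra time-union and is why $\alpha>\tfrac12$ suffices.
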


\begin{proof}
The proof of this lemma is adapted from a similar result (Lemma 1) of \cite{Zoghi+14RUCB}.
Suppose $\cG_{ij}(t)$ denotes the event that at time $t$, $p_{ij} \in [l_{ij}(t), u_{ij}(t)], \, \forall i,j \in [n]$. $\cG^c_{ij}(t)$ denotes its complement.

{\bf Case 1: $(i = j)$ } Note that for any such that pair $(i,i)$, $\cG_{ii}(t)$ always holds true for any $t \in [T]$ and $i \in [n]$, as $p_{ii} = u_{ii} = l_{ii} = \frac{1}{2}$.

{\bf Case 2: $ (i \neq j)$ }
Recall from the definition of $u_{ij}(t)$ that $\cG_{ij}(t)$ equivalently implies at round $t$, $|\hp_{ij}(t) - p_{ij}| \le \sqrt{\frac{\alpha\ln (t)}{n_{ij}(t)}}, \, \forall i,j \in [n]$. 
Moreover, for any $t$ and $i,j$, $\cG_{ij}(t)$ holds if and only if $\cG_{ij}(t)$ as $|\hp_{ji}(t) - p_{ji}| = |(1-\hp_{ij}(t)) - (1-p_{ij})| = |\hp_{ij}(t) - p_{ij}| $.
Thus we will restrict our focus only to pairs $i < j$ for the rest of the proof.

Let $\tau_{ij}(n)$ the time step $t \in [T]$ when the pair $(i,j)$ was updated for the $n^{th}$ time. 
Clearly for any $n \in \N$, $\tau_{ij}(n+1) \ge \tau_{ij}(n)$ and $\tau_{ij}(n+k) > \tau_{ij}(n)$. For convenience of notation we use $F = f(\delta)$.
It is now straightforward to note that we want to find $F$ such that:

\begin{align}
\label{eq:conf_1}
\nonumber Pr\Big(\forall t > F, \forall i,j  & \text{ such that } i<j, ~\cG_{ij}(t)\Big) > (1-\delta) \text{ or equivalently,}\\
Pr \Big( \exists t > F & \text{ and atleast a pair } (i<j), \text{ with } ~\cG^c_{ij}(t) \Big) < \delta.
\end{align}

Further decomposing  the right hand side of above we get:

\begin{align*}
& \hspace{-10pt} Pr \Big( \exists t > F,i<j, \text{ such that } ~\cG^c_{ij}(t) \Big)\\
&\hspace{-10pt} \le \sum_{i < j} \Bigg [ Pr \Bigg( \exists n \ge 0,
\tau_{ij}(n) > F, |p_{ij} - \hp_{ij}(\tau_{ij}(n)) | > \sqrt{\frac{\alpha\ln (\tau_{ij}(n))}{n_{ij}(\tau_{ij}(n))}} \Bigg) \Bigg ]\\
&\hspace{-10pt} \le \sum_{i < j} \Bigg [ Pr \Bigg( \exists n \le F, \tau_{ij}(n) > F, ~|p_{ij} - \hp_{ij}(n) | > \sqrt{\frac{\alpha\ln (\tau_{ij}(n))}{n}} \Bigg)
 + Pr \Bigg( \exists n > F, ~|p_{ij} - \hp_{ij}(n) | > \sqrt{\frac{\alpha\ln (\tau_{ij}(n))}{n}} \Bigg) \Bigg ],
\end{align*}
where $\hp(n) = \frac{w_{ij}(\tau_{ij}(n))}{w_{ij}(\tau_{ij}(n)) + w_{ij}(\tau_{ji}(n))}$ is the frequentist estimate of $p_{ij}$ after $n$ comparisons between arm $i$ and $j$. %{ It is crucial to note that this is justified due to the IIA property of Plackett-Luce model, see Section \ref{sec:iia} for details.}
Now the above inequality can be further upper bounded as:
\begin{align*}
& Pr \Big(\exists t > F,i<j, \text{ such that } ~\cG^c_{ij}(t) \Big)\\
& \le \sum_{i < j} \Bigg [ Pr \Bigg( \exists n \le F, \tau_{ij}(n) > F, ~|p_{ij} - \hp_{ij}(n) | > \sqrt{\frac{\alpha\ln (F)}{n}} \Bigg)
 + Pr \Bigg( \exists n > F, ~|p_{ij} - \hp_{ij}(n) | > \sqrt{\frac{\alpha\ln (n)}{n}} \Bigg) \Bigg ],
\end{align*}
since in the second term $\tau_{ij}(n)>F$, and for the third term $n < \tau_{ij}(n)$ since at a particular time iteration, any pair $(i,j)$, can be updated at most once, implying $n \le \tau_{ij}(n)$. 
Using Lem. \ref{lem:pl_simulator} we now get:
\begin{align*}
& Pr \Big( \exists t > F,i<j, \text{ such that } ~\cG^c_{ij}(t) \Big)\\
& \le \sum_{i < j} \Bigg [ \sum_{n=1}^{F}2e^{-2n\frac{\alpha\ln F}{n}} + \sum_{n=F+1}^{\infty}2e^{-2n\frac{\alpha\ln n}{n}} \Bigg]\\
& = \frac{n(n-1)}{2}\Bigg [ 2\sum_{n = 1}^{F}\frac{1}{F^{2\alpha}} + \sum_{n = F+1}^{\infty}\frac{2}{n^{2\alpha}} \Bigg]\\
& \le \frac{n^2}{F^{2\alpha-1}} + n^2\int_{F}^{\infty}\frac{dx}{x^{2\alpha}} 
\le \frac{n^2}{F^{2\alpha-1}} - \frac{n^2}{(1-2\alpha)F^{2\alpha-1}}
 = \frac{(2\alpha)n^2}{(2\alpha-1)F^{2\alpha-1}}. 
\end{align*}
Now from \eqref{eq:conf_1}, we want to find $F$ such that 
\[
\frac{2\alpha n^2}{(2\alpha-1)F^{2\alpha-1}} \le \delta,
\]
which suffices by setting $F = \Big[ \frac{2\alpha n^2}{(2\alpha-1)\delta} \Big]^{\frac{1}{2\alpha-1}}$, and recall that we assumed $f(\delta) = F$, for any given $\delta \in [0,1]$, which concludes the claim.
\end{proof}

% -----------------------------------------------------

Lem. \ref{lem:cdelta} ensures the termination of the \rnd\, phase.
We now proceed to analyse \prog\, phase which shows that the set $\cB_t$ captures the \bi\, $a^* = 1$ `soon after' $f(\delta)$ within a constant number of rounds $T_0(\delta)$ which is independent of $T$ (see Lem. \ref{lem:t_hat_delta}). Once the $1$ is captured in $\cB_t$, the algorithm goes into \sat\, phase where the suboptimal items can not stay too long in the set of potential \bi s $\cC_t$, and thus the regret bound follows (Lem. \ref{lem:mm_reg_postT0}). More formally, the rest of the proof follows based on the following main observations: 
\\
\noindent \textbf{In \prog:}

\begin{itemize}

\item \textbf{Observation $1$:} At any iteration, the set $\cB_t$ is either singleton or an empty set.

\item \textbf{Observation $2$:} 
For any $\delta \in (0,1)$, suppose $T_0(\delta) := \min_{t > f(\delta)}\cB_t = \{1\}$, then for any $t > T_0(\delta)$, $\cB_t = \{1\}$. 

\item \textbf{Observation $3$:} $T_0(\delta)$ is not far from $f(\delta)$ (see Lem. \ref{lem:t_hat_delta} which holds due to Lem. \ref{lem:n_1i} and \ref{lem:n_ij})

\end{itemize}

\noindent \textbf{In \sat:}

\begin{itemize}

\item \textbf{Observation $4$:} After $T_0(\delta)$, $\cB_t = \{1\}$ thereafter, and thus it is always played in $S_t$, i.e. $1 \in S_t$ for all $t > T_0(\delta)$. Now the suboptimal items start getting frequently compared to item $1$ every time they are played alongside with $1$, and thus they can not stay too long in the set of `good' items $\cC_t$ and eventually $\cC_t=\{1\}$, when the algorithm \algmm\, plays the optimal set $S_t=\{1\}$ only, and thus the regret bound follows. (see Lem. \ref{lem:mm_n1i_postT0} and \ref{lem:mm_reg_postT0})
\end{itemize}

Observation $1$ is straightforward to follow from Alg. \ref{alg:mm}. Observation $2$ follows from Lem. \ref{lem:cdelta}, as for any $t > f(\delta)$, $1\in \cC_t$ always, since $u_{1i} \ge p_{1i} > \frac{1}{2}, ~\forall i \in [n]\setminus\{1\}$.
We next recall the notations before proceeding to the next results: Let $\Delta_{i}  = P(1 \succ i) - \frac{1}{2} = \frac{\theta_1-\theta_i}{2(\theta_1+\theta_i)}$, for all $i \in [n]\setminus\{1\}$.
For any pair $(i,j)$ such that $1 \notin \{i,j\}$, we define $D_{ij} = \frac{4\alpha}{\min\{\Delta_i^2,\Delta_j^2\}}$. 
For any $i \in [n] \setminus \{1\}$, $D_{1i} = \frac{4\alpha}{\Delta_i^2}$.

\begin{defn}[Unsaturated Pairs]
\label{def:unsat_pair}
At any time $t \in [T]$, and any pair of two distinct items $i,j \in [n]$, we term the pair $(i,j)$ to be \emph{unsaturated} at time $t$ if $n_{ij}(t) \le D_{ij}\ln t$. Otherwise, we call the pair \emph{saturated} at $t$. 
\end{defn}

\begin{lem}
\label{lem:topm_comp}
For any set $S \subseteq [n]$ such that $|S| \ge m+1$, and given a \tf\, $\sigma \in \Sigma_{S}^{m}$ (for any $m \in [k-1]$), applying pairwise \rb\, on $S$ according to $\sigma$, updates each element $i \in S$ for atleast $m$ distinct pairs.
\end{lem}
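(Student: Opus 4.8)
The plan is to partition $S$ according to whether an item appears in the revealed ranking $\sigma$, and then count, for each item, the distinct pairwise comparisons that \rb\, (Defn. \ref{def:rb}) extracts involving it. Recall that \rb\, declares $\sigma(i)$ to beat every item lying in $S \sm \sigma(1:i)$, for each $i \in [m]$; equivalently, it records the comparison $\big(\sigma(i) \succ j\big)$ for every $i \in [m]$ and every $j \in S \sm \sigma(1:i)$. Thus the two groups to treat are the $m$ ranked items $\sigma(1), \ldots, \sigma(m)$ and the $|S| - m$ unranked items of $S \sm \sigma(1:m)$.

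First I would bound the count for a ranked item $\sigma(i)$, $i \in [m]$. It occurs as the winner in the $|S| - i$ comparisons against $S \sm \sigma(1:i)$, and as the loser in the $i - 1$ comparisons against its predecessors $\sigma(1), \ldots, \sigma(i-1)$ (each of which beats $\sigma(i)$ since $\sigma(i) \notin \sigma(1:i')$ for $i' < i$). These two families involve disjoint opponents, hence are distinct pairs, so $\sigma(i)$ participates in exactly $(|S|-i) + (i-1) = |S| - 1$ comparisons; since $|S| \ge m+1$ by hypothesis, this is at least $m$. Next I would treat an unranked item $j \in S \sm \sigma(1:m)$: as $j$ never appears in $\sigma$ it records no win, and no comparison is ever generated between two unranked items, so $j$ appears only as the loser against each $\sigma(i)$, $i \in [m]$ (valid because $j \in S \sm \sigma(1:i)$ for all $i \le m$), giving exactly $m$ distinct pairs.

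Combining the two cases, every element of $S$ is updated in at least $m$ distinct pairs, which is the claim. I do not anticipate a genuine analytic difficulty; the care required is purely combinatorial bookkeeping. The binding case is that of the unranked items, which achieve the bound with exactly $m$ comparisons (the ranked items always enjoy the larger count $|S|-1$), and the one point to verify explicitly is that, for a ranked item, the winner-pairs and loser-pairs counted are genuinely distinct so that no comparison is double counted.
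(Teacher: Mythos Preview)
Your proof is correct and follows essentially the same approach as the paper: a two-case split on whether the item lies in $\sigma(1{:}m)$ or not, counting $|S|-1 \ge m$ pairs for a ranked item (beaten by predecessors, beating the rest) and exactly $m$ pairs for an unranked item (beaten by each of $\sigma(1),\ldots,\sigma(m)$). Your write-up is slightly more explicit about the disjointness of the winner- and loser-pairs for a ranked item, but the argument is otherwise identical.
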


\begin{proof}
For any item $i \in S$, one can make the following two case analyses:

\textbf{Case 1: ($i \in \sigma(1:m)$)}. If the item $i$ occurs in one of the top-$m$ position, it is clearly compared with rest of the $|S|-1 \ge m$ elements of $S$, as it is beaten by the preceding items in $\sigma$ and wins over the rest.

\textbf{Case 2: ($i \notin \sigma(1:m)$)}. In this case $i$ gets updated for $m$ many pairs since it is considered to be beaten by all items in $\sigma(1:m)$ in a pairwise duel.

The claim follows combining Case $1$ and $2$ above.
\end{proof}

\begin{lem}
\label{lem:n_1i}
Assuming $\forall t > f(\delta)$, and $\forall i,j \in [n] ~~p_{ij} \in [l_{ij}(t), u_{ij}(t)] 
$, for some $\delta \in (0,1)$: At any iteration $t>f(\delta)$, if $\exists$ a suboptimal item $i \in [n]\sm\{1\}$, such that $i \in \cC_t$, then the pair $(1,i)$ is unsaturated at $t$.
\end{lem}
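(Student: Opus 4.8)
The plan is to argue by contrapositive: rather than bounding $n_{1i}(t)$ directly, I would assume the pair $(1,i)$ is \emph{saturated} at $t$, i.e. $n_{1i}(t) > D_{1i}\ln t = \frac{4\alpha}{\Delta_i^2}\ln t$, and show that this forces $i \notin \cC_t$. Since membership of $i$ in $\cC_t$ requires $u_{ij}(t) > \tfrac12$ for every $j \neq i$ (in particular for $j = 1$), it suffices to exhibit that $u_{i1}(t) < \tfrac12$ under the saturation hypothesis, which contradicts $i \in \cC_t$.

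First I would translate the saturation bound into a bound on the confidence width. From $n_{1i}(t) > \frac{4\alpha}{\Delta_i^2}\ln t$ we immediately obtain $c_{1i}(t) = \sqrt{\frac{\alpha\ln t}{n_{1i}(t)}} < \frac{\Delta_i}{2}$. Next I would exploit the symmetry built into the estimates: since each pairwise comparison between $1$ and $i$ updates both orientations, $n_{i1}(t) = n_{1i}(t)$ and hence $c_{i1}(t) = c_{1i}(t)$, while $\hat p_{i1}(t) = 1 - \hat p_{1i}(t)$ by construction of $\bhP$.

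The crux is then to feed in the good event. By hypothesis $p_{1i} \le u_{1i}(t) = \hat p_{1i}(t) + c_{1i}(t)$, which rearranges to $\hat p_{1i}(t) \ge p_{1i} - c_{1i}(t)$. Chaining these gives
\[
u_{i1}(t) = \hat p_{i1}(t) + c_{i1}(t) = 1 - \hat p_{1i}(t) + c_{1i}(t) \le 1 - p_{1i} + 2c_{1i}(t).
\]
Recalling $1 - p_{1i} = \tfrac12 - \Delta_i$ (from the definition $\Delta_i = p_{1i} - \tfrac12$) and substituting $c_{1i}(t) < \tfrac{\Delta_i}{2}$ yields $u_{i1}(t) < \tfrac12$. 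This contradicts $i \in \cC_t$, completing the contrapositive and hence establishing that any suboptimal $i \in \cC_t$ must have $(1,i)$ unsaturated, i.e. $n_{1i}(t) \le D_{1i}\ln t$.

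I expect no serious obstacle here; the argument is a short contradiction once the confidence intervals are set up. The only points demanding care are the bookkeeping between the two orientations of each pair --- keeping straight that $u_{i1}(t)$ and $u_{1i}(t)$ share the same width $c_{1i}(t)$ but use complementary empirical estimates $\hat p_{i1}(t) = 1-\hat p_{1i}(t)$ --- together with invoking the good event in the correct direction, namely as the \emph{upper} bound $p_{1i} \le u_{1i}(t)$ on the winning probability of the superior item $1$, rather than on the reversed pair.
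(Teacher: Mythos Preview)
Your proposal is correct and follows essentially the same contrapositive argument as the paper: assume $(1,i)$ is saturated, bound the confidence width by $\Delta_i/2$, and combine with the good event to conclude $u_{i1}(t) < \tfrac12$, forcing $i \notin \cC_t$. The only cosmetic difference is that the paper applies the good event directly to the pair $(i,1)$ via $\hat p_{i1}(t) \le p_{i1} + c_{i1}(t)$, whereas you route through the complementary pair $(1,i)$ and then use $\hat p_{i1} = 1 - \hat p_{1i}$; the two are equivalent.
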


\begin{proof}
Firstly note that for any $t > f(\delta)$, $1\in \cC_t$ always, since $u_{1i} \ge p_{1i} > \frac{1}{2}, ~\forall i \in [n]\setminus\{1\}$.

Now suppose $(1,i)$ is indeed saturated at time $t$, i.e. $n_{1i}(t) > D_{1i}\ln t$, then this implies:

\begin{align*}
u_{i1}(t) = \hat{p}_{i1}(t) + c_{i1}(t) \le {p}_{i1}(t) + 2c_{i1}(t) = {p}_{i1}(t) + \Delta_i = \frac{1}{2},
\end{align*}
which implies $i \notin \cC_t$, at $t$. Thus $(1,i)$ must be unsaturated at $t$.
\end{proof}

\begin{lem}
\label{lem:n_ij}
Assuming $\forall t > f(\delta)$, and $\forall i,j \in [n] ~~p_{ij} \in [l_{ij}(t), u_{ij}(t)] 
$, for some $\delta \in (0,1)$: At any iteration $t>f(\delta)$, for any set $S \not\owns 1$ if $\exists$ a suboptimal item $a \in [n]\sm\{1\}$, such that $a = \underset{c \in I\sm S}{\arg \max}\Big[ \min_{i \in S}u_{ci}(t) \Big]$, then $\exists$ atleast one suboptimal item $i \in S$ such that the pair $(i,a)$ is unsaturated at $t$.
\end{lem}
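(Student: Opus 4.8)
The plan is to mirror the proof of \Lem{lem:n_1i}, using the optimal item $1$ as a benchmark \emph{candidate} inside the max-min rule. Since $t>f(\delta)$, \Lem{lem:cdelta} gives $p_{ij}\in[l_{ij}(t),u_{ij}(t)]$ for every pair; in particular $u_{1i}(t)\ge p_{1i}>\tfrac12$ for all $i\neq 1$. Because $1\notin S$, item $1$ belongs to the candidate pool $I\sm S$, so the defining property $a=\arg\max_{c\in I\sm S}\min_{i\in S}u_{ci}(t)$ immediately yields $\min_{i\in S}u_{ai}(t)\ge \min_{i\in S}u_{1i}(t)$. I would argue by contradiction: assume every pair $(i,a)$ with $i\in S$ is saturated at $t$, and deduce that this forces $\min_{i\in S}u_{ai}(t)<\min_{i\in S}u_{1i}(t)$, contradicting the displayed inequality.

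The first step is the per-pair bound already used in \Lem{lem:n_1i}. If $(i,a)$ is saturated then $n_{ia}(t)>D_{ia}\ln t=\tfrac{4\alpha}{\min\{\Delta_i^2,\Delta_a^2\}}\ln t$, whence $c_{ia}(t)=\sqrt{\alpha\ln t/n_{ia}(t)}<\tfrac12\min\{\Delta_i,\Delta_a\}$, and therefore, using $\hp_{ai}(t)\le p_{ai}+c_{ai}(t)$ from the good event, $u_{ai}(t)\le p_{ai}+2c_{ai}(t)<p_{ai}+\min\{\Delta_i,\Delta_a\}$ for every $i\in S$. Next I would fix the index $i_0=\arg\min_{i\in S}u_{1i}(t)$ at which item $1$ attains its worst-case UCB over $S$; combining $\min_{i\in S}u_{ai}(t)\le u_{ai_0}(t)$ with the benchmark inequality reduces the whole claim to establishing $u_{ai_0}(t)<u_{1i_0}(t)$ at this single common index, and since $u_{1i_0}(t)\ge p_{1i_0}$ it suffices to prove $p_{ai_0}+\min\{\Delta_{i_0},\Delta_a\}\le p_{1i_0}$.

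The crux, and the step I expect to be the main obstacle, is this last scalar inequality. Writing everything in terms of the MNL parameters, $p_{1i_0}-p_{ai_0}=\frac{\theta_{i_0}(\theta_1-\theta_a)}{(\theta_1+\theta_{i_0})(\theta_a+\theta_{i_0})}$, while $\min\{\Delta_{i_0},\Delta_a\}\le\Delta_a=\frac{\theta_1-\theta_a}{2(\theta_1+\theta_a)}$; a short computation shows the difference $(p_{1i_0}-p_{ai_0})-\Delta_a$ carries the sign of $(\theta_1-\theta_{i_0})(\theta_{i_0}-\theta_a)$. When the benchmark item beats $a$ (i.e. $\theta_{i_0}>\theta_a$) this is positive and the contradiction is immediate. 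The genuinely delicate regime is when $a$'s parameter exceeds that of every item of $S$, where the crude gap $\Delta_a$ is too loose at a single index; there I would instead exploit that item $1$ dominates every member of $S$ strictly more than $a$ does (monotonicity of $x\mapsto \frac{\theta_1}{\theta_1+x}-\frac{\theta_a}{\theta_a+x}$), comparing the two minima $\min_{i\in S}p_{ai}$ and $\min_{i\in S}p_{1i}$ directly rather than pair-by-pair, so that item $1$ wins the max-min unless some pair $(i,a)$ is left unsaturated. Assembling the clean and the delicate cases then completes the contradiction, and hence the lemma.
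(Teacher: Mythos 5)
Your first half is sound: the benchmark inequality $\min_{i\in S}u_{ai}(t)\ge\min_{i\in S}u_{1i}(t)$, the saturation bound $u_{ai}(t)\le p_{ai}+2c_{ai}(t)<p_{ai}+\min\{\Delta_i,\Delta_a\}$, and the sign computation showing that $(p_{1i_0}-p_{ai_0})-\Delta_a$ has the sign of $(\theta_1-\theta_{i_0})(\theta_{i_0}-\theta_a)$ are all correct, and they do settle the case $\theta_{i_0}\ge\theta_a$. But the ``delicate regime'' you flag is a genuine gap, and the fix you sketch cannot close it, because with only the ingredients you allow yourself (the good event plus saturation of the pairs $(i,a)$) the claim is not provable at all. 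Take $\theta_1=1$, $\theta_a=0.9$, and $\theta_i=0.1$ for every $i\in S$. Then $p_{1i}-p_{ai}\approx 0.009$ for every $i\in S$, whereas the saturation slack is $\min\{\Delta_i,\Delta_a\}=\Delta_a\approx 0.026$; so one can have every pair $(i,a)$ saturated, the good event holding, and still $u_{ai}(t)\approx 0.92 > 0.91\approx u_{1i}(t)$ for all $i\in S$, so that $a$ legitimately wins the max-min against item $1$. Comparing the two minima instead of a single index changes nothing here, since all items of $S$ are identical; and the monotonicity of $x\mapsto\frac{\theta_1}{\theta_1+x}-\frac{\theta_a}{\theta_a+x}$ only confirms that the pairwise gaps shrink as $\theta_i$ shrinks, which is exactly the difficulty.

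What is missing is an ingredient the paper uses and you never invoke: the item $i\in S$ to which the argument is applied lies in the candidate set $\cC_t$ (this is how \algbld\ populates $S$, it is the first line of the paper's own proof, and it is how the lemma is later applied inside \Lem{lem:t_hat_delta}), hence $u_{ia}(t)>\frac{1}{2}$. The paper never upper-bounds $u_{ai}(t)$ by $p_{ai}$ plus slack at all. Instead it works with the width of the confidence interval, $u_{ia}(t)+u_{ai}(t)-1=2c_{ia}(t)$: under $u_{ia}(t)>\frac{1}{2}$ and $u_{ai}(t)>u_{1i}(t)\ge p_{1i}$ this width exceeds $p_{1i}-\frac{1}{2}=\Delta_i\ge\min\{\Delta_i,\Delta_a\}$, flatly contradicting the saturation bound $2c_{ia}(t)<\min\{\Delta_i,\Delta_a\}$ --- with no case split on $\theta_a$ versus $\theta_i$, and in particular no trouble in your delicate regime. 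In the counterexample above this is precisely the hypothesis that fails: there $u_{ia}(t)\approx 0.1<\frac{1}{2}$, so $i\notin\cC_t$. To repair your proof you must import this combinatorial fact about $S$ and switch from the one-sided bound on $u_{ai}(t)$ to the two-sided width argument; purely distributional reasoning about $p_{ai}$, however cleverly aggregated over $S$, cannot succeed.
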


\begin{proof}
We start by noting that for any $i \in \cC_t\sm\{1\}$ if $u_{ji}(t) > u_{1i}(t)$, then $n_{ij} \le D_{ij}\ln t$, i.e. the pair $(i,j)$ must be unsaturated at round $t$. Suppose not and $n_{ij}(t) > D_{ij}\ln t$. Then we have that

\begin{align*}
u_{ij}(t) - l_{ij}(t) = 2c_{ij}(t)  
\le \sqrt{\min\{\Delta_i^2,\Delta_j^2\}}
= \min\{ \Delta_{i}, \Delta_j \}.
\end{align*}

But on the other hand, since $u_{ji}(t) > u_{1i}(t)$, this implies:

\begin{align*}
u_{ij}(t) - l_{ij}(t) = u_{ij}(t) + u_{ji}(t) - 1 > \frac{1}{2} + u_{1i}(t) - 1
> \frac{1}{2} + p_{1i}(t) - 1 = \Delta_i \ge \min\{\Delta_i, \Delta_j\},
\end{align*}
where the first inequality is because $i \in \cC_t$, hence $u_{ij}(t) > \frac{1}{2}$ and $u_{ji}(t) > u_{1i}(t)$. This leads to a contradiction implying that $(i,j)$ has to be unsaturated at $t$.

The proof now follows noting that, by definition of $a$, $\min_{i \in S}u_{ai}(t) > \min_{i \in S}u_{1i}(t) \implies \exists$ atleast one item $i \in S$ such that $u_{ai}(t) > u_{1i}$. But following above chain of argument that leads to a contradiction unless the pair $(i,a)$ is unsaturated at round $t$.
\end{proof}

% -----------------------------------------------------

Combining Lem. \ref{lem:n_1i} and \ref{lem:n_ij} we can conclude that it does not take too long to reach to a time $T_0(\delta) > f(\delta)$, such that $\cC_{T_0(\delta)} = \{1\}$ and thus $\cB_{t} = \{1\}$ for all $t > T_0(\delta)$.

\begin{lem}
\label{lem:t_hat_delta}
Assume $\forall t > f(\delta)$, and $\forall i,j \in [n] ~~p_{ij} \in [l_{ij}(t), u_{ij}(t)]$, for some $\delta \in (0,1)$. Then if we define $T_0(\delta)$ such that:
$%\begin{align*}
T_0(\delta) = \min \{ t > f(\delta) \mid \cC_t = \{1\} \},
$ %\end{align*}
it can be upper bounded as $T_0(\delta) \le 2f(\delta) + 2D\ln 2D$, where $D := \sum_{i < j}D_{ij}$.
\end{lem}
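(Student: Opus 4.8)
The plan is to work on the high-probability event supplied by Lemma~\ref{lem:cdelta} (which is precisely the hypothesis of the statement) and to bound the number of rounds strictly between $f(\delta)$ and $T_0(\delta)$ by a counting argument over unsaturated pairs. Call a round $t$ with $f(\delta) < t < T_0(\delta)$ \emph{active}; by definition of $T_0(\delta)$ every active round has $\cC_t \neq \{1\}$, hence contains a suboptimal item (since $a^* = 1$ is the unique optimum, all of $[n]\sm\{1\}$ are suboptimal).

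First I would prove the structural claim that \emph{every active round increments the comparison count $n_{ij}$ of at least one unsaturated pair}. There are two ingredients. On the one hand, after $f(\delta)$ both item $1$ and any suboptimal $i \in \cC_t$ satisfy $u_{1j}, u_{ij} > \tfrac12$ for all $j$, so they are Condorcet winners of the relevant pool; tracing \algbld\ then shows the played set $S_t$ contains such a suboptimal item together with $1$, and Lem.~\ref{lem:n_1i} makes the pair $(1,i)$ unsaturated, while for items introduced by the max-min step Lem.~\ref{lem:n_ij} exhibits an unsaturated pair lying inside $S_t$. On the other hand, since $|S_t| = m+1$ the \tf\ is a full ranking of $S_t$, so the \rb\ step compares every pair of $S_t$ (Lem.~\ref{lem:topm_comp}); in particular the identified unsaturated pair is incremented.

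With this claim the counting step is routine. A pair $(i,j)$ can be incremented while unsaturated only while $n_{ij}(t) \le D_{ij}\ln t \le D_{ij}\ln T_0(\delta)$, so it contributes at most $D_{ij}\ln T_0(\delta)$ such increments over the whole run. Summing the structural claim over active rounds and over all pairs yields
\[
T_0(\delta) - f(\delta) \;\le\; \sum_{i<j} D_{ij}\ln T_0(\delta) \;=\; D\ln T_0(\delta),
\]
that is, $T_0(\delta) \le f(\delta) + D\ln T_0(\delta)$. Finally I would turn this implicit bound into the explicit one by setting $x = T_0(\delta)$, $a = f(\delta)$, $b = D$: the map $g(x) = x - b\ln x$ is increasing for $x > b$, and a short case split on whether $a \lessgtr b\ln(2b)$ (using $\ln(2b) \ge 1$, i.e.\ $2D \ge e$) checks that $x_0 := 2a + 2b\ln(2b)$ satisfies $g(x_0) \ge a$; since $x \le a + b\ln x$ forces $g(x) \le a \le g(x_0)$ and $g$ is monotone past $b$, we get $x \le x_0 = 2f(\delta) + 2D\ln(2D)$.

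The hard part will be ingredient (i) of the structural claim—verifying that \algbld\ is guaranteed to place an unsaturated pair inside $S_t$ in \emph{every} active round—since this is the only step that genuinely exploits the design of the max-min set-building rule and the interplay of Lem.~\ref{lem:n_1i} and Lem.~\ref{lem:n_ij}; by contrast the pairwise counting and the transcendental inequality are standard.
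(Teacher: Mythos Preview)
Your proposal is essentially the paper's own argument: establish that every round in $(f(\delta),T_0(\delta))$ updates at least one unsaturated pair (via Lem.~\ref{lem:n_1i} and Lem.~\ref{lem:n_ij}), count these updates to get $T_0(\delta)\le f(\delta)+D\ln T_0(\delta)$, and then verify the explicit bound. One caution: your structural claim asserts that ``$S_t$ contains such a suboptimal item \emph{together with} $1$'', but the paper does not assume $1\in S_t$---it explicitly splits into the cases $1\in S_t$ (handled by Lem.~\ref{lem:n_1i}) and $1\notin S_t$ (handled by Lem.~\ref{lem:n_ij}, which is stated precisely for sets $S\not\ni 1$); make sure your write-up reflects this case distinction rather than claiming \algbld\ always pulls in item~$1$.
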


\begin{proof}
The first observation for this is to note that: For any $t > f(\delta)$, $1 \in \cC_t$ since $u_{1i} \ge p_{1i} > \frac{1}{2}, ~\forall i \in [n]\setminus\{1\}$. So, until $T_0(\delta)$, for all $t \in \{f(\delta),f(\delta)+1, \ldots T_{0}(\delta)-1\}$, $|\cC_t| \ge 2$.

Secondly, for any $t \in \{f(\delta),f(\delta)+1, \ldots T_{0}(\delta)-1\}$, there exists atleast $\min(m,|\cC_t|-1)$ unsaturated pairs in $S_t$ which gets updated.  This holds from the following two case analyses:

\textbf{Case 1: ($1 \in S_t$)}. This is the easy case since for any item $i \in \cC_t\sm \{1\}$, we know that $(1,i)$ is unsaturated from Lem. \ref{lem:n_1i}, and item $1$ has to be updated for atleast $\min(m,|\cC_t|-1)$ many unsaturated pairs as follows from Lem. \ref{lem:topm_comp}.

\textbf{Case 2: ($1 \notin S_t$)}. From Lem. \ref{lem:n_ij} we know that for any item $i \in S_t \cap \cC_t\sm \{1\}$ has to be unsaturated with atleast another item $j \in S_t $. Since \algmm\, makes sure $|S_t| \ge m+1$, again owing to Lem. \ref{lem:topm_comp}, any item $i \in S_t \cap \cC_t \sm \{1\}$ gets compared for atleast $m$ pairs out of which atleast one pair has to be unsaturated which proves the claim.

Moreover, as argued above, at any round $t \in \{f(\delta),f(\delta)+1, \ldots T_{0}(\delta)-1\}$, since $|\cC_t| \ge 2$, any such round $t$ updates atleast $\min(m,|\cC_t|-1) \ge 1$ unsaturated pair.

Thirdly, at any time $t$, if all pairs $(i,j), \, i \neq j, \, i,j \in [n]$ are saturated, then $\cC_t = \{1\}$.

So to bound $T_0(\delta)$, all we need to figure out is the worst possible number of iterations \algmm\, would take to saturate all possible unsaturated pairs, precisely $\sum_{i < j}D_{ij}\ln t$ many pairwise updates. But as we argued before, since any round $t > f(\delta)$ updates atleast one unsaturated pair, we find that
\[
T_0(\delta) = \min\{ t > f(\delta) \mid t > f(\delta) + \sum_{i < j}D_{ij}\ln t\}
\]
Now it is easy to see that the above inequality $(t > f(\delta) + \sum_{i < j}D_{ij}\ln t)$ certainly satisfies for $t = 2f(\delta) + 2D\ln 2D$, where $D := \sum_{i < j}D_{ij}$ as:

\begin{align*}
f(\delta) + D \ln t &= f(\delta) + D\ln (2f(\delta) + 2D \ln 2D)\\
& \le f(\delta) + D \ln (2D\ln 2D) + D\frac{2f(\delta)}{2D\ln 2D}\\
& \le f(\delta) + D\ln (2D)^2 + f(\delta), ~~[\text{ since, }  \ln 2D>1]\\
& = 2f(\delta) + 2D\ln 2D = t.
\end{align*}

Since $T_0(\delta)$ is the minimum time index at which $t > f(\delta) + D\ln t$ is satisfied, clearly $T_0(\delta) \le 2f(\delta) + 2D\ln 2D$.
\end{proof}

Finally we are ready to prove Thm. \ref{thm:whp_reg_mm} based on the the following two claims:

\begin{lem}
\label{lem:mm_n1i_postT0}
Assume $\forall t > f(\delta)$, and $\forall i,j \in [n] ~~p_{ij} \in [l_{ij}(t), u_{ij}(t)]$, for some $\delta \in (0,1)$.
For any time step $t > T_0(\delta)$, $1 \in S_t$ always. Moreover for any $|S_t|>1$, item $1$ gets compared with atleast $m$ suboptimal items $a \in [n]\sm \{1\}$.
\end{lem}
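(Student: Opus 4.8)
The plan is to prove the two assertions in turn, both resting on the standing hypothesis that the confidence intervals are valid for every $t > f(\delta)$, i.e.\ $p_{ij} \in [l_{ij}(t), u_{ij}(t)]$ for all pairs. The one consequence I would extract at the outset is that $1 \in \cC_t$ for every $t > f(\delta)$: because item $1$ is the \bi, $p_{1i} > \tfrac12$ for all $i \neq 1$, and $u_{1i}(t) \ge p_{1i}$ under the hypothesis, so item $1$ clears the Condorcet test of Line~5 in Alg.~\ref{alg:mm} at every such round.

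For the first claim I would argue that $\cB_t = \{1\}$ for all $t \ge T_0(\delta)$ by induction, which then forces $1 \in S_t$. The base case holds because, by the definition of $T_0(\delta)$, we have $\cC_{T_0(\delta)} = \{1\}$, so the $|\cC_t|=1$ branch in Line~6 sets $\cB_{T_0(\delta)} = \{1\}$. For the inductive step, given $\cB_{t-1} = \{1\}$, the update $\cB_t \leftarrow \cC_t \cap \cB_{t-1}$ combined with $1 \in \cC_t$ yields $\cB_t = \{1\}$; this is exactly the persistence statement flagged as Observation~2 in the roadmap of Thm.~\ref{thm:whp_reg_mm}. It then remains to trace the branches that build $S_t$: if $|\cC_t| = 1$ the Line~6 shortcut outputs $S_t = \{1\}$, and otherwise Lines~7--8 seed $S_t$ with $\cB_t = \{1\}$ before extending it via \algbld; in either case $1 \in S_t$, establishing the first part for all $t > T_0(\delta)$.

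For the second claim I would note that, for $t > T_0(\delta)$, the set $S_t$ can be non-singleton only on the Lines~7--8 path, since the Line~6 branch always returns the singleton $\{1\}$. On that path $S_t$ is initialised to $\cB_t = \{1\}$ and then grown by the call \algbld$(\textbf{U},\{1\},[n]\sm\{1\},m)$, which draws $m$ additional items from $[n]\sm\{1\}$, so $|S_t| = m+1$. With $|S_t| \ge m+1$ in hand, Lemma~\ref{lem:topm_comp} applies to the \tf\ outcome $\bsigma_t \in \Sigma_{S_t}^m$ and certifies that \rb\ updates item $1$ against at least $m$ distinct members of $S_t$; since every member of $S_t$ other than $1$ lies in $[n]\sm\{1\}$ and is therefore suboptimal, item $1$ is compared with at least $m$ suboptimal items, as claimed.

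The induction and the appeal to Lemma~\ref{lem:topm_comp} are routine; the step that needs care is the branch analysis, and in particular certifying that whenever $S_t$ is not a singleton the \algbld\ construction produces a set of size exactly $m+1$ (never a size strictly between $2$ and $m$), so that the hypothesis $|S_t| \ge m+1$ of Lemma~\ref{lem:topm_comp} is met. I expect this bookkeeping around \algbld, rather than any conceptual difficulty, to be the main obstacle.
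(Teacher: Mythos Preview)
Your proposal is correct and follows essentially the same route as the paper's proof: establish $1\in\cC_t$ from the confidence-interval hypothesis, use the definition of $T_0(\delta)$ together with the update $\cB_t=\cC_t\cap\cB_{t-1}$ to conclude $\cB_t=\{1\}$ for all $t\ge T_0(\delta)$, note that the non-singleton branch yields $|S_t|=m+1$ with $1\in S_t$, and then invoke Lemma~\ref{lem:topm_comp}. Your version is simply more explicit about the induction and the branch analysis than the paper's, which asserts the same facts more tersely.
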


\begin{proof}
For any $t > f(\delta)$, $1 \in \cC_t$ since $u_{1i} \ge p_{1i} > \frac{1}{2}, ~\forall i \in [n]\setminus\{1\}$. Moreover as $T_0(\delta)$ ensures $\cC_{T_0(\delta)} = \{1\}$, at this round, the algorithm set $\cB_{T_0(\delta)} = \{1\}$. For the subsequent rounds $t > T_0(\delta)$, thus the algorithm continues setting $\cB_t = \cB_{t-1} \cap \cC_t = \{1\}$.

Moreover note that for any $t > T_0(\delta)$, unless $|\cC_t| = 1$, the algorithm always plays a set $S_t$ such that $|S_t| =  m+1$, and in which item $1$ always resides. Then by Lem. \ref{lem:topm_comp} we can conclude that item $1$ is compared with atleast $m$ distinct items at any round after pairwise \rb\, update.
\end{proof}

\begin{lem}
\label{lem:mm_reg_postT0}
For any $\delta \in (0,1)$, with probability atleast $(1-\delta)$, the total cumulative regret of \algmm\, is upper bounded as:
\[
R_T^{1} \le \bigg(2\Big[ \frac{2\alpha n^2}{(2\alpha-1)\delta} \Big]^{\frac{1}{2\alpha-1}} + 2D\ln 2D\bigg)\hat \Delta_{\max} + \frac{\ln T}{m+1}\sum_{i = 2}^{n}\hat \Delta_i \big(\1(m = 1)D_{1i} + \1(m > 1)D_{\max} \big)
%(2f(\delta) + 2D\ln 2D)\hat \Delta_{\max}  + \frac{\Lambda}{m+1}\sum_{i = 2}^{n}D_{1i} \ln T, %\text{\red{ Can we make instance depnendent?}}
\]
where recall that $\forall i \in [n] \sm \{a^*\}$, $\hat \Delta_i = (\theta_{1} - \theta_i)$, $\Delta_i = p_{1i} = \frac{\theta_{1}-\theta_i}{2(\theta_{1}+\theta_i)}$, $\hat \Delta_{\max} = \max_{i \in [n] \sm \{1\}}\hat \Delta_i$  $D_{1i} = \frac{4\alpha}{\Delta_i^2}$, $D := \sum_{i < j}D_{ij}$, $D_{\max} = \max_{i \in [n] \sm \{1\}}D_{1i}$.
\end{lem}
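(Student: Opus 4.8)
The plan is to condition on the high-probability event that all pairwise confidence intervals are valid, and then split the horizon into the three phases \rnd, \prog, \sat, bounding the regret contributed by each separately. First I would invoke Lemma \ref{lem:cdelta}: with probability at least $(1-\delta)$ the event $\cE := \{\forall t > f(\delta),\ \forall i,j \in [n]:\ p_{ij} \in [l_{ij}(t), u_{ij}(t)]\}$ holds, where $f(\delta) = \big[\frac{2\alpha n^2}{(2\alpha-1)\delta}\big]^{1/(2\alpha-1)}$. All subsequent reasoning is carried out on $\cE$, so the final bound inherits the probability at least $(1-\delta)$. On $\cE$, Lemma \ref{lem:t_hat_delta} gives $T_0(\delta) \le 2f(\delta) + 2D\ln 2D$. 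Since for every round the instantaneous regret satisfies $r_t^1 = \frac{1}{|S_t|}\sum_{i\in S_t}(\theta_1 - \theta_i) \le \hat\Delta_{\max}$, the total regret accumulated up to and including round $T_0(\delta)$ (the \rnd\ and \prog\ phases) is at most $T_0(\delta)\,\hat\Delta_{\max} \le \big(2f(\delta) + 2D\ln 2D\big)\hat\Delta_{\max}$, which is exactly the first term of the claimed bound.

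It then remains to control the \sat\ phase. For $t > T_0(\delta)$, Lemma \ref{lem:mm_n1i_postT0} guarantees $1 \in S_t$ and, whenever $|S_t|>1$, that item $1$ is compared via \rb\ with every other member of $S_t$; moreover $|S_t| = m+1$ on such rounds and $r_t^1 = 0$ once $S_t = \{1\}$. Writing $N_i := \sum_{t > T_0(\delta)} \1(i \in S_t)$, the per-round regret on an active round is $r_t^1 = \frac{1}{m+1}\sum_{i \in S_t\sm\{1\}}\hat\Delta_i$, so the \sat-phase regret equals $\frac{1}{m+1}\sum_{i=2}^{n} \hat\Delta_i\, N_i$. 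The whole proof reduces to bounding $N_i$.

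For the bound on $N_i$ I would distinguish the two regimes. Each play of $i$ in the \sat\ phase increments $n_{1i}$, since item $1$ is compared with $i$. When $m=1$ the played set is exactly $\{1,i\}$, and $i$ can be selected only while it remains a candidate winner; by (the contrapositive of) Lemma \ref{lem:n_1i} this forces $(1,i)$ to stay unsaturated, i.e. $n_{1i}(t)\le D_{1i}\ln t$, whence $N_i \le D_{1i}\ln T$. When $m>1$ the max-min rule \algbld\ may keep selecting $i$ even after $(1,i)$ has saturated, so instead I would observe that $i$ is played only on rounds with $|\cC_t|\ge 2$, and such rounds persist only while some pair $(1,j)$ remains unsaturated; this supply is exhausted after at most $D_{\max}\ln T$ updates of the hardest pair, giving $N_i \le D_{\max}\ln T$. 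Substituting both cases yields the second term $\frac{\ln T}{m+1}\sum_{i=2}^{n}\hat\Delta_i\big(\1(m=1)D_{1i}+\1(m>1)D_{\max}\big)$, and adding the two phase contributions proves the lemma on $\cE$.

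The main obstacle is the $m>1$ case of the $N_i$ bound. Because \algbld\ builds $S_t$ from the residual pool $[n]\sm\{1\}$, a suboptimal item can remain a Condorcet winner of $[n]\sm\{1\}$ (and hence keep being chosen) even after its own duel against the optimum has saturated, so the clean "play $i$ at most $D_{1i}\ln T$ times" argument that works for $m=1$ breaks down. The fix is to charge these residual plays not to the individual pair $(1,i)$ but to the global pool of still-unsaturated competitor pairs, whose depletion is governed by the smallest gap and therefore by $D_{\max}$; making this charging argument precise (and verifying it is not double-counted across the $m$ simultaneous comparisons of each round) is the delicate part of the proof.
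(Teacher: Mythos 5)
Your proposal follows essentially the same route as the paper's own proof: condition on the event of Lemma~\ref{lem:cdelta}, bound the \rnd\ and \prog\ phases by $T_0(\delta)\hat\Delta_{\max}$ via Lemma~\ref{lem:t_hat_delta}, and bound the \sat\ phase as $\frac{1}{m+1}\sum_{i}\hat\Delta_i N_i$ by counting increments of $n_{1i}$ (using Lemmas~\ref{lem:n_1i} and~\ref{lem:mm_n1i_postT0}) with the identical $m=1$ versus $m>1$ case split. Even the delicate placeholder-charging step you flag in the $m>1$ case is treated at the same level of informality in the paper's own ``Case-2'' (which likewise asserts the $D_{\max}\ln T$ per-item cap by appealing to the worst case in which all pairs $(1,i)$ saturate), so your write-up matches the paper's argument in both structure and rigor.
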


\begin{proof}
Given Lem. \ref{lem:mm_n1i_postT0} in place, the crucial observation now is to note that for any $t > T_0(\delta)$, \algmm, always explores as long as there exists any suboptimal item $i \in [n]\sm \{1\}$ such that the pair $(1,i)$ is unsaturated and thus $i \in \cC_t$. 
In other words, our set building rule (\algbld) always picks items from $\cC_t$ first before picking anything from $[n]\sm \cC_t$. However, any suboptimal item $i\in [n]\sm \{1\}$ can belong to $\cC_t$ only if the pair $(1,i)$ is unsaturated, as follows from Lem. \ref{lem:n_1i}. 

Thus for any time $t$, if the pair $(1,i)$ is already saturated (i.e. $n_{1i}(t) > D_{1i}(t)\ln t$), then $i \notin S_t$ unless item $1$ is saturated with every suboptimal item in $[n]\sm\{1\}$. But then $\cC_t = \{1\}$ by Lem. \ref{lem:n_1i} and the algorithm would go on playing $S_t = \{1\}$ until some pair $(1,i)$ gets unsaturated again. This argument holds true even for $t = T$.

Now lets try to analyse what is the maximum number of time an item $i \in [n]\sm\{1\}$ can show up at any round post \sat\ (i.e. for any $t > T_0(\delta)$). But since post \sat, for any $t > T_0(\delta)$, $1 \in S_t$ always, the quantity $n_{1i}(T) - n_{1i}(T_0(\delta))$ is same as above. We hence analyse $n_{1i}(T) - n_{1i}(T_0(\delta))$ with the following two cases:

\textbf{Case-$1$ $(m = 1)$:} This case is easy to analyse since at any round $t$, $|S_t| = 2$ and since $1\in S_t$, so $1$ gets compared with exactly one other suboptimal element $i \in [n]\sm \{1\}$ at any $t$ such that $i \in \cC_t$. So clearly $n_{1i}(T) - n_{1i}(T_0(\delta)) \le D_{1i}\ln T$ as by Lem. \ref{lem:n_1i} after $1$ is compared to $i$ for $D_{1i}\ln T$ times $i \notin \cC_t$ henceforth.

\textbf{Case-$2$ $(m > 1)$:} In this case there are two possible ways $i$ can show up in $S_t$: $(i)$. If its unsaturated with $1$ for which it can show up for at most $D_{1i}\ln T$ times as argued i Case-$1$, and $(ii)$. When $i \notin \cC_t$ but it shows up as a place holder for onle of the $m+1$ slot of $S_t$ as long as some other element $j \in [n]\sm\{1\}$, $i \neq j$ is unsaturated with $1$ and $j \in \cC_t$. But in the worst case once all item $i \in [n]\sm\{1\}$ has appeared in $S_t$ for $D_{\max}\ln T (> D_{1i}\ln T)$ times by Lem. $1$ we have $u_{i1} < \frac{1}{2} \, \forall i \in [n]\sm\{1\}$ and then $\cC_t$ has to be the singleton $\{1\}$ thereafter. So it has to be that $n_{1i}(T) - n_{1i}(T_0(\delta)) \le D_{\max}\ln T$.

%Thus the maximum number of rounds required by \algmm\, to have all $\sum_{i=2}^{n}D_{1i}\ln T$ many pairwise updates is bounded by $\frac{\sum_{i=2}^{n}D_{1i}\ln T}{m}$. And the maximum per trial regret could be $\frac{m\Lambda}{m+1}$, where $\Lambda = (\theta_1-\min_{i = 2}^n\theta_i) = \max_{i,j \in [n]}(\theta_i - \theta_j)$.

Finally note that all our above results holds good under the assumption that $\forall t > f(\delta)$, and $\forall i,j \in [n] ~~p_{ij} \in [l_{ij}(t), u_{ij}(t)]$, for some $\delta \in (0,1)$, which itself holds good with probability atleast $(1-\delta)$. Thus we have the maximum regret incurred by \algmm\, in $T$ rounds is 

\begin{align*}
R_T & \le T_0(\delta)\hat \Delta_{\max} + \frac{1}{m+1}\sum_{i = 2}^{n} \big( n_{1i}(T)-n_{1i}(T_0(\delta)) \big) \big(\1(m = 1)D_{1i} + \1(m > 1)D_{\max} \big)\\
&= \bigg(2\Big[ \frac{2\alpha n^2}{(2\alpha-1)\delta} \Big]^{\frac{1}{2\alpha-1}} + 2D\ln 2D\bigg)\hat \Delta_{\max} + \frac{\ln T}{m+1}\sum_{i = 2}^{n}\hat \Delta_i \big(\1(m = 1)D_{1i} + \1(m > 1)D_{\max} \big), 
\end{align*}

with probability atleast $(1-\delta)$, were first term in the right hand side of the inequality holds since the maximum possible per trial regret that could be incurred by \algmm\, in initial $T_0(\delta)$ rounds is $\hat \Delta_{\max}$. The proof now follows further upper bounding $T_0$ using Lem. \ref{lem:t_hat_delta}. 
\end{proof}

This also concludes the proof of Thm. \ref{thm:whp_reg_mm} using the exact value of $f(\delta)$ as derived in Lem. \ref{lem:cdelta}.

\end{proof}

%%%%%%%%%%%%%%%%%%%%%%%%%%%%%%%%%%%%%%%%%%%%%%%%%%

\subsection{Proof of Theorem \ref{thm:exp_reg_mm}}

\expmm*

\begin{proof}
Recall from the statement of Thm. \ref{thm:whp_reg_mm} that the only term in $R_T^1$ that depends on $\delta$ is $2f(\delta)$, where recall that $T_0(\delta) = 2f(\delta) + 2D\ln 2D$.
Then by integrating $f(\delta)$ for $\delta$ from $0$ to $1$ as follows:

\[
\int_{0}^{1} f(\delta)d\delta = \int_{0}^{1}\Big[ \frac{2\alpha n^2}{(2\alpha-1)\delta} \Big]^{\frac{1}{2\alpha-1}}d\delta = \Big[ \frac{2\alpha n^2}{(2\alpha-1)} \Big]^{\frac{1}{2\alpha-1}}\int_{0}^{1}\Bigg(\frac{1}{\delta}\Bigg)^{\frac{1}{2\alpha-1}}d\delta = \Big[ \frac{2\alpha n^2}{(2\alpha-1)} \Big]^{\frac{1}{2\alpha-1}}\frac{2\alpha-1}{2\alpha-2}
\]

Thus expected regret $\E_{\delta}[R_T]$ can be upper bounded as:
\[
\E_{\delta}[R_T^1] \le \Bigg( 2\Big[ \frac{2\alpha n^2}{(2\alpha-1)} \Big]^{\frac{1}{2\alpha-1}}\frac{2\alpha-1}{\alpha-1} + 2D\ln 2D\Bigg)\hat\Delta_{\max} + \frac{\ln T}{m+1}\sum_{i = 2}^{n}\hat \Delta_i \big(\1(m = 1)D_{1i} + \1(m > 1)D_{\max} \big).
\]
\end{proof}

%%%%%%%%%%%%%%%%%%%%%%%%%%%%%%%%%%%%%%%%%%%%%%%%%%%%%%%%%

\section{Supplementary for Section \ref{sec:res_tk}}

%%%%%%%%%%%%%%%%%%%%%%%%%%%%%%%%%%%%%%%%%%%%%%%%%%%%%%%%%

\subsection{Proof of Thm. \ref{thm:lb_tktf}}
\label{app:lb_tktf}

\lbtktf*

\begin{proof}
The main idea lies in constructing `hard enough' problem instances for which any \nr\, algorithm has to incur $\Omega\bigg( \frac{n}{k\Delta_{(k)}} \ln T\bigg)$ regret. 

We choose our \emph{true problem instance} with MNL parameters $\btheta^1 = (\theta_1^1,\ldots,\theta_n^1)$, such that: 

\begin{align*}
\textbf{True Instance: } \text{MNL}(n,\btheta^1): & \theta_1^1 = \theta_2^1 = \ldots = \theta_{k-1}^1 = \theta + 2\epsilon; \\
& \theta_n^1 = \theta + \epsilon;\,  \theta_{k+1}^1 = \theta_{k+2}^1 = \ldots \theta_{n-1}^1 = \theta.
\end{align*}

for some $\theta \in \R_+$ and $\epsilon > 0$. Clearly, the \bk\, (recall the definition from Def. \ref{def:mnl_thet}, Sec. \ref{sec:prelims}) of MNL$(n,\btheta^1)$ is $\sS[1] = [k-1]\cup \{n\}$. Now for every $n-k$ suboptimal items $a \notin \sS[1]$, consider the altered problem instance MNL$(n,\btheta^a)$ such that:

\begin{align*}
\textbf{Instance a: } \text{MNL}(n,\btheta^a): \theta_a^a = \theta + 2\epsilon; ~\theta_i^a = \theta_i^1, ~~\forall i \in [n]\sm \{a\}
\end{align*}

And now the \bk\, of MNL$(n,\btheta^a)$ is $\sS[a] = [k-1]\cup\{a\}$.
Same as the case for proof of Thm. \ref{thm:lb_wiwf} or Thm. \ref{thm:lb_witf}, we now again use the results of \cite{Garivier+16} (Lem. \ref{lem:gar16}) for proving the lower bound. Precisely, the main trick lies in  analyzing the KL-divergence terms for the above problem instances. For ease of analysis we first assume analyse the case with just the \wf. Borrowing same notations from Thm. \ref{thm:lb_witf}, and denoting $x = |S\cap\sS[1]| - r$, $r = \1(n \in S)$, $y = k - (x+r)$, for any set $S \in S^a$, we now get that for any $i \in S$:

\begin{align*}
p^1_S(i) = 
\begin{cases} 
\frac{\theta+2\epsilon}{\theta^1_S} = \frac{\theta+2\epsilon}{x(\theta+2\epsilon)+r(\theta+\epsilon)+y\theta} = \frac{\theta+2\epsilon}{k\theta + \epsilon(2x+r)}, \text{ such that } i \in \sS[1]\cap S,\\
\frac{\theta+\epsilon}{\theta^1_S} = \frac{\theta+\epsilon}{x(\theta+2\epsilon)+r(\theta+\epsilon)+y\theta} = \frac{\theta+\epsilon}{k\theta + \epsilon(2x+r)}, \text{ such that } i = n,\\
\frac{\theta}{\theta^1_S} = \frac{\theta}{x(\theta+2\epsilon)+r(\theta+\epsilon)+y\theta} = \frac{\theta+\epsilon}{k\theta + \epsilon(2x+r)}, \text{ otherwise. }
\end{cases}
\end{align*}

On the other hand, for problem \textbf{Instance-a}, we have that: 

\begin{align*}
p^a_S(i) = 
\begin{cases} 
\frac{\theta+2\epsilon}{\theta^a_S} = \frac{\theta+2\epsilon}{(x+1)(\theta+2\epsilon)+r(\theta+\epsilon)+(y-1)\theta} = \frac{\theta+2\epsilon}{k\theta + \epsilon(2(x+1)+r)}, \text{ such that } i \in (S\cap\sS[1]) \cup \{a\},\\
\frac{\theta+\epsilon}{\theta^a_S} = \frac{\theta+\epsilon}{(x+1)(\theta+2\epsilon)+r(\theta+\epsilon)+(y-1)\theta} = \frac{\theta+\epsilon}{k\theta + \epsilon(2(x+1)+r)}, \text{ such that } i = n,\\
\frac{\theta}{\theta^a_S} = \frac{\theta}{(x+1)(\theta+2\epsilon)+r(\theta+\epsilon)+(y-1)\theta} = \frac{\theta+\epsilon}{k\theta + \epsilon(2(x+1)+r)}, \text{ otherwise. }
\end{cases}
\end{align*}

For ease of notation we denote $\theta_S = \theta_S^1$.
Now using the following upper bound on $KL(\p,\q) \le \sum_{x \in \X}\frac{p^2(x)}{q(x)} -1$, $\p$ and $\q$ be two probability mass functions on the discrete random variable $\X$ \cite{klub16}, we get for any $S \in S^a$:

\begin{align}
\label{eq:lb_tkwf_kl}
\nonumber & KL(p^1_S, p^a_S) \le \sum_{i \in S}\bigg(\frac{\theta_i^1}{\theta_S^1}\bigg)^2\bigg( \frac{\theta_S^a}{\theta_i^a}\bigg) - 1\\
\nonumber & = \hspace*{-15pt}\sum_{i \in S \cap \sS[1]}\bigg(\frac{\theta+2\epsilon}{\theta_S}\bigg)^2\bigg( \frac{\theta_S+2\epsilon}{\theta + 2\epsilon}\bigg) \\
\nonumber & + \hspace*{-15pt}\sum_{i \in S\cap(\{a,n\}\cup \sS[1])^c }\hspace*{-6pt}\bigg(\frac{\theta}{\theta_S}\bigg)^2\bigg( \frac{\theta_S+2\epsilon}{\theta}\bigg) + \bigg(\frac{\theta+\epsilon}{\theta_S}\bigg)^2\bigg( \frac{\theta_S+2\epsilon}{\theta+\epsilon}\bigg) + \bigg(\frac{\theta}{\theta_S}\bigg)^2\bigg( \frac{\theta_S+2\epsilon}{\theta+2\epsilon}\bigg) - 1\\
\nonumber & = \frac{\theta+2\epsilon}{\theta_S^2}\bigg[ \theta_S + \frac{\theta^2}{\theta + 2\epsilon} - \theta \bigg] = \frac{2\epsilon}{\theta_S}\bigg[ 1-\frac{\theta}{\theta+2\epsilon}\bigg] - \frac{4\epsilon^2\theta}{\theta_S^2(\theta+2\epsilon)}\\
& \le \frac{(2\epsilon)^2[\theta_S - \theta]}{\theta_S^2(\theta+2\epsilon)} \le \frac{(2\epsilon)^2}{\theta_S(\theta+2\epsilon)} = \frac{4\epsilon^2}{[k\theta + (2x+r)\epsilon](\theta+2\epsilon)} \le \frac{4\epsilon^2}{k\theta(\theta+2\epsilon)}
\end{align}

%State for \tf feedback --------

Now coming back to the \tk\, applying chain rule of KL-divergence (similar to the analysis of Eqn. \eqref{eq:lb_witf_kl}), we can write

\begin{align*}
%\label{eq:lb_tktf_kl}
KL(p^1_S, p^a_S) = KL(p^1_S(\sigma_1) + \cdots + KL(p^1_S(\sigma_k \mid \sigma(1:k-1)), p^a_S(\sigma_k \mid \sigma(1:k-1))).
\end{align*}
for any ranking $\sigma \in \Sigma_{S}^{k}$.
And following the same argument that of \eqref{eq:lb_witf_kl2}, we further get 

\begin{align}
\label{eq:lb_tktf_kl2}
KL(p^1_S, p^a_S) \le \frac{4k\epsilon^2}{k\theta(\theta+2\epsilon)}.
\end{align}

The rest of the proof follows exactly the same argument used in \ref{thm:lb_witf}. We add the steps below for convenience.
%\iffalse %%%%%%%%%%%%%%%%%%%%%%%%%%%%%%%%%%%%%%%%%%%%
As before, considering $Z = \frac{N_{\sS[1]}(T)}{T}$, for large $T$, in this case we get: 

\begin{align}
\label{eq:lb_tktf_rhs}
\lim_{T \to \infty}\frac{kl(\E_{\btheta^1}[Z], \E_{\btheta^a}[Z])}{\ln T} \ge (1-\alpha), %\red{\text{~~Add the analysis.}}
\end{align}

which follows from an exact similar analysis shown in the proof of Thm. \ref{thm:lb_wiwf} along with the facts that: 

\begin{align*}
& \E_{\btheta^1}[N_{\sS[1]}(T)] = 1 - o(T^\alpha) ~~\text{ since } \cA \text{ is assumed to be \nr\, and}\\
& \E_{\btheta^a}[N_{\sS[1]}(T)] = o\big({T^\alpha}\big). %~~\text{ from Lem. \ref{lem:lb_sym} since } \cA \text{ is assumed to be \symm}
\end{align*}

Then using the results of Eqn. \eqref{eq:lb_tktf_kl2} and \eqref{eq:lb_tktf_rhs} in Lem. \ref{lem:gar16}, we further get:

\begin{align}
\label{eq:win_lb2tk}
\nonumber & \lim_{T \to \infty}\frac{1}{\ln T}\sum_{\{S \in S^a\}}\E_{\btheta^1}[N_S(T)]KL(p^1_S, p^a_S) \ge (1-\alpha)\\
\nonumber & \implies \lim_{T \to \infty}\frac{1}{\ln T}\sum_{\{S \in S^a\}}\E_{\btheta^1}[N_S(T)]\frac{4k\epsilon^2}{k\theta(\theta+2\epsilon)} \ge (1-\alpha)\\
& \implies \lim_{T \to \infty}\frac{1}{\ln T}\sum_{\{S \in S^a\}}\E_{\btheta^1}[N_S(T)]\frac{\epsilon}{k} \ge (1-\alpha)\frac{\theta(\theta + 2\epsilon)}{4k\epsilon}
\end{align}

Now applying \eqref{eq:win_lb2tk} for each $n-k$ modified bandit \textbf{Instance-$\btheta^a$} (i.e. for each $a \in [n]\sm\sS[1]$), we get:

\begin{align}
\label{eq:win_lb2.5tk}
\lim_{T \to \infty}\frac{1}{\ln T}\sum_{a = k+1}^{n}\sum_{\{S \in S^a\}}\E_{\btheta^1}[N_S(T)]\frac{\epsilon}{k} \ge (1-\alpha)\theta(\theta + 2\epsilon)\frac{(n-k)}{4k\epsilon}
\end{align}

Further recall from Eqn. \eqref{eq:reg_k} the expected regret of $\cA$ on problem instance MNL($n,\btheta^1$) is given by:
$ %\begin{align*}
%\label{eq:lb_wiwf_regk}
\nonumber \E_{\btheta^1}[R_T^k(\cA)] = \sum_{t = 1}^{T} r_t^k = \sum_{t=1}^T\left(\frac{\sum_{i \in [k] }\theta_i^1 - \sum_{i \in S_t }\theta_i^1}{k}\right) 
$ %\end{align*}
which can be rewritten as: %\red{(Needs to be corrected)}:

\begin{align}
\label{eq:lb_tktf_reg}
\nonumber \E_{\btheta^1}[R_T^k(\cA)] & = \E_{\btheta^1}\bigg[\sum_{t=1}^T\left(\frac{\sum_{i \in [k] }\theta_i^1 - \sum_{i \in S_t }\theta_i^1}{k}\right)\bigg]\\
\nonumber & \ge \E_{\btheta^1}\bigg[ \sum_{t=1}^T\left(\frac{\sum_{i \in [k] }\theta_k^1 - \sum_{i \in S_t }\theta_i}{k}\right)\bigg]\\
\nonumber & = \E_{\btheta^1}\bigg[ \sum_{t=1}^T\left(\sum_{i \in [S_t]}\frac{ \theta_k^1 - \theta_i}{k}\right)\bigg]\\
\nonumber & = \E_{\btheta^1}\Big[\sum_{t=1}^T\sum_{S \in A}\1(S_t=S)\sum_{a = k+1}^n\1(a \in S)\frac{(\theta_k^1 - \theta_a^1)}{|S_t|}\Big]\\
\nonumber & = \E_{\btheta^1}\bigg[ \sum_{a=k+1}^{n}\sum_{t = 1}^{T}\sum_{S \in A}\1(S_t = S)\1(a \in S)\frac{((\theta+\epsilon) - \theta)}{k} \bigg] ~~~(\text{since } \theta_k^1 = \theta_n^1 = \theta+\epsilon)\\
\nonumber & = \sum_{a=k+1}^{n}\sum_{t = 1}^{T}\E_{\btheta^1}\bigg[ \sum_{S \in A}\1(S_t = S)\1(a \in S)\frac{\epsilon}{k} \bigg]\\
\nonumber & = \sum_{a=k+1}^{n}\sum_{S \in A}\E_{\btheta^1}\bigg[ \sum_{t = 1}^{T}\1(S_t = S)\1(a \in S)\frac{\epsilon}{k} \bigg]\\
\nonumber & = \sum_{a=k+1}^{n}\sum_{S \in A}\bigg[ \E_{\btheta^1}[N_S(T)]\1(a \in S)\frac{\epsilon}{k} \bigg]\\
& = \sum_{a=k+1}^{n}\sum_{\{S \in A \mid a \in S\}} \E_{\btheta^1}[N_S(T)]\frac{\epsilon}{k} 
\end{align}

Using above combined with \eqref{eq:win_lb2.5tk} we get:
\begin{align*}
\lim_{T \to \infty}\frac{1}{\ln T}\E_{\btheta^1}[R_T^k(\cA)] & \ge \lim_{T \to \infty}\frac{1}{\ln T}\sum_{a=k+1}^{n}\sum_{\{S \in A \mid a \in S\}} \E_{\btheta^1}[N_S(T)]\frac{\epsilon}{k} \\
& \ge \lim_{T \to \infty}\frac{1}{\ln T}\sum_{a = k+1}^{n}\sum_{\{S \in S^a\}}\E_{\btheta^1}[N_S(T)]\frac{\epsilon}{k} \, \ge (1-\alpha)\theta(\theta + 2\epsilon)\frac{(n-k)}{4k\epsilon}.
\end{align*}

Now since $\alpha$ is a fixed constant in $(0,1]$, we thus prove the existence of a \mnl\, problem instance $\big($precisely MNL($n,\btheta^1$)$\big)$, such that for large $T$, $\E_{\btheta^1}[R_T^{1}] = \Omega\Bigg( \frac{\theta_1\theta_{k+1}}{\Delta_{(k)}}\frac{(n-k)}{k}\ln T  \Bigg)$ (noting that for instance MNL$(n,\btheta^1), \Delta_{(k)} = \epsilon$), which concludes the proof %(\red{Need further explanation?}).

%where we denote by $x = 1 + \frac{\Lambda'}{\theta}$, where recall that we denote $\Lambda' = \Lambda + \epsilon$. Similarly we can derive the probability distribution associated to sets including item $1$.

\end{proof}

%%%%%%%%%%%%%%%%%%%%%%%%%%%%%%%%%%%%%%%%%%%%%%%%%%%%%%%%%

\subsection{Algorithm pseudocode: \objk}
\label{app:alg_tktf}
%\vspace*{-15pt}
\begin{center}
\begin{algorithm}[H]
   \caption{\textbf{\algkmm}}
   \label{alg:kmm}
\begin{algorithmic}[1]
%\hrulefill
   \STATE {\bfseries init:} $\alpha > 0.5$, $\W \leftarrow [0]_{n \times n}$, $\cB_0 \leftarrow [\emptyset]_k$ %~~\red{($\alpha$ to be tuned)} %//Best candidate
   \FOR{$t = 1,2,3, \ldots, T$} %//Build $S_t$
	\STATE Set $I \leftarrow [n]$, $\bN = \W + \W^\top$, and $\hat{\P} = \frac{\W}{\bN}$. $N = [n_{ij}]_{n \times n}$ and $\hat P = [\hp_{ij}]_{n \times n}$.
	\STATE Define $u_{ij} = \hp_{ij} + \sqrt{\frac{\alpha\ln t}{n_{ij}}}, \, \forall i,j \in [n], i\neq j$, $u_{ii} = \frac{1}{2}, \, \forall i \in [n]$. $\textbf{U} = [u_{ij}]_{n \times n}$ 
	\FOR{$h = 1,2,\ldots, k-1$}
	\STATE $\cC_t^h \leftarrow \{i \in I ~|~ u_{ij} > \frac{1}{2}, \, \forall j \in I\setminus\{i\}\}$;  $\cB_t(h) \leftarrow \cC_t^h \cap \cB_{t-1}(h)$ %//Possible Condorcet winners
	\STATE \textbf{if} $\cB_{t}(h) \ne \emptyset$, \textbf{then} set $I \leftarrow I \setminus \cB_t(h)$ and $S_t \leftarrow S_t \cup \cB_t(h)$
	\IF{$\cB_t(h) = \emptyset$}
	\STATE \textbf{if} $\cC_t^h = \emptyset$ \textbf{then} $\cC_t^h \leftarrow I$; \textbf{elseif} $|\cC_t^h| > 0$ set $\cB_t(h) \leftarrow$ \algbld$(\textbf{U},S_t,I, 1)$ %any item from $\cC_t$ (break ties) %$\cB_t(c+1:k) \leftarrow \emptyset$
	\STATE  $S_t \leftarrow S_t ~\cup \,$ \algbld$(\textbf{U},S_t,I, k-|S_t|)$. Exit and Goto Line $15$)%$\cB_t(h+1:k) \leftarrow \emptyset$;
	%\red{what is the question mark here?}
	\ENDIF
	\ENDFOR
	
    \STATE $\cC_t^k \leftarrow \{i \in I ~|~ u_{ij} > \frac{1}{2}, \, \forall j \in I\setminus\{i\}\}$;  $\cB_t(k) \leftarrow \cC_t \cap \cB_{t-1}(k)$ %//Possible Condorcet
    \STATE \textbf{if} $|\cC_t^k| = 1$, \textbf{then} $\cB_t(k) \leftarrow \cC_t^k$, and set $S_t \leftarrow S_t \cup \cC_t^k$; \textbf{else} $S_t \leftarrow S_t ~\cup \,$ \algbld$(\textbf{U},S_t, I , 1)$
	\STATE Play $S_t$, and receive: $\bsigma_t \in \bSigma_{S_t}^k$ 
%   \STATE Update: 
   %\FOR {$k' = 1,2, \ldots, k$}
   %{\STATE $W(\sigma_t(k'),i) \leftarrow W(\sigma_t(k'),i) + 1 ~~ \forall i \in S_t \sm \sigma_t(1:k')$} 
   {\STATE $W(\sigma_t(k'),i) \leftarrow W(\sigma_t(k'),i) + 1 ~~ \forall i \in S_t \sm \sigma_t(1:k')$} for all $k' = 1,2, \ldots, k)$
   %\ENDFOR   
   \ENDFOR
\end{algorithmic}
\end{algorithm}
\end{center}
\vspace*{-15pt}

\subsection{Proof of Thm. \ref{thm:whp_reg_kmm}}
\label{app:whp_reg_kmm}

\whpkmm*

\begin{proof}
For ease of analysis we assume $\theta_1 \ge \theta_2 \ge \ldots \theta_k > \theta_{k+1} \ge \ldots \ge \theta_n$ and hence $\sS = [k]$.

We use the same notations as introduced in the proof of Thm. \ref{thm:whp_reg_mm}. Note that Lem. \ref{lem:cdelta} holds in this case as well. So that concludes the \rnd\ phase. 
%We first proceed to show that the set $\cT_t$ gradually captures the \bk\, `soon after' $f(\delta)$ stage is passed using Lem. \ref{lem:t_hat_delta} based on the following observations: 

\noindent \textbf{Analysis of the \prog\ phase:}
We next proceed to analyse the \prog \ phase from round $f(\delta)+1$ to $T_0(\delta)$, where $T_0(\delta)$ is defined to be such that 
\[
T_0(\delta) = \arg\min_{t > f(\delta)}\cB_{t} = [k]
\]
The goal of this phase is to show that the length of interval $[f(\delta)+1,T_0(\delta)]$ is \emph{`small'}, {precisely} $T_0(\delta) \le 2f(\delta) + 2\bar D^{(k)}\ln \big( 2\bar D^{(k)} \big)$, where $\bar D^{(k)} := \sum_{r = 1}^{k}D^{(r)}$, and $D^{(r)} := \sum_{i\in [n]}\sum_{j \in W(r)}D_{ij}^r$ (see Lem. \ref{lem:t_hat_deltak}). Note here $\bar D^{(k)}$ is a problem dependent constant, independent of $T$.

\noindent \textbf{Notations:} We first define few notations for ease of analysis: 
Let us first define the set of items $W(i) = \{\ j \in [k] \mid \theta_j > \theta_i \}$ be the set of items in the \bk\ strictly better than $i$, and   
$Z(i) = \{ j \in [k] \mid \theta_j < \theta_i\}$ be the set of items in $[k]$ worse than item $i$, for some $i \in [k]$. 

For any $g \in [k]$, 

$D^{g}_{ij} =
\begin{cases} \frac{4\alpha}{\min((p_{gi}-\frac{1}{2})^2,(p_{gj}-\frac{1}{2})^2)} \text{ if } i,j \in Z(g)\\
\frac{4\alpha}{(p_{gi} - p_{ji})^2} \text{ if } i \in W(g) \text{ and } j \in Z(g)\\
D^{g}_{gj} \text{ if } \theta_i  = \theta_g
\end{cases}$, $D_{ij}^g = D_{ji}^g$ 

for any pair $(i,j) \in [n]\times[n]$, $i \neq g$, $j \neq g$.
and $D^{g}_{gi} = \frac{4\alpha}{(p_{gi}-\frac{1}{2})^2}$, where $p_{ij} = \frac{1}{2} + \frac{\theta_i - \theta_j}{2(\theta_i + \theta_j)} $ for all $i,j \in [n]$.

Towards analysing the \prog\ phase we first make the following key observations:

\begin{itemize}
\item \textbf{Observation $1$:} At any round $t \in [T]$, $\cB_t(i)$ is either singleton or an empty set, for all $i \in [k]$, which follows by the construction of $\cB_t(i)$.

%Then \emph{one crucial observation} to make in this case is that:

\item \textbf{Observation $2$:} 
For any item $i \in [k]$ in the \bk, at any round $t > f(\delta)$ if $j \in \cB_t(|W(j)|+1:k-|Z(j)|)$ for all $j \in W(i)$, and $i \in \cB_t$ such that $\cB_t(x) = i$ for some $x \in \{|W(i)|+1 \ldots k-|Z(i)\}$ and for any $x' \in \{|W(i)|+1, \ldots, (x-1)\}$, $\theta_{\cB_t(x')} = \theta_i$, then $\cB_{t'}(x) = \{i\}\, \forall t' > t$--- in other words $i$ will continue to reside in slot $\cB_{t'}(x)$ for any $t' > t$.

We next define another notation $T_0^i(\delta)$ for any $i \in [k]$ such that
$
T_0^i(\delta) = \arg\min_{t > f(\delta)} \{ \forall j \in W(i),\, j \in \cB_t(|W(j)|+1:k-|Z(j)|), i = \cB_t(x), \text{and }  \forall x' \in \{|W(i)|+1, \ldots, (x-1)\}, \theta_{\cB_t(x')} = \theta_i \}
$. Clearly $\max_{i \in [k]}T_0^i(\delta) = T_0(\delta)$ as defined above.

\item \textbf{Observation $3$:} $T_0(\delta)$ is not far from $f(\delta)$---we prove this in a stepwise manner, to explain it in an intuitive level assume $\theta_1 > \ldots > \theta_k$, Then we first show that $T_{0}^1(\delta)$ is bounded. Once item $1$ is secured in its slot $\cB_{T_{0}^1(\delta)}(1)$, we proceed to bound $T_{0}^2(\delta)$, and so on till $T_{0}^k(\delta) = T_0(\delta)$ (see Lem. \ref{lem:t_hat_deltak} for the formal details which holds due to Lem. \ref{lem:n_1ik} and \ref{lem:n_ijk}). 

\end{itemize}

We find it convenient to define one more definition before proving Lem. \ref{lem:t_hat_deltak}:

\begin{defn}[$g$-Unsaturated Pairs]
\label{def:gunsat_pair}
At any time $t \in [T]$, for any item $g \in [k]$ and any pair of two distinct pair of items $i,j \in [n]$, we call the pair $(i,j)$ to be \emph{$g$-unsaturated} at time $t$ if $n_{ij}(t) \le D_{ij}^g\ln t$. Otherwise, we call the pair \emph{$g$-saturated} at $t$. 
\end{defn}

\begin{lem}
\label{lem:n_1ik}
Assuming $\forall t > f(\delta)$, and $\forall i,j \in [n] ~~p_{ij} \in [l_{ij}(t), u_{ij}(t)]$, for some $\delta \in (0,1)$: At any iteration $t>f(\delta)$, for any $g \in [k]$, if $\exists$ an item $i \in Z(g)$, i.e. $\theta_g > \theta_i$ and both $i,j \in \cC_t^h$ for some $h \in [k]$, then the pair $(g,i)$ is $g$-unsaturated at $t$.
\end{lem}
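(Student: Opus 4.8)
The plan is to mirror the argument of \Lem{lem:n_1i} from the \objbest\ analysis, now localised to the item $g$ and its $g$-saturation notion from \Def{def:gunsat_pair}. The whole claim is a contrapositive statement: I will show that if the pair $(g,i)$ were already \emph{$g$-saturated} at round $t$, then $g$ would force $i$ out of $\cC_t^h$, contradicting the hypothesis that $i \in \cC_t^h$ (here I read the stated ``both $i,j \in \cC_t^h$'' as the two relevant items $i,g \in \cC_t^h$, since the conclusion concerns the pair $(g,i)$).

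First I would record the two ingredients that make the argument go through. Since $i \in Z(g)$ we have $\theta_g > \theta_i$, hence $p_{gi} > \frac12$ and the gap $p_{gi} - \frac12$ is strictly positive; this is exactly the quantity defining $D^g_{gi} = \frac{4\alpha}{(p_{gi} - 1/2)^2}$. Second, the standing assumption $p_{ij} \in [l_{ij}(t), u_{ij}(t)]$ for all pairs, applied to $(g,i)$, gives the one-sided concentration $\hp_{ig}(t) \le p_{ig} + c_{ig}(t)$: indeed $p_{gi} \le u_{gi}(t)$ reads $\hp_{gi}(t) \ge p_{gi} - c_{gi}(t)$, and using $\hp_{ig} = 1 - \hp_{gi}$, $p_{ig} = 1 - p_{gi}$, and $c_{ig} = c_{gi}$ converts this into the desired upper bound on $\hp_{ig}$.

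Next I would run the contradiction. Suppose $(g,i)$ is $g$-saturated at $t$, i.e. $n_{gi}(t) > D^g_{gi}\ln t$. Plugging the value of $D^g_{gi}$ into $c_{gi}(t) = \sqrt{\alpha \ln t / n_{gi}(t)}$ yields $c_{gi}(t) < \frac12\big(p_{gi} - \frac12\big)$. Combining this with the one-sided concentration bound gives
$$u_{ig}(t) = \hp_{ig}(t) + c_{ig}(t) \le p_{ig} + 2c_{gi}(t) < p_{ig} + \big(p_{gi} - \tfrac12\big) = \tfrac12,$$
where the final equality uses $p_{ig} + p_{gi} = 1$. Thus $u_{ig}(t) < \frac12$. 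But $i,g \in \cC_t^h \subseteq I$ with $g \neq i$ (as $\theta_g > \theta_i$), and membership of $i$ in $\cC_t^h$ forces $u_{ij}(t) > \frac12$ for every $j \in I \setminus \{i\}$, in particular for $j = g$; this contradicts $u_{ig}(t) < \frac12$. Hence $(g,i)$ cannot be $g$-saturated, i.e. it is $g$-unsaturated at $t$, as claimed.

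The computation is routine once the orientation is set up; the only point that genuinely needs care is matching the right half of the confidence hypothesis to the right orientation of the pair — it is the lower guarantee on $p_{gi}$ (equivalently the upper bound on $\hp_{ig}$) that drives $u_{ig}$ below $\frac12$ — and verifying that $g$ indeed lies in the current pool $I$ at level $h$, so that the $\cC_t^h$-membership constraint $u_{ig} > \frac12$ is actually in force. The recursive construction of Alg.~\ref{alg:kmm} guarantees this, since $\cC_t^h$ is formed from $I$ and both $i$ and $g$ are assumed to belong to it.
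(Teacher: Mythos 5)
Your proof is correct and follows essentially the same argument as the paper's: assume $(g,i)$ is $g$-saturated, use the confidence-interval hypothesis to get $u_{ig}(t) \le p_{ig} + 2c_{ig}(t) \le \tfrac{1}{2}$, and contradict the membership condition $u_{ig} > \tfrac12$ forced by $i \in \cC_t^h$ with $g$ in the pool. In fact your write-up is slightly more careful than the paper's (you make explicit the one-sided concentration step, the reading of the typo ``$i,j \in \cC_t^h$'' as $i,g \in \cC_t^h$, and the need for $g \in I$), but the route is identical.
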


\begin{proof}
By assumption $\forall i,j \in [n] ~~p_{ij} \le u_{ij}(t)$. Now if $\exists h \in [k]$ such that a pair $(g,i)$ such that both $i,g \in \cC_t^h$, then it has to be the case that $u_{gi}>\frac{1}{2}$ and $u_{ig}>\frac{1}{2}$.

But then suppose $(g,i)$ was indeed $g$-saturated at time $t$, i.e. $n_{1i}(t) > D_{gi}^g\ln t$, this implies:

\begin{align*}
u_{ig}(t) = \hat{p}_{ig}(t) + c_{ig}(t) \le {p}_{ig}(t) + 2c_{ig}(t) = {p}_{ig}(t) + \Delta_i^g = \frac{1}{2},
\end{align*}
which implies there cannot exist $i \notin \cC_t^h$ if $g \in \cC_t^h$ for any $h$, which leads to a contradiction. Hence the pair $(g,i)$ must be $g$-unsaturated at $t$.
\end{proof}

\begin{lem}
\label{lem:n_ijk}
Assuming $\forall t > f(\delta)$, and $\forall i,j \in [n] ~~p_{ij} \in [l_{ij}(t), u_{ij}(t)] 
$, for some $\delta \in (0,1)$. Consider any $g \in [k]$. At any iteration $t>f(\delta)$, for any set $S_t \not\owns g$, $0 \le |S_t| < k$ if $\exists$ an item $a \in Z(g)$, i.e. $\theta_g > \theta_a$, such that $a = \underset{c \in I \sm S_t}{\arg \max}\Big[ \min_{i \in S_t}u_{ci}(t) \Big]$, then $\exists$ atleast one  item $b \in S$ such that the pair $(a,b)$ is unsaturated at $t$.
\end{lem}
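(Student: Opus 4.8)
The plan is to follow the proof of Lemma~\ref{lem:n_ij}, its \objbest\ counterpart, with the best item $1$ replaced by the reference item $g$ and ordinary saturation replaced by $g$-saturation (Def.~\ref{def:gunsat_pair}); accordingly I read the conclusion as asserting a $g$-unsaturated pair. Throughout I would work on the good event of Lem.~\ref{lem:cdelta}, under which $\hat p_{ij}(t)-c_{ij}(t)\le p_{ij}\le \hat p_{ij}(t)+c_{ij}(t)=u_{ij}(t)$ for all pairs, so that $u_{ab}(t)\le p_{ab}+2c_{ab}(t)$, $u_{gb}(t)\ge p_{gb}$, and $u_{ab}(t)+u_{ba}(t)-1=2c_{ab}(t)$.

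The first step would be a single-pair helper claim: for any $b\in S_t$ with $b\ne g$, if $u_{ab}(t)\ge u_{gb}(t)$ then $(a,b)$ is $g$-unsaturated at $t$. I would argue by contradiction, assuming $n_{ab}(t)>D^g_{ab}\ln t$, which forces $2c_{ab}(t)<\sqrt{4\alpha/D^g_{ab}}$, and split on the rank of $b$. When $b\in W(g)$ one has $D^g_{ab}=4\alpha/(p_{gb}-p_{ab})^2$, so $2c_{ab}(t)<p_{gb}-p_{ab}$ and hence $u_{ab}(t)\le p_{ab}+2c_{ab}(t)<p_{gb}\le u_{gb}(t)$, contradicting the hypothesis directly. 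When $b\in Z(g)$ one has $D^g_{ab}=4\alpha/\min((p_{ga}-\tfrac12)^2,(p_{gb}-\tfrac12)^2)$, giving $2c_{ab}(t)<p_{gb}-\tfrac12$; combined with $u_{ba}(t)>\tfrac12$ this yields $2c_{ab}(t)=u_{ab}(t)+u_{ba}(t)-1>u_{gb}(t)-\tfrac12\ge p_{gb}-\tfrac12$, again a contradiction. This mirrors the two-line contradiction in Lemma~\ref{lem:n_ij}, except that the presence of items better than $g$ forces the extra case $b\in W(g)$, which never arises for the best item in Lemma~\ref{lem:n_ij}.

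The second step would invoke the max-min selection rule. Since $g\in[k]$, $g\notin S_t$, and $g$ is removed from the pool $I$ only when it is placed in $S_t$, we have $g\in I\setminus S_t$, so $g$ is a feasible candidate in $a=\arg\max_{c\in I\setminus S_t}\min_{i\in S_t}u_{ci}(t)$. Optimality of $a$ then gives $\min_{i\in S_t}u_{ai}(t)\ge\min_{i\in S_t}u_{gi}(t)$, and taking $b=\arg\min_{i\in S_t}u_{gi}(t)$ produces a $b\in S_t$ with $u_{ab}(t)\ge u_{gb}(t)$; the helper claim then delivers the desired $g$-unsaturated pair.

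I expect the main obstacle to be the $b\in Z(g)$ branch of the helper claim, where $D^g_{ab}$ is controlled by $\min((p_{ga}-\tfrac12)^2,(p_{gb}-\tfrac12)^2)$ rather than by a clean pairwise gap: turning $g$-saturation into a contradiction there requires the auxiliary inequality $u_{ba}(t)>\tfrac12$, which I would justify by showing that the minimizer $b$ entered $S_t$ as a Condorcet-type winner of a pool still containing $a$, analogously to the use of $i\in\cC_t$ in Lemma~\ref{lem:n_ij} and of $g\in\cC_t^h$ in Lemma~\ref{lem:n_1ik}. Securing this cleanly for every $b\in S_t$ — including items inserted by earlier max-min steps rather than confirmed into $\cB_t$ — is the delicate point; if it proves awkward I would instead argue by contradiction over all $b\in S_t$ at once, applying the clean $W(g)$ estimate to the minimizer of $u_{g\cdot}$ and showing that an entirely $g$-saturated $S_t$ would force $\min_i u_{ai}(t)<\min_i u_{gi}(t)$, contradicting the choice of $a$.
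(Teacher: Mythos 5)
Your proposal is correct in substance and follows essentially the same route as the paper's proof: the paper likewise extracts an item $b \in S_t$ with $u_{ab}(t) > u_{gb}(t)$ from the max-min optimality of $a$ (using that $g$ stays feasible in $I \setminus S_t$), likewise invokes the set-building observation $u_{ba}(t) > \frac{1}{2}$ for $b \in S_t$, $a \notin S_t$, and derives the contradiction from $g$-saturation exactly as in your two branches (your $Z(g)$ branch is the paper's Case 1, your $W(g)$ branch its Case 2); your reading of the conclusion as \emph{$g$-unsaturation} is indeed what the paper's proof establishes. The one concrete omission is the paper's Case 3, namely $\theta_b = \theta_g$ with $b \neq g$: your split into $b \in W(g)$ and $b \in Z(g)$ does not cover it, and since the model permits ties among the top-$k$ parameters, the minimizer $b = \arg\min_{i \in S_t} u_{gi}(t)$ may well be such an item. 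The patch is a rerun of your $W(g)$ computation: here $D^g_{ab} = D^g_{ga} = 4\alpha/(p_{ga}-\frac{1}{2})^2$, so $g$-saturation gives $2c_{ab}(t) < p_{ga} - \frac{1}{2}$, and since $p_{ab} = p_{ag} = 1 - p_{ga}$ one gets $u_{ab}(t) \le p_{ab} + 2c_{ab}(t) < \frac{1}{2} = p_{gb} \le u_{gb}(t)$, contradicting $u_{ab}(t) \ge u_{gb}(t)$. Finally, the point you flag as delicate --- establishing $u_{ba}(t) > \frac{1}{2}$ for members of $S_t$ inserted by earlier max-min steps rather than as Condorcet-type winners of the pool --- is not actually resolved in the paper either: it is asserted there, without proof, as an ``important observation'' about the set-building rule, so carrying out your proposed justification (or your fallback contradiction over all of $S_t$ at once) would make the argument more rigorous than the published one, not less.
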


\begin{proof}
Firstly the important observation to make is at any round $t$, and in any of its sub-phase $h \in [k]$, our set building rule ensures that $u_{ji} > \frac{1}{2}$ for $j \in S_t$ and $i \notin S_t$. 

Moreover since $a = \underset{c \in I \sm S_t}{\arg \max}\Big[ \min_{i \in S_t}u_{ci}(t) \Big]$ and $a \notin S_t$, there must exist an item $b$ in $S$ such that $u_{ab}(t) > u_{gb}(t)$ as otherwise $g$ would have been picked instead of $a$. But following the argument above we also know that $u_{ba}(t) > \frac{1}{2}$. Now $b$ can fall into the following three categories:

\textbf{Case-1} $\big[b \in Z(g)\big]$: We first note that:
\[
u_{ba}(t) - l_{ba}(t) = u_{ba}(t) + u_{ab}(t) - 1 > \frac{1}{2} + u_{gb}(t) - 1 > \frac{1}{2} + p_{gb}(t) - 1 = p_{gb} - \frac{1}{2},
\]
but on the other hand if the pair $(a,b)$ is indeed $g$-saturated at $t$, i.e. $n_{ba}(t) > D_{ba}^g\ln t$. Then we have that

\begin{align*}
u_{ba}(t) - l_{ba}(t) = 2c_{ba}(t)  
\le \sqrt{\min\Bigg(\Big(p_{gb} - \frac{1}{2}\Big)^2,\Big(p_{ga} - \frac{1}{2}\Big)^2\Bigg)}
\le \Big(p_{gb} - \frac{1}{2}\Big).
\end{align*}

\textbf{Case-2} $\big[b \in W(g)\big]$: 

In this case suppose if the pair $(a,b)$ is indeed $g$-saturated at $t$, i.e. $n_{ba}(t) > D_{ba}^g\ln t$ we have

\begin{align*}
 2c_{ab}(t)  
\le \sqrt{\Big( p_{gb} - p_{ab} \Big)^2}
\le \Big(p_{gb} - p_{ab}\Big).
\end{align*}

It is important to note that the right hand side of the above inequality is positive since for this case $\theta_b > \theta_g > \theta_a$. But this implies 
\[
u_{ab}(t) \le  p_{ab}(t) + 2c_{ab}(t) = p_{gb} + 2c_{ab}(t) - (p_{gb} - p_{ab}) \le p_{gb} < u_{gb}(t)
\]
which leads to a contradiction again.

\textbf{Case-3} $\big[b: \theta_b = \theta_g\big]$: The analysis in this case  goes similar to \textbf{Case-2} above which finally leads to the contradiction that $u_{ab} < p_{gb} = \frac{1}{2} < u_{gb}(t)$.

Hence combining the above three cases, it follows that the unless the pair $(b,a)$ is $g$-unsaturated at round $t$, $a \in Z(g)$ can not show up prior to $g$.
\end{proof}

% -----------------------------------------------------

\noindent \textbf{Assumption:} Recall we assumed $\theta_1 \ge \theta_2 \ge \ldots \ge \theta_k$. For ease of explanation (without loss of generality by relabelling the items) we also assume that at any time $t$, for any pair of items $(i,j)$ such that $i,j \in [k]$, $\theta_i = \theta_j$, $i < j$, and if it happens to be the case that both $i,j \in \cB_t$, with $\cB_t(x) = i$, $\cB_t(y) = j$ then $x < y$.

%\iffalse %%%%%%%%%%%%%%%%%%%%%%%%%%%%%%%%%
\begin{lem}
\label{lem:t_hat_deltak}
Assume $\forall t > f(\delta)$, and $\forall i,j \in [n] ~~p_{ij} \in [l_{ij}(t), u_{ij}(t)]$, for some $\delta \in (0,1)$. Then if we define $T_0(\delta)$ such that:
$%\begin{align*}
T_0(\delta) = \min \{ t > f(\delta) \mid \cB_t = [k] \},
$ %\end{align*}
it can be upper bounded as $T_0(\delta) \le 2f(\delta) + 2\bar D^{(k)}\ln \big( 2\bar D^{(k)} \big)$, where $\bar D^{(k)} := \sum_{r = 1}^{k}D^{(r)}$, and $D^{(r)} := \sum_{i\in [n]}\sum_{j \in W(r)}D_{ij}^r$ (recall the rest of the notations as defined above).
\end{lem}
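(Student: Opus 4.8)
The plan is to follow the template of \Lem{lem:t_hat_delta} verbatim at the top level, reducing the bound on the length of the \prog\ phase to a counting argument over pairwise updates, but now carried out \emph{recursively} over the $k$ slots of $\cB_t$ rather than for a single best item. Throughout I would condition on the event of \Lem{lem:cdelta} (which holds with probability $\ge 1-\delta$), namely $p_{ij} \in [l_{ij}(t),u_{ij}(t)]$ for all $t>f(\delta)$ and all $i,j$. The three ingredients I would establish are: (i) every round $t\in(f(\delta),T_0(\delta)]$ of the \prog\ phase forces at least one $g$-unsaturated pair (\Def{def:gunsat_pair}) to be updated for the currently `active' item $g$; (ii) the total number of $g$-unsaturated-pair updates that can ever be consumed, summed over the whole recursion, is at most $\bar D^{(k)}\ln t$; and (iii) the resulting fixed-point inequality is solved by $t=2f(\delta)+2\bar D^{(k)}\ln(2\bar D^{(k)})$.

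First I would formalize the recursion using the quantities $T_0^i(\delta)$ from Observation 3 and the stability guarantee of Observation 2: once the items of $W(i)$ are locked into their slots and $i$ itself is captured, those placements never change, so the stages are nested and $\max_{i\in[k]}T_0^i(\delta)=T_0(\delta)$. For the active stage, the best-ranked item $g\in[k]$ not yet secured plays the role that item $1$ played in \Lem{lem:t_hat_delta}. Since the played set has size exactly $k$ and the feedback $\bsigma_t\in\bSigma_{S_t}^k$ is a full ranking, \rb\ updates \emph{every} pair inside $S_t$. I would then invoke \Lem{lem:n_1ik} and \Lem{lem:n_ijk} in the two cases $g\in S_t$ and $g\notin S_t$: whenever some worse item $a\in Z(g)$ is still competing with $g$ (either by sitting in $\cC_t^h$ with $g$, or by being selected ahead of $g$ by the max-min rule \algbld), there is a pair $(a,b)$ that is $g$-unsaturated and gets updated at round $t$. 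Hence each \prog-round strictly decreases the remaining budget of $g$-unsaturated pairs for the active $g$.

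Next I would count the budget. For a fixed active item $g$, a pair $(i,j)$ can be $g$-unsaturated for at most $D^g_{ij}\ln t$ updates, so securing $g$ into its slot costs at most $\sum_{i\in[n]}\sum_{j\in W(g)}D^g_{ij}\ln t=D^{(g)}\ln t$ such updates; once all pairs relevant to $g$ are $g$-saturated, \Lem{lem:n_1ik}–\ref{lem:n_ijk} force every worse item out of contention so slot $g$ is captured and the stage advances. Summing the per-stage budgets over $g=1,\dots,k$ gives a total of $\bar D^{(k)}\ln t=\sum_{r=1}^kD^{(r)}\ln t$ updates across the entire \prog\ phase. Because each of the $T_0(\delta)-f(\delta)$ rounds consumes at least one such update, $T_0(\delta)$ is the first $t>f(\delta)$ with $t>f(\delta)+\bar D^{(k)}\ln t$. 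I would then verify, exactly as in \Lem{lem:t_hat_delta} with $D$ replaced by $\bar D^{(k)}$, that $t=2f(\delta)+2\bar D^{(k)}\ln(2\bar D^{(k)})$ satisfies this inequality:
\begin{align*}
f(\delta)+\bar D^{(k)}\ln t
&\le f(\delta)+\bar D^{(k)}\ln\big(2\bar D^{(k)}\ln 2\bar D^{(k)}\big)+\bar D^{(k)}\frac{2f(\delta)}{2\bar D^{(k)}\ln 2\bar D^{(k)}}\\
&\le f(\delta)+\bar D^{(k)}\ln\big(2\bar D^{(k)}\big)^2+f(\delta)=2f(\delta)+2\bar D^{(k)}\ln\big(2\bar D^{(k)}\big)=t,
\end{align*}
where I used $\ln 2\bar D^{(k)}>1$; minimality of $T_0(\delta)$ then yields the claimed bound.

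The hard part will be step (ii): unlike the single-item case where one simply sums $D_{ij}$ over all pairs once, here I must justify that the stage-by-stage budgets are additive and non-overlapping, i.e.\ that an update charged to securing item $g$ is never recharged at a later stage. This is where Observation 2 (slots, once correctly filled, are frozen) and the recursive definition of $T_0^i(\delta)$ do the real work, ensuring the recursion proceeds monotonically and that the $g$-unsaturated bookkeeping for distinct active items can be accounted for by the disjoint increments in $D^{(1)},\dots,D^{(k)}$ that make up $\bar D^{(k)}$. Handling ties (items with equal $\theta$, governed by the relabelling assumption preceding the lemma) cleanly within this charging scheme is the only delicate point; once that is pinned down, the remainder is a mechanical repetition of the \objbest\ analysis.
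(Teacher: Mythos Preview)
Your proposal is correct and follows essentially the same route as the paper's own proof: a stage-by-stage recursion over $g=1,\dots,k$ using \Lem{lem:n_1ik} and \Lem{lem:n_ijk} to argue that each \prog\ round consumes a $g$-unsaturated pair update, yielding the nested bounds $T_0^r(\delta)=\min\{t\mid t>T_0^{r-1}(\delta)+D^{(r)}\ln t\}$, which combine to $T_0(\delta)=\min\{t\mid t>f(\delta)+\bar D^{(k)}\ln t\}$ and are solved by the same fixed-point verification you wrote out. Your explicit discussion of why the per-stage budgets $D^{(g)}\ln t$ can be summed without double counting (via Observation~2 freezing earlier slots) is in fact more careful than the paper, which simply asserts the combined inequality; the handling of ties via the relabelling assumption is likewise treated identically.
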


\begin{proof}
Combining Lem. \ref{lem:n_1ik} and \ref{lem:n_ijk} we first aim to bound the term $T_{0}^1(\delta)$ such that the first time after $T_0(\delta)$, when $\cB_t(1) = 1$ and post which it follows that $\cB_{t} = \{1\}$ for all $t > T_0(\delta)$.

\noindent \textbf{Bounding} $T_{0}^1(\delta):$ Note that for any $t > f(\delta)$, $Z(1) = n-1$ always. So the only way $1$  can miss the $\cB_t(1)$ slot is if $\exists i \neq 1$ and $\theta_1 > \theta_i$ (due to the relabelling \textbf{Assumption} above) which occupies $\cB_t(1)$. All we need to figure out is the worst possible number of rounds \algkmm\, would take to $1$-saturate all the pairs, precisely $\sum_{i < j}D_{ij}^1\ln t$ many pairwise updates should be done within $t$ rounds. Now using a similar chain of argument given in Lem. \ref{lem:t_hat_delta} (along with Lem. \ref{lem:n_1ik} and \ref{lem:n_ijk}), since any round $t > f(\delta)$ updates atleast one $1$-unsaturated pair, we find that
\[
T_0^1(\delta) = \min\{ t > f(\delta) \mid t > f(\delta) + D^{(1)}\ln t\},
\]
where $D^{(1)} := \sum_{i\in [n]}\sum_{j \in W(1)}D_{ij}^1$
%Now following the analysis of Lem. \ref{lem:t_hat_delta}, it is easy to see that the above inequality satisfies for $t = 2f(\delta) + 2D^{(1)}\ln 2D^{(1)}$, where $D^{(1)} := \sum_{i\in [n]}\sum_{j \in W(1)}D_{ij}^1$.

%So that bounds $T_0^1(\delta) \le 2f(\delta) + 2D^{(1)}\ln 2D^{(1)}$. Let try bounding $T_0^2(\delta)$ next. 

\noindent \textbf{Bounding} $T_{0}^2(\delta)$: Note that once $1 \in \cB_t(1)$, for any $t > T_0^1(\delta)$, $1 \in S_t$. And also either $\theta_2 = \theta_1$ or $\theta_2 < \theta_1$. But in either case $2 \in \cC_t^2$, as $u_{2j} > \frac{1}{2}$ for all $j \in [n]\sm\{1\}$. Then the only way the one can stop $2$ occupying the slot $\cB_t(2)$ is if there exists some other item $i \neq 2$ and $\theta_2 > \theta_i$ (due to the relabelling \textbf{Assumption} above) which occupies $\cB_t(2)$. But then the algorithm picks $\cB_t(2)$ in $S_t$, i.e. $i \in S_t$ alongside $1$ and it get compared with $1$ at each round it is picked. Moreover the last element of $S_t$ is always picked by the $\algbld$ subroutine, so following the three case analyses of Lem. \ref{lem:n_ijk}, the maximum number of rounds till which $2$ can miss the slot $\cB_t(2)$ is 
\[
T_0^2(\delta) = \min\{ t \mid t > T_0^1(\delta) + D^{(2)}\ln t\},
\]
where $D^{(2)} := \sum_{i\in [n]}\sum_{j \in W(2)}D_{ij}^2$.
%Again following the analysis of Lem. \ref{lem:t_hat_delta}, it is easy to see that the above inequality satisfies for $t = 2T_0^1(\delta) + 2D^{(2)}\ln 2D^{(2)}$, where $D^{(2)} := \sum_{i\in [n]}\sum_{j \in W(2)}D_{ij}^2$, and that bounds $T_0^2(\delta) \le 2T_0^1(\delta) + 2D^{(2)}\ln 2D^{(2)}$. 

Following the same argument we can state a general result that for any $r \in [k]\sm\{1\}$:

\noindent \textbf{Bounding} $T_{0}^r(\delta)$ where $D^{(r)} := \sum_{i\in [n]}\sum_{j \in W(r)}D_{ij}^r$: %$T_0^r(\delta) \le 2T_0^{r-1}(\delta) + 2D^{(r)}\ln 2D^{(r)}$
\[
T_0^r(\delta) = \min\{ t \mid t > T_0^{r-1}(\delta) + D^{(r)}\ln t\}
\]

Then combining above for $r = 1,\ldots k$ we get $T_0(\delta) = T_{0}^k(\delta)$ should be such that

\[
T_0^k(\delta) = \min\{ t \mid t > f(\delta) + \sum_{r = 1}^{k}D^{(r)}\ln t\}
\]

And now following the exact same analysis of Lem. \ref{lem:t_hat_delta}, it is easy to see that the above inequality satisfies for $t = 2f(\delta) + 2\sum_{r = 1}^{k}D^{(r)}\ln \Big(\sum_{r = 1}^{k}D^{(r)}\Big)$. %where $D^{(1)} := \sum_{i\in [n]}\sum_{j \in W(1)}D_{ij}^1$.
So that bounds $T_0(\delta) \le 2f(\delta) + 2\sum_{r = 1}^{k}D^{(r)}\ln \Big(\sum_{r = 1}^{k}D^{(r)}\Big)$. 

\end{proof}
%\fi %%%%%%%%%%%%%%%%%%%%%%%%%%%%%%%%%%%%%%%%%%

\noindent \textit{Analysis for \sat\, phase ($t > T_0(\delta)$):}  

\begin{itemize}
\item \textbf{Observation $4$:} After $T_0(\delta)$, now  $\cB_t = \sS = [k]$, with $\cB_t(k) = \theta_k$, and thereafter i.e. $\cB_t = [k]$ for all $t > T_0(\delta)$. 
Note that in this phase the algorithm always plays either $S_t = [k]$ or it plays $S_t = [k-1] \cup \{b\},\, b \notin [k]$. So for any $t > T_0(\delta)$, $S_t \cap \cB_t = [k-1]$.
Then at any time $t$ is a suboptimal item $b \in [n]\sm[k]$ comes in $S_t$ then it gets compared to all items in $[k-1]$ (owing to \rb). But can not happen for too long and after a time $\cC_t\cap\{b\} = \emptyset$, when the algorithm \algkmm\, will not play $b$ any more. This holds true for any $b \in [n]\sm[k]$, for which the algorithm will left with no other choice for $S_t$ other than $S_t = [k]$ when it incurs no regret. See Lem. \ref{lem:kmm_reg_postT0} and \ref{lem:kmm_reg_full} for the formal claims.

\end{itemize}

%Then we first prove a similar claim like Lem. \ref{lem:topm_comp}:

\iffalse%%%%%%%%%%%%%%%%%%%%%%%%%%%%%%
\begin{lem}
\label{lem:topk_comp}
For any fixed set $S \subseteq [n]$ such that $|S| = k > (m+1)$ $(m \in [k-1])$, if queried for $\ceil{\frac{k-1}{m}x}$ times for some $x \in \N$, then for any item $i \in S$ there exists at least another item $j \in S$ such that the pair $(i,j)$ gets updated for atleast $x$ times after applying \rb\, to the \tf\, of $\ceil{\frac{k-1}{m}x}$ rounds.
\end{lem}

\begin{proof}
Note that from Lem. \ref{lem:topm_comp}, we know that at the end of any round, each item $i \in S$ gets compared for at least $m$ other items in $S$. So for any item $i$, it must get updated for at least $m\ceil{\frac{k-1}{m}x} = \lceil (k-1)x \rceil$ pairs after $\ceil{\frac{k-1}{m}x}$ rounds. Now since there are only $(k-1)$ distinct elements in $S$, apart from $i \in S$, and \rb\ the \tf\ $\bsigma \in \Sigma_{S}^{m}$ at any round generates pairwise update of $i$ with $m$ other distinct items of $S\sm\{i\}$, there must exist at least one $j \in S\sm\{i\}$ such that the pair $(i,j)$ gets updated for $x$ times, as otherwise the total number of pairwise comparisons for $i$ becomes less that $(k-1)x \le \lceil (k-1)x \rceil$ which is a contradiction.
\end{proof}
\fi%%%%%%%%%%%%%%%%%%%%%%%%%%%%%%

\begin{lem}
\label{lem:kmm_reg_postT0}
Assume $\cB_t = \sS = [k]$ for all $t > T_0(\delta)$. Then the total cumulative regret of \algkmm\, post $T_0(\delta)$ is upper bounded by:
\[
R_T^{k}(T_0(\delta):T) := \sum_{t = T_0(\delta)}^{T}r_t^k  \le \frac{4\alpha\ln T}{k}\bigg(\sum_{b = k+1}^{n}\frac{(\theta_k-\theta_b)}{{\hat D}^2} \bigg), 
\]
where ${\hat D} = \min_{g \in [k-1]}(p_{kg}-p_{bg})$.
\end{lem}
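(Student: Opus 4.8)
The plan is to condition throughout on the good confidence event of Lem.~\ref{lem:cdelta}, so that $p_{ij}\in[l_{ij}(t),u_{ij}(t)]$ for every pair and every $t>f(\delta)$ (giving in particular $u_{ij}(t)\ge p_{ij}$ and $u_{ij}(t)\le p_{ij}+2c_{ij}(t)$), and to use the hypothesis $\cB_t=[k]$ for $t>T_0(\delta)$. I would first pin down the per-round regret. By Observation~$4$, for every $t>T_0(\delta)$ we have $[k-1]\subseteq S_t$ and $S_t\cap\cB_t=[k-1]$, so $S_t$ is either $[k]$, contributing $r_t^k=0$, or $S_t=[k-1]\cup\{b\}$ for a unique suboptimal item $b\in[n]\sm[k]$, in which case
\[
r_t^k=\frac{\sum_{i=1}^k\theta_i-\big(\sum_{i=1}^{k-1}\theta_i+\theta_b\big)}{k}=\frac{\theta_k-\theta_b}{k}.
\]
Hence $\sum_{t=T_0(\delta)}^T r_t^k=\tfrac1k\sum_{b=k+1}^n N_b(\theta_k-\theta_b)$, where $N_b$ is the number of rounds in the \sat\ phase in which $b$ is the extra item, and it remains to prove $N_b\le \tfrac{4\alpha\ln T}{\hat D^2}$.

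The main obstacle is that $b$ and $k$ are never placed together in a played set, hence are never compared directly; the algorithm must discriminate $b$ from $k$ only through their common opponents $g\in[k-1]=W(k)$, which is exactly what $\hat D=\min_{g\in[k-1]}(p_{kg}-p_{bg})$ measures. I would argue that each round in which $b$ is the extra item yields a full ranking of $S_t=[k-1]\cup\{b\}$, so \rb\ increments every pair $(b,g)$, $g\in[k-1]$, by one. Therefore, after $b$ has been the extra item $\tfrac{4\alpha\ln T}{\hat D^2}$ times, each such pair is $k$-saturated (Defn.~\ref{def:gunsat_pair}): $n_{bg}(t)>\tfrac{4\alpha\ln t}{\hat D^2}$ for all $t\le T$, whence $c_{bg}(t)<\hat D/2$ and, on the good event,
\[
u_{bg}(t)\le p_{bg}+2c_{bg}(t)<p_{bg}+\hat D\le p_{bg}+(p_{kg}-p_{bg})=p_{kg}\le u_{kg}(t)\qquad\forall g\in[k-1].
\]

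To convert these per-opponent inequalities into the max--min comparison used by \algbld, I would take $\tilde g:=\arg\min_{g\in[k-1]}u_{kg}(t)$ and write
\[
\min_{g\in[k-1]} u_{bg}(t)\le u_{b\tilde g}(t)<u_{k\tilde g}(t)=\min_{g\in[k-1]} u_{kg}(t),
\]
so $k$ (which lies in the residual pool $I$) strictly beats $b$ under the $\arg\max$--$\min$ selection rule; consequently $b$ is no longer played once it has been saturated, so it can be the extra item at most $\tfrac{4\alpha\ln T}{\hat D^2}$ times, i.e. $N_b\le \tfrac{4\alpha\ln T}{\hat D^2}$.

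Finally I would assemble the pieces: substituting the bound on $N_b$ into the regret decomposition gives
\[
R_T^{k}(T_0(\delta):T)=\frac1k\sum_{b=k+1}^n N_b(\theta_k-\theta_b)\le \frac{4\alpha\ln T}{k}\sum_{b=k+1}^n\frac{\theta_k-\theta_b}{\hat D^2},
\]
as claimed. The delicate point throughout is the indirectness of the discrimination between $b$ and $k$, which forces reliance on the common-opponent gap $\hat D$ and on the $\arg\min$ argument to lift the per-opponent confidence separations to the max--min rule, rather than on a direct $(b,k)$ confidence bound.
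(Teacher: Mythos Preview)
Your proposal is correct and follows essentially the same route as the paper: decompose the post-$T_0(\delta)$ regret as $\tfrac1k\sum_{b>k}N_b(\theta_k-\theta_b)$, observe that each round $b$ is played yields a full-ranking update of every pair $(b,g)$ with $g\in[k-1]$, and then show that after $\tfrac{4\alpha\ln T}{\hat D^2}$ such updates one has $u_{bg}(t)\le p_{bg}+2c_{bg}(t)\le p_{kg}\le u_{kg}(t)$ for all $g$, so $b$ can no longer displace $k$. Your explicit $\arg\min$ step lifting the per-opponent inequalities to the max--min comparison is a clean clarification of what the paper leaves implicit when it asserts ``$u_{bg}(\tau)<u_{kg}(\tau)\Rightarrow b$ cannot replace $k$''.
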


\begin{proof}
Note that when $\cB_t = \sS = [k]$, at any such round \algkmm\ plays the set $S_t$ such that the first $(k-1)$ items of $\cB_t$ are always included in $S_t$, i.e. $|S_t \cap \cB_t(1:k-1)| = k-1$. The $k^{th}$ element of $\cB_t$ only gets replaced by a suboptimal element  $b \in [n]\sm[k]$ only if $\exists g \in \cB_t(1:k-1)$ such that the pair $(g,b)$ is \emph{unsaturated}, in a sense that $u_{bg} > u_{kg}$, and hence $b$ got picked by the algorithm instead of $k$ (in Line $14$).

But is that possible for long? Precisely, we now show that any such suboptimal item $b \in [n]\sm[k]$ can not get selected by the algorithm for more than $\frac{4\alpha\ln T}{{\hat D}^2}$ times. 
This is since for any $S_t = [k-1] \cup \{b\}$ (recall that $\cB_{t}(1:k-1) = [k-1]$), once played for $\frac{4\alpha\ln T}{{\hat D}^2}$ times (say this happens at time $t = \tau$), we know that $\forall g \in [k-1]$ such that number of times the pair $(g,b)$ gets updated is exactly $\frac{4\alpha\ln T}{{\hat D}^2}$ too due to \rb \ on \tk. But this implies for any $g \in [k-1]$,

\begin{align}
\label{eq:bg}
u_{bg}(\tau) \le p_{bg} + 2c_{bg}(\tau) = p_{kg} + 2c_{bg}(\tau) - (p_{kg} - p_{bg}) \le p_{kg} \le u_{kg}(\tau)
\end{align}
where the first and last inequality follows by definition of $u_{bg}$ and Lem. \ref{lem:cdelta}, the second last inequality follows due to the fact that since  $n_{gb}(\tau) \ge \frac{4\alpha\ln T}{{\hat D}^2}$
\[
2c_{gb}(\tau) \le 2\sqrt{\frac{\alpha\ln T}{\frac{4\alpha\ln T}{{\hat D}^2}}} = {\hat D} \le  (p_{kg} - p_{bg}),
\]
since by definition ${\hat D} = \min_{g \in [k-1]}(p_{kg}-p_{bg})$.
Then Eqn. \ref{eq:bg} leads to a contradiction showing $u_{bg}(\tau) < u_{kg}(\tau) \implies$ $b$ can not replace $k$ at any round $t > \tau$.

The rest of the analysis simply follows from the fact that since any $b \in [n]\sm[k]$  can appear for only $\bigg(\frac{4\alpha\ln T}{{\hat D}^2} \bigg)$ times and it replaces the item $\cB_t(k) = k$, hence the cost incurred for $b$ is $\frac{(\theta_k-\theta_b)}{k}$ (by Eqn. \ref{eq:reg_k}). Thus the total regret incurred in saturation phase is $\frac{4\alpha\ln T}{k}\bigg(\sum_{b = k+1}^{n}\frac{(\theta_k-\theta_b)}{{\hat D}^2} \bigg)$.
\end{proof}

\begin{lem}
\label{lem:kmm_reg_full}
For any $\delta \in (0,1)$, with probability at least $(1-\delta)$, the total cumulative regret of \algkmm\, can be upper bounded by:
\[
R_T^{k} \le \big( 2f(\delta) + 2\bar D^{(k)}\ln \big( 2\bar D^{(k)} \big) \Delta'_{\max} + \frac{4\alpha\ln T}{k}\bigg(\sum_{b = k+1}^{n}\frac{(\theta_k-\theta_b)}{{\hat D}^2} \bigg)
%(2f(\delta) + 2D\ln 2D)\Lambda + \frac{\Lambda}{m+1}\sum_{i = 2}^{n}D_{1i} \ln T, 
\]
where $\Delta'_{\max} = \frac{(\sum_{i =1}^{k}\theta_i- \sum_{i=n-k+1}^{n}\theta_i)}{k}$, and $f(\delta)$, $\bar D^{(k)}$ and $\hat D$ is as defined in Lem. \ref{lem:cdelta}, \ref{lem:t_hat_deltak}, \ref{lem:kmm_reg_postT0}.
\end{lem}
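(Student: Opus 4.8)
The plan is to assemble the stated bound by simply summing the regret incurred in the three phases of \algkmm\ (\rnd, \prog, \sat) already analysed in the proof of Thm.~\ref{thm:whp_reg_kmm}, while being careful that every ingredient is invoked on a single high-probability event so that the $(1-\delta)$ confidence is not degraded.

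First I would condition on the ``good'' event $\cE := \{\forall t > f(\delta),\ \forall i,j \in [n],\ p_{ij} \in [l_{ij}(t), u_{ij}(t)]\}$, which by Lem.~\ref{lem:cdelta} holds with probability at least $(1-\delta)$. The entire remaining argument is then deterministic given $\cE$, so the final inequality inherits exactly this confidence level with no extra failure probability to account for.

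Next I would split the horizon as $[1,T_0(\delta)] \cup [T_0(\delta)+1,T]$ and bound each part separately. For the initial segment (covering the \rnd\ and \prog\ phases), the key observation is that the per-round instantaneous regret $r_t^k = \frac{1}{k}\big(\theta_{\sS} - \sum_{i \in S_t}\theta_i\big)$ is maximised when $S_t$ consists of the $k$ \emph{worst} items; since $\theta_1 \ge \cdots \ge \theta_n$, this gives the uniform bound $r_t^k \le \Delta'_{\max} = \frac{1}{k}\big(\sum_{i=1}^{k}\theta_i - \sum_{i=n-k+1}^{n}\theta_i\big)$ for every $t$. Multiplying this worst-case per-round regret by the number of rounds $T_0(\delta)$ and substituting the bound $T_0(\delta) \le 2f(\delta) + 2\bar D^{(k)}\ln\big(2\bar D^{(k)}\big)$ from Lem.~\ref{lem:t_hat_deltak} produces the first term $\big(2f(\delta) + 2\bar D^{(k)}\ln(2\bar D^{(k)})\big)\Delta'_{\max}$. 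For the saturation segment, Lem.~\ref{lem:t_hat_deltak} guarantees on $\cE$ that $\cB_t = \sS = [k]$ for all $t > T_0(\delta)$, which is precisely the hypothesis of Lem.~\ref{lem:kmm_reg_postT0}; applying it verbatim bounds the tail regret by $\frac{4\alpha\ln T}{k}\big(\sum_{b=k+1}^{n}\frac{\theta_k-\theta_b}{\hat D^2}\big)$.

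Finally I would add the two contributions to obtain the claimed bound. The step requiring the most care — rather than any hard calculation — is the logical chaining of the events: Lem.~\ref{lem:kmm_reg_postT0} presupposes $\cB_t = \sS$ for $t > T_0(\delta)$, and both $T_0(\delta) \le 2f(\delta) + 2\bar D^{(k)}\ln(2\bar D^{(k)})$ and the saturation-phase capture property hold only on $\cE$. I would therefore emphasise that all three phase bounds are taken under the \emph{same} event $\cE$, so that their sum controls the regret over the entire horizon $[1,T]$ simultaneously with probability at least $(1-\delta)$, completing the proof of Thm.~\ref{thm:whp_reg_kmm}.
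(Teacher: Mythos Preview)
Your proposal is correct and follows essentially the same approach as the paper: split the horizon at $T_0(\delta)$, bound the pre-saturation rounds by the worst-case per-round loss $\Delta'_{\max}$ times the length bound from Lem.~\ref{lem:t_hat_deltak}, and apply Lem.~\ref{lem:kmm_reg_postT0} for the tail. Your write-up is in fact more careful than the paper's (which is a two-line ``combine the lemmas'' argument), particularly in explicitly justifying $r_t^k \le \Delta'_{\max}$ via the $k$ worst items and in stressing that all three phase bounds are taken on the single event $\cE$ from Lem.~\ref{lem:cdelta}.
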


\begin{proof}
This can be proved just by combining the claims of Lem. \ref{lem:t_hat_deltak} and \ref{lem:kmm_reg_postT0}. Note from Lem. \ref{lem:t_hat_deltak} that till the \prog\ phase $T_0(\delta)$, the algorithm can play any arbitrary sets $S_t$ for which the maximum regret incurred can be $\Delta'_{\max} = \frac{(\sum_{i =1}^{k}\theta_i- \sum_{i=n-k+1}^{n}\theta_i)}{k}$. Thereafter the algorithm enters into \sat\ phase at which the maximum regret in can incur is $\frac{4\alpha\ln T}{k}\bigg(\sum_{b = k+1}^{n}\frac{(\theta_k-\theta_b)}{{\hat D}^2} \bigg)$ as follows from Lem. \ref{lem:kmm_reg_full}, which concludes the proof.
\end{proof}

The entire analysis above thus concludes the proof of Thm. \ref{thm:whp_reg_kmm}.
\end{proof}

\section*{Proof of Thm. \ref{thm:exp_reg_kmm}}

\expkmm*

\begin{proof}
The proof essentially follows same as the proof of Thm. \ref{thm:exp_reg_mm} by integrating the $\delta$ dependent term $\Big[ \frac{2\alpha n^2}{(2\alpha-1)\delta} \Big]^{\frac{1}{2\alpha-1}}$ in $R_T^k$ (see Thm. \ref{thm:whp_reg_kmm}) from $\delta = 0$ to $\infty$. %\red{Explain more???}
\end{proof}

%%%%%%%%%%%%%%%%%%%%%%%%%%%%%%%%%%%%%%%%%%%%%%%%%%%%%%%%

\section{Experiment Details}
\label{app:expts}

\noindent
We report numerical results of the proposed algorithms run on the following \mnl\, models:

\textbf{\mnl\, Environments.}
1. {\it g1}, 2. {\it g4}, 3. {\it arith}, 4. {\it geo} all with $n = 16$ and two larger models 5. {\it arith-big}, and 6. {\it geo-big} each with $n=50$ items. %5. {\it g4-big}, 
Their individual score parameters are as follows: 
\textbf{1. g1:} $\theta_1 = 0.8$, $\theta_i = 0.2, \, \forall i \in [16]\sm\{1\}$
\textbf{2. g4:} $\theta_1 = 1$, $\theta_i = 0.7, \, \forall i \in \{2,\ldots 6\}$, $\theta_i = 0.5, \, \forall i \in \{7,\ldots 11\}$, and $\theta_i = 0.01$ otherwise.
\textbf{3. arith:} $\theta_1 = 1$ and $ \theta_{i} - \theta_{i+1} = 0.06, \, \forall i \in [15]$.
\textbf{4. geo:} $\theta_1=1$, and $\frac{\theta_{i+1}}{\theta_{i}} = 0.8, ~\forall i \in [15]$. 
\textbf{5. har:} $\theta_1 = 1$ and $\theta_i = 1 - \frac{1}{i}, ~\forall i \in \{2,3,\ldots,16\}$.
%\textbf{6. g4b:} $\theta_1 = 1$, $\theta_i = 0.7, \, \forall i \in \{2,\ldots 18\}$, $\theta_i = 0.5, \, \forall i \in \{19,\ldots 45\}$, and $\theta_i = 0.01$ otherwise.
\textbf{6. arithb:} $\theta_1 = 1$ and $ \theta_{i} - \theta_{i+1} = 0.02, \, \forall i \in [49]$.
\textbf{7. geob:} $\theta_1=1$, and $\frac{\theta_{i+1}}{\theta_{i}} = 0.9, ~\forall i \in [49]$.

\end{document}